\documentclass{elsart}
\usepackage{graphicx}
\usepackage{amssymb}
\usepackage{latexsym}
\usepackage{subfigure}

 \usepackage{epic}
 \usepackage{eepic}
 \usepackage{epsfig}
 \usepackage{url}
 \usepackage{xspace}
 \usepackage{latexsym}

\newif\ifreport

\newcommand{\nop}[1]{}




\newcommand{\NP}{\textrm{NP}\xspace}

\newcommand{\Or}{\ensuremath{\vee}\xspace}
\newcommand{\derives}{\ensuremath{\mathtt{\ :\!\!-}\ }}

\newcommand{\p}{\ensuremath{{\mathcal{P}}}}
\newcommand{\GP}{\ensuremath{Ground(\p)}}

\newcommand{\BP}{\ensuremath{B_{\p}}}
\newcommand{\UP}{\ensuremath{U_{\p}}}

\newcommand{\EDB}[1]{\ensuremath{EDB(#1)}}
\newcommand{\IDB}[1]{\ensuremath{IDB(#1)}}

\newcommand{\R}{\ensuremath{r}}

\newcommand{\HR}{\ensuremath{H(\R)}}
\newcommand{\BR}{\ensuremath{B(\R)}}
\newcommand{\BpR}{\ensuremath{B^+(\R)}}
\newcommand{\BnR}{\ensuremath{B^-(\R)}}

\newcommand{\naf}{\ensuremath{not}\xspace}

\newcommand{\dlv}{{\sc DLV}\xspace}

\newcommand{\tuple}[1]{\langle#1\rangle}



\newcommand{\ground}[1]{\ensuremath{Ground(#1)}}

\newcommand{\head}[1]{\ensuremath{H(#1)}}
\newcommand{\body}[1]{\ensuremath{B(#1)}}
\newcommand{\posbody}[1]{\ensuremath{B^+(#1)}}
\newcommand{\negbody}[1]{\ensuremath{B^-(#1)}}


\newenvironment{simpleprogram}[1][]
   {\vspace{-0.5ex}\begin{itemize}\item[]
      \tt
      \begin{tabbing}
      \code{#1}\ \= \kill
   }
   {\end{tabbing}\end{itemize}\vspace{-3ex}}


\newenvironment{simplealignedprogramstub}[1][]
   {\vspace{-0ex}
      \begin{tabbing}
      #1\kill
   }
   {\end{tabbing}
\vspace{-6ex}}


\newenvironment{sublabeledprogram}[1][]
   {\begin{array}{ll}\setlength{\arraycolsep}{0pt}}
   {\end{array}}


\newcommand{\code}[1]{\ensuremath{#1}}
\newenvironment{dlvcode}
  {\begin{displaymath}\begin{array}{l}}
  {\end{array}\end{displaymath}}



\newenvironment{proof}{\noindent \textbf{Proof.}}{\hfill $\Box$}
\newtheorem{theorem}{Theorem}[section]
\newtheorem{example}[theorem]{Example}
\newtheorem{definition}[theorem]{Definition}
\newtheorem{proposition}[theorem]{Proposition}

\newtheorem{lemma}[theorem]{Lemma}


\newcommand{\dat}{\mbox{Datalog}}
\newcommand{\datd}{\mbox{Datalog$^{\vee}$}\xspace}

\newcommand{\datdn}{\mbox{Datalog$^{\vee,\neg_s}$}}
\newcommand{\datdnr}{\mbox{Datalog$^{\vee,\neg}$}}
\newcommand{\datn}{\mbox{Datalog$^{\neg}$}\xspace}
\newcommand{\dats}{\mbox{Datalog$^{\neg_s}$}}
\newcommand{\datds}{\mbox{Datalog$^{\vee,\neg_s}$}}


\newcommand{\Ans}{{\it Ans}}
\newcommand{\F}{\mathcal{F}}
\renewcommand{\P}{\mathcal{P}}
\newcommand{\Q}{\mathcal{Q}}
\newcommand{\SM}{\mathcal{SM}}

\newcommand{\M}{\mathcal{M}}
\newcommand{\bcons}{\ensuremath{\models_b}}
\newcommand{\ccons}{\ensuremath{\models_c}}

\newcommand{\qrelation}[3]{\ensuremath{{#1}_{#2}^{#3}}}

\newcommand{\qequiv}[2]{\ensuremath{\qrelation{\equiv}{#1}{#2}}}
\newcommand{\bqequiv}[1]{\ensuremath{\qequiv{#1}{b}}}
\newcommand{\cqequiv}[1]{\ensuremath{\qequiv{#1}{c}}}


\newcommand{\magicRules}{\ensuremath{\mathit{magicRules}}}
\newcommand{\modifiedRules}{$\mathit{modifiedRules}$}


\newcommand{\magic}[1]{\ensuremath{magic(#1)}}
\newcommand{\dmsqp}{\ensuremath{\DMS(\Q,\p)}}

\newcommand{\killed}[4]{\ensuremath{killed^{#1}_{#3,#4}(#2)}}
\newcommand{\killedmpmp}{\ensuremath{\killed{M'}{M'}{\Q}{\p}}}
\newcommand{\killedmpnp}{\ensuremath{\killed{M'}{N'}{\Q}{\p}}}
\newcommand{\variant}[4]{\ensuremath{{variant}_{#1,#2}^{#3}(#4)}}
\newcommand{\variantqpm}[1]{\ensuremath{\variant{\Q}{\p}{#1}{M}}}
\newcommand{\variantqpi}[1]{\ensuremath{\variant{\Q}{\p}{#1}{I}}}
\newcommand{\Mpp}{\ensuremath{M'|_{\BP}}}
\newcommand{\Npp}{\ensuremath{N'|_{\BP}}}

\newcommand{\SIPSprec}[2]{\ensuremath{\prec_{#1}^{#2}}}
\newcommand{\SIPSprecR}[1]{\ensuremath{\SIPSprec{\R}{#1}}}
\newcommand{\magica}{*}

\newcommand{\DMS}{\ensuremath{\mathtt{DMS}}}

\newcommand{\SMS}{\ensuremath{\mathtt{SMS}}}


\newcommand{\G}{\mathcal{G}}
\newcommand{\I}{\mathcal{I}}
\renewcommand{\S}{\mathcal{S}}
\newcommand{\D}{\mathcal{D}}
\newcommand{\schemag}{\Psi}

\newcommand{\tup}[1]{\langle #1\rangle}

\newcommand{\Pim}{\Pi_{\M}}
\newcommand{\Ped}{\Pi_{ED}}
\newcommand{\Pkd}{\Pi_{KD}}

\newcommand{\la}{\derives}

\newcommand{\Ss}{{\mathcal{S}}}
\newcommand{\intsys}{\tup{\G,\Ss,\M}}

\renewcommand{\t}{\bar t}
\newcommand{\s}{\bar s}
\newcommand{\z}{\bar z}

\def\<{\mbox{$\langle$}}
\def\>{\mbox{$\rangle$}}

\newcounter{myenumctr}

\newcommand{\hs}{\hspace{3mm}}


\raggedbottom \sloppy

\journal{Artificial Intelligence}

\begin{document}

\begin{frontmatter}
\title{Magic Sets for Disjunctive $\dat$ Programs}

\thanks[titlethanks]{%
Preliminary portions of this paper appeared in the proceedings of the 20th International Conference on Logic
Programming (ICLP'04).}
\author{Mario Alviano}
\ead{alviano@mat.unical.it}
\author{Wolfgang Faber}
\ead{faber@mat.unical.it}
\author{Gianluigi Greco}
\ead{ggreco@mat.unical.it}
\author{Nicola Leone}
\ead{leone@mat.unical.it}
\address{Department of Mathematics, University of Calabria, 87036 Rende, Italy}

\begin{abstract}

In this paper, a new technique for the optimization of (partially) bound queries over disjunctive $\dat$ programs
with stratified negation is presented.
The technique exploits the propagation of query bindings and extends the Magic Set optimization technique
(originally defined for non-disjunctive programs).

An important feature of disjunctive $\dat$ programs is
nonmonotonicity, which calls for nondeterministic implementations,
such as backtracking search. A distinguishing characteristic of the new method is
that the optimization can be exploited also during the nondeterministic
phase.  In particular, after some assumptions have been made during the computation,
parts of the program may become irrelevant to a query under these
assumptions. This allows for dynamic pruning of the search space. In
contrast, the effect of the previously defined Magic Set methods
for disjunctive $\dat$
is limited to the deterministic portion of the process.
In this way, the potential performance gain by using the proposed method can be exponential, as could be observed empirically.

The correctness of the method is established and proved in a formal way
thanks to a strong relationship between Magic Sets and unfounded
sets that has not been studied in the literature before. This
knowledge allows for extending the method and the correctness proof
also to programs with stratified negation in a natural way.

The proposed method has been implemented in the \dlv system and
various experiments on synthetic as well as on real-world
data have been conducted. The experimental results on synthetic data confirm the utility of
Magic Sets for disjunctive $\dat$, and they highlight the
computational gain that may be obtained by the new method with respect to
the previously proposed Magic Set method
for disjunctive $\dat$ programs.
Further experiments on data taken from a real-life application show the
benefits of the Magic Set method within an application scenario that
has received considerable attention in recent years, the problem of
answering user queries over possibly inconsistent databases
originating from integration of autonomous sources of information.
\end{abstract}

\begin{keyword}
Logic Programming \sep Stable Models \sep Magic Sets \sep Answer Set Programming \sep Data Integration
\end{keyword}
\end{frontmatter}

\section{Introduction}\label{sec:introduction}

Disjunctive $\dat$ is a language that has been proposed for modeling incomplete data \cite{lobo-etal-92}.
Together with a light version of negation, in this paper stratified negation, this language can in fact express
any query of the complexity class $\Sigma^P_2$ (i.e., $\NP^{\NP}$) \cite{eite-etal-97f},
under the stable model semantics.
It turns out that disjunctive $\dat$ with stratified negation is strictly more expressive (unless the polynomial hierarchy collapses to its first level) than normal logic
programming (i.e., non-disjunctive Datalog with unstratified negation), as the latter can express ``only'' queries
in $\NP$.
As shown in \cite{eite-etal-97f}, the high expressive power of disjunctive $\dat$ has also some positive practical
implications in terms of modelling knowledge, since many problems in NP can be represented more simply
and naturally in stratified disjunctive Datalog than in normal logic programming.
For this reason, it is not surprising that disjunctive $\dat$ has found several real-world applications
\cite{leon-etal-2005,manna-etal-2011-tldks,manna-etal-2011-jcss,ricca-etal-2010-IDUM,ricca-etal-2011-tplp},
also encouraged by the availability of some efficient inference engines, such as \dlv{}
\cite{leon-etal-2002-dlv}, GnT \cite{janh-etal-2005-tocl}, Cmodels \cite{lier-2005-lpnmr}, or ClaspD \cite{dres-etal-2008-KR}. As a matter of fact, these systems are
continuously enhanced to support novel optimization strategies, enabling them to be effective over increasingly
larger application domains. In this paper, we contribute to this
development by providing a novel optimization technique, inspired by
deductive database optimization techniques, in particular
the Magic Set method \cite{banc-etal-86,beer-rama-91,ullm-89}.

The goal of the original Magic Set method (defined for non-disjunctive
$\dat$ programs) is to exploit the presence of constants in a query
for restricting the possible search space by considering only a subset
of a hypothetical program instantiation that is sufficient to answer the
query in question. In order to do this, a top-down computation for
answering the query is simulated in an abstract way. This top-down
simulation is then encoded by means of rules, defining new Magic Set
predicates. The extensions of these predicates (sets of ground atoms)
will contain the tuples that are calculated during a top-down
computation. These predicates are inserted into the original program
rules and can then be used by bottom-up computations to narrow the
computation to what is needed for answering the query.

Extending these ideas to disjunctive $\dat$ faces a major challenge:
While non-disjunctive $\dat$ programs are deterministic, which in
terms of the stable model semantics means that any non-disjunctive
$\dat$ program has exactly one stable model, disjunctive $\dat$
programs are nondeterministic in the sense that they may have multiple
stable models. Of course, the main goal is still isolating a subset of
a hypothetical program instantiation, upon which the considered query
will be evaluated in an equivalent way. There are two basic
possibilities how this nondeterminism can be dealt with in the context
of Magic Sets: The first is to consider \emph{static} Magic Sets, in
the sense that the definition of the Magic Sets is still
deterministic, and therefore the extension of the Magic Set predicates
is equal in each stable model. This static behavior is automatic for
Magic Sets of non-disjunctive $\dat$ programs. The second possibility
is to allow \emph{dynamic} Magic Sets, which also introduce
non-deterministic definitions of Magic Sets. This means that the
extension of the Magic Set predicates may differ in various stable
models, and thus can be viewed as being specialized for each stable
model.

While the nature of dynamic Magic Sets intuitively seems to be more
fitting for disjunctive $\dat$ than static Magic Sets, considering the
architecture of modern reasoning systems for disjunctive $\dat$
substantiates this intuition: These systems work in two phases, which
may be considered as a deterministic (grounding) and a
non-deterministic (model search) part. The interface between these two
is by means of a ground program, which is produced by the
deterministic phase. Static Magic Sets will almost exclusively have an
impact on the grounding phase, while dynamic Magic Sets also have the
possibility to influence the model search phase. In particular, some
assumptions made during the model search may render parts of the
program irrelevant to the query, which may be captured by dynamic
Magic Sets, but not (or only under very specific circumstances) by
static Magic Sets.

In the literature, apart from our own work in
\cite{cumb-etal-2004-iclp}, there is only one previous attempt for
defining a Magic Set method for disjunctive $\dat$, reported in
\cite{grec-99,grec-2003}, which will be referred to as Static Magic Sets ($\SMS$) in this work. The basic idea of $\SMS$ is
that bindings need to be propagated not only from rule heads to rule
bodies (as in traditional Magic Sets), but also from one head
predicate to other head predicates. In addition to producing
definitions for the predicates defining Magic Sets, the method also
introduces additional auxiliary predicates called \emph{collecting}
predicates. These collecting predicates however have a peculiar
effect: Their use keeps the Magic Sets \emph{static}. Indeed, both
magic and collecting predicates are guaranteed to have deterministic
definitions, which implies that disjunctive $\dat$ systems can exploit
the Magic Sets only during the grounding phase. Most systems will
actually produce a ground program which does contain neither magic nor
collecting predicates.

In this article, we propose a \emph{dynamic} Magic Set method for
disjunctive $\dat$ with stratified negation under the stable model
semantics, provide an implementation of it in the system \dlv, and
report on an extensive experimental evaluation. In more detail, the
contributions are:

\vspace{-4mm}\begin{description}

\item[$\blacktriangleright$]

  We present a dynamic Magic Set method for disjunctive $\dat$
  programs with stratified negation, referred to as Dynamic Magic
  Sets ($\DMS$). Different from the previously proposed static method
  $\SMS$, existing systems can exploit the information provided by the
  Magic Sets also during their nondeterministic model search phase.
  This feature allows for potentially exponential performance gains
  with respect to the previously proposed static method.

 \item[$\blacktriangleright$]

  We formally establish the correctness of $\DMS$. In particular, we
  prove that the program obtained by the transformation $\DMS$ is
  query-equivalent to the original program. This result holds for both
  brave and cautious reasoning.

 \item[$\blacktriangleright$]

  We highlight a strong relationship between Magic Sets and unfounded
  sets, which characterize stable models. We can show that the atoms
  which are relevant for answering a query are either true or form an
  unfounded set, which eventually allows us to prove the
  query-equivalence results.

 \item[$\blacktriangleright$]

  Our results hold for a disjunctive $\dat$ language with stratified
  negation under the stable model semantics. In the literature,
  several works deal with non-disjunctive $\dat$ with stratified
  negation under the well-founded or the perfect model
  semantics, which are special cases of our language. For the static
  method $\SMS$, an extension to disjunctive $\dat$ with stratified
  negation has previously only been sketched in \cite{grec-2003}.

 \item[$\blacktriangleright$]

  We have implemented a $\DMS$ optimization module inside the \dlv
  system \cite{leon-etal-2002-dlv}. In this way, we could exploit the
  internal data-structures of the \dlv system and embed $\DMS$ in the
  core of \dlv. As a result, the technique is completely transparent
  to the end user. The system is available at \url{http://www.dlvsystem.com/magic/}.

 \item[$\blacktriangleright$]

  We have conducted extensive experiments on synthetic domains that
  highlight the potential of $\DMS$. We have compared the performance
  of the \dlv system without Magic Set optimization with $\SMS$ and
  with $\DMS$. The results show that in many cases the Magic Set
  methods yield a significant performance benefit. Moreover, we can
  show that the dynamic method $\DMS$ can yield drastically better
  performance than the static $\SMS$. Importantly, in cases in which
  $\DMS$ cannot be beneficial (if all or most of the instantiated
  program is relevant for answering a query), the overhead incurred is
  very light.

 \item[$\blacktriangleright$]

  We also report on experiments which evaluate the impact of $\DMS$
  on an industrial application scenario on real-world data. The
  application involves data integration and builds on several results
  in the literature (for example
  \cite{ArBC00,BaBe02,BrBe03,CaLR03b,chom-marc-2004b,GrGZ01}), which
  transform the problem of query answering over inconsistent databases
  (in this context stemming from integrating autonomous data sources)
  into query answering over disjunctive $\dat$ programs. By leveraging
  these results, $\DMS$ can be viewed as a query optimization method
  for inconsistent databases or for data integration systems. The
  results show that $\DMS$ can yield significant performance gains
  for queries of this application.

\end{description}

\paragraph*{Organization.}

The main body of this article is organized as follows. In
Section~\ref{sec:preliminaries}, preliminaries on disjunctive $\dat$
and on the Magic Set method for non-disjunctive $\dat$ queries are
introduced. Subsequently, in
Section~\ref{sec:magic_set-disjunctive_datalog} the extension $\DMS$
for the case of disjunctive $\dat$ programs is presented, and we show its
correctness.  In Section~\ref{sec:system} we discuss the
implementation and integration of the Magic Set method within the \dlv
system.  Experimental results on synthetic benchmarks are reported in
Section~\ref{sec:experiments}, while the application to data
integration and its experimental evaluation is discussed in
Section~\ref{sec:dataintegration}.  Finally, related work is
discussed in Section~\ref{sec:relatedwork}, and in
Section~\ref{sec:conclusion} we draw our conclusions.

\section{Preliminaries}\label{sec:preliminaries}

In this section, (disjunctive) $\dat$ programs with (stratified) negation are briefly described,
and the standard Magic Set method is
presented together with the notion of sideways information passing strategy (SIPS) for $\dat$ rules.

\subsection{Disjunctive $\dat$ Programs with Stratified Negation}

In this paper, we adopt the standard $\dat$ name convention: Alphanumeric strings starting with a lowercase
character are predicate or constant symbols, while alphanumeric strings starting with an uppercase character are
variable symbols; moreover, we allow the use of positive integer constant symbols. Each predicate symbol is associated
with a non-negative integer, referred to as its \emph{arity}.
An \emph{atom} $p(\t)$ is composed of a predicate symbol $p$ and a list $\t$= $t_1,\ldots,t_k
\ (k\geq 0) $
of terms, each of which is either a constant or a variable.
A \emph{literal} is an atom $p(\t)$ or a negated atom $\naf\ p(\t)$;
in the first case the literal is \emph{positive}, while in the second it is \emph{negative}.

A \emph{disjunctive $\dat$ rule with negation} (short: $\datdnr$ rule)
\R{} is of the form \vspace{-2mm}

\begin{dlvcode}
p_1(\t_1) \ \Or\ \cdots \ \Or\ p_n(\t_n) \derives
    q_1(\s_1),\ \ldots,\ q_j(\s_j), \\
    \quad\quad\quad\quad\quad\quad\quad\quad\quad\quad\quad\ \
    \naf~q_{j+1}(\s_{j+1}),\ \ldots,\ \naf~q_m(\s_m).
\end{dlvcode}

\vspace{-3mm}where $p_1(\t_1), \ldots,\ p_n(\t_n),\ q_1(\s_1), \ldots,\ q_m(\s_m)$
are atoms and $n\geq 1,$  $m\geq j\geq 0$.  The
disjunction $p_1(\t_1) \ \Or\ \cdots \ \Or\ p_n(\t_n)$ is the {\em head} of \R{},
while the conjunction $q_1(\s_1),\ \ldots,\ q_j(\s_j),\ \naf~q_{j+1}(\s_{j+1}),\ \ldots,\ \naf~q_m(\s_m)$ is the {\em body} of~\R{}.
Moreover, $\HR$ denotes the set of head atoms, while $\BR$ denotes the set of body literals.
We also use $\posbody{\R}$ and $\negbody{\R}$ for denoting
the sets of atoms appearing in positive and negative body literals, respectively.
If $\R$ is disjunction-free, that is $n = 1$, and negation-free,
that is $\negbody{\R}$ is empty, then we say that \R{} is a $\dat$ rule;
if $\posbody{\R}$ is empty in addition,
then we say that \R{} is a \emph{fact}. A {\em disjunctive $\dat$ program}
$\p$ is a finite set of rules; if all the rules in it are disjunction- and negation-free,
then $\P$ is a (standard) $\dat$ program.

Given a $\datdnr$ program $\P$, a predicate belongs to the {\em Intensional Database} (IDB)
if it is either in the head of a rule with non-empty body, or in the head of a
disjunctive rule; otherwise, it belongs to the
{\em Extensional Database} (EDB). The set of rules having IDB predicates in their heads
is denoted by $\IDB{\p}$, while $\EDB{\p}$ denotes the remaining rules, that is,
$\EDB{\p} = \P \setminus \IDB{\p}$. 
For simplicity, we assume that predicates will always be of the same type (EDB or IDB) in any program.

The set of all constants appearing in a program $\P$ is the \emph{universe} of $\P$ and is denoted by $\UP$,%
\footnote{
If $\p$ has no constants, an arbitrary constant is added to $\UP$.}
while the set of ground atoms constructable from predicates in $\P$ with constants in $\UP$ is the \emph{base}
of $\P$, denoted by $\BP$. We call an atom (rule, or program) \emph{ground} if it does not contain any
variables.
A substitution $\vartheta$ is a function from variables to elements of \UP.
For an expression $S$ (atom, literal, rule), by $S\vartheta$ we denote the expression obtained from $S$ by substituting all occurrences of each variable $X$ in $S$ with $\vartheta(X)$.
A ground atom $p(\t)$ (resp.\ ground rule $\R_g$) is
an instance of an atom $p(\t')$ (resp.\ rule $\R$) if there is a
substitution $\vartheta$ from the variables in $p(\t')$ (resp.\ in $\R$)
to $\UP$ such that ${p(\t)} = {p(\t')}\vartheta$
(resp.\ $\R_g = \R\vartheta$).
Given a program $\p$, $\ground{\p}$ denotes the set of all possible instances
of rules in $\p$.

Given an atom $p(\t)$ and a set of ground atoms $A$,
by $A|_{p(\t)}$ we denote the set of ground instances of $p(\t)$ belonging to $A$.
For example, $\BP|_{p(\t)}$ is the set of all ground atoms obtained by applying to
$p(\t)$ all the possible substitutions from the variables in $p(\t)$ to $\UP$,
that is, the set of all the instances of $p(\t)$.
Abusing notation, if $B$ is a set of atoms, by $A|_B$ we denote the union
of all $A|_{p(\t)}$, for each ${p(\t)} \in B$.

A desirable property of $\datdnr$ programs is {\em safety}. A $\datdnr$ rule $\R$ is safe
if each variable appearing in $\R$ appears in at least one atom of $\posbody{\R}$.
A $\datdnr$ program is safe if all its rules are safe.
Moreover, programs without recursion over negated literals constitute an interesting class of
$\datdnr$ programs. Without going into details, a predicate $p$ in the head of a rule $\R$
{\em depends} on all the predicates $q$ in the body of $\R$; 
$p$ depends on $q$ positively if $q$ appears in $\posbody{\R}$, and
$p$ depends on $q$ negatively if $q$ appears in $\negbody{\R}$. A program has
recursion over negation if a cycle of dependencies with at least one
negative dependency exists. If a program has no recursion over negation, then the
program is stratified (short: $\datdn$).
In this work only safe programs without recursion over negation are considered.

An \emph{interpretation} for a program $\P$ is a subset $I$ of $\BP$. A positive ground
literal $p(\t)$ is true with respect to an
interpretation $I$ if ${p(\t)}\in I$; otherwise, it is false.
A negative ground literal $\naf\ p(\t)$ is true with respect to $I$
if and only if $p(\t)$ is false with respect to $I$, that is, if and only if ${p(\t)} \not\in I$.
The body of a ground rule $\R$ is true with respect to $I$
if and only if all the body literals of $\R$ are true with respect to $I$, that is,
if and only if $\posbody{\R} \subseteq I$ and $\negbody{\R} \cap I = \emptyset$.
An interpretation $I$ {\em satisfies} a ground rule $\R\in \GP$ if at least one atom
in $\head{\R}$ is true with respect to $I$ whenever the body of $\R$ is true with respect to $I$. An interpretation $I$ is a
\emph{model} of a $\datdnr$ program $\P$ if $I$ satisfies all the rules in $\GP$.
Since an interpretation is a set of atoms, if $I$ is an interpretation
for a program $\p$, and $\p'$ is another program,
then by $I|_{B_{\p'}}$ we denote the restriction of $I$ to the
base of $\p'$.

Given an interpretation $I$ for a program $\P$, the reduct of $\P$ with respect to $I$,
denoted by $\ground{\p}^{I}$, is obtained by deleting from $\GP$ all
the rules $\R_g$ with $\negbody{\R_g} \cap I \neq \emptyset$,
and then by removing all the negative literals from the remaining rules.

The semantics of a $\datdnr$ program $\P$ is given by the set $\SM(\P)$ of stable models of
$\P$, where an interpretation $M$ is a stable model for $\P$ if and only if
$M$ is a subset-minimal model of $\ground{\P}^M$.
 It is well-known that there is exactly one stable model for any $\dat$ program,
also in presence of stratified negation. However, for a $\datdn$
program $\P$, $|\SM(\P)| \ge 1$ holds
($\datdnr$ programs, instead, can also have no stable model).

Given a ground atom ${p(\t)}$ and a $\datdnr$ program $\P$, $p(\t)$ is a \emph{cautious} (or \emph{certain})
consequence of $\P$, denoted by $\P \ccons p(\t)$, if ${p(\t)} \in M$ for each $M \in \SM(\P)$; $p(\t)$ is a
\emph{brave} (or \emph{possible}) consequence of $\P$, denoted by $\P \bcons p(\t)$, if ${p(\t)} \in M$
for some $M \in \SM(\P)$. Note that brave and cautious consequences coincide for
$\dat$ programs, as these programs have a unique stable model.
Moreover, cautious consequences of a $\datdn$ program $\P$ are also brave
consequences of $\P$ because $|\SM(\P)| \ge 1$ holds in this case.

Given a \emph{query} $\Q = {g(\t)?}$ (an atom),%
\footnote{
Note that more complex queries can still be expressed using appropriate rules.
We assume that each constant appearing in $\Q$ also appears in $\p$;
if this is not the case, then we can add to $\p$ a fact $p(\t)$
such that $p$ is a predicate not occurring in $\p$
and $\t$ are the arguments of $\Q$.
Question marks will be usually omitted when referring to queries in the text.}
$\Ans_c(\Q,\P)$ denotes the set of all substitutions $\vartheta$
for the variables of ${g(\t)}$
such that $\P \ccons {g(\t)}\vartheta$, while $\Ans_b(\Q,\P)$ denotes the set of substitutions $\vartheta$
for the variables of ${g(\t)}$ such that $\P \bcons {g(\t)}\vartheta$.

Let $\P$ and $\P'$ be two $\datdnr$ programs and $\Q$ a query. Then $\P$ and $\P'$ are \emph{brave-equivalent}
with respect to $\Q$, denoted by $\P\bqequiv{\Q} \P'$, if $\Ans_b(\Q,\P \cup \F) = \Ans_b(\Q,\P' \cup \F)$ is guaranteed for each
set of facts $\F$ defined over predicates which are EDB predicates of $\p$ or $\p'$;
similarly, $\P$ and $\P'$ are \emph{cautious-equivalent} with respect to $\Q$, denoted by $\P\cqequiv{\Q} \P'$, if
$\Ans_c(\Q,\P \cup \F) = \Ans_c(\Q,\P' \cup \F)$ is guaranteed for each set of facts $\F$
defined over predicates which are EDB predicates of $\p$ or $\p'$.

\subsection{Bottom-up Disjunctive Datalog Computation}

Many $\datdnr$ systems implement a two-phase computation.
The first phase, referred to as {\em program instantiation} or {\em grounding}, is bottom-up.
For an input program $\p$, it produces a ground program which is equivalent to
$\GP$, but significantly smaller.
Most of the techniques used in this phase stem from bottom-up
methods developed for classic and deductive databases;
see for example \cite{abit-etal-95} or
\cite{gebs-etal-2007-lpnmr,leon-etal-2002-dlv} for details.
Essentially, predicate instances which are known to be true or known to be false
are identified and this knowledge is used for deriving further
instances of this kind. Eventually, the truth values obtained in this
way are used to produce rule instances which are not satisfied already.
It is important to note that this phase
behaves in a deterministic way with respect to stable models. No
assumptions about truth or falsity of atoms are made, only definite
knowledge is derived, which must hold in all stable models. For this
reason, programs with multiple stable models cannot be solved by
grounding.

The second phase is often referred to as \emph{stable model search} and takes
care of the non-deterministic computation. Essentially, one undefined atom is selected
and its truth or falsity is assumed. The assumption might imply truth or
falsity of other undefined atoms. Hence, the process is
repeated until either an inconsistency is derived or all atoms have been interpreted.
In the latter case an additional check is performed to ensure stability of the model.
Details on this process can be found for example in \cite{fabe-2002}.
Query answering is typically handled by storing all admissible answer substitutions
as stable models are computed. For brave reasoning, each stable model can
contribute substitutions to the set of answers. In this case the set of answers
is initially empty. For cautious reasoning, instead, each stable model may
eliminate some substitutions from the set of admissible answers. Therefore, in
this case all possible substitutions for the input query are initially contained
in the set of answers.

\subsection{Sideways Information Passing for $\dat$ Rules}
\label{sec:sip}

The Magic Set method aims at simulate a top-down evaluation of a query $\Q$, like for instance the one adopted
by Prolog. According to this kind of evaluation, all the rules $\R$ such that
${p(\t)} \in \HR$ and $\HR\vartheta = \{\Q\vartheta'\}$ (for some
substitution $\vartheta$ for all the variables of $\R$ and some
substitution $\vartheta'$ for all the variables of $\Q$)
are considered in a first step. Then the atoms in $\posbody{\R}\vartheta$ are taken as subqueries
(we recall that standard $\dat$ rules have empty negative body), and
the procedure is iterated.
Note that, according to this process, if a (sub)query has some
argument that is \emph{bound} to a constant value, this
information is ``passed'' to the atoms in the body. Moreover, the body
is considered to be processed in a certain sequence, and processing a
body atom may bind some of its arguments for subsequently considered
body atoms, thus ``generating'' and ``passing'' bindings within the
body. Whenever a body atom is processed, each of its argument is
therefore considered to be either \emph{bound} or \emph{free}. We
illustrate this mechanism by means of an example.

\begin{example}\label{ex:sip1}
\em Let $\tt path(1,5)$ be a query for a program having the
following inference rules:

{
\small
\[
\begin{array}{l}
\vspace{-2mm} \R_1: \quad \tt path(X,Y) \derives edge(X,Y).\\
\vspace{-2mm} \R_2: \quad \tt path(X,Y) \derives edge(X,Z),\ path(Z,Y).
\end{array}
\]
}
Since this is a $\dat$ program, brave and cautious consequences coincide.
Moreover, let $\F_1 = \{ {\tt edge(1,3), edge(2,4), edge(3,5)} \}$ be the EDB of the program. A top-down evaluation scheme
considers $\R_1$ and $\R_2$ with $\tt X$ and $\tt Y$ bound to $\tt 1$ and $\tt 5$, respectively. In particular,
when considering $\R_1$, the information about the binding of the two variables is passed to $\tt edge(X,Y)$,
which is indeed the only query atom occurring in $\R_1$. Thus, the evaluation fails since $\tt edge(1,5)$ does
not occur in $\F_1$.

When considering $\R_2$, instead, the binding information can be passed either to $\tt path(Z,Y)$ or to $\tt
edge(X,Z)$. Suppose that atoms are evaluated according to their ordering in the rule
(from left to right); then $\tt edge(X,Z)$ is
considered before $\tt path(Z,Y)$. In particular, $\F_1$ contains the atom $\tt edge(1,3)$, which leads us
to map $\tt Z$ to $\tt 3$. Eventually, this inferred binding information might be propagated to the remaining
body atom $\tt path(Z,Y)$, which hence becomes $\tt path(3,5)$.

The process has now to be repeated by looking for an answer to $\tt path(3,5)$. Again, rule $\R_1$ can be
considered, from which we conclude that this query is true since $\tt edge(3,5)$ occurs in $\F_1$. Thus, $\tt
path(1,5)$ holds as well due to $\R_2$. \hfill $\Box$
\end{example}

Note that in the example above we have two degrees of freedom in the specification of the top-down evaluation
scheme. The first one concerns which ordering is used for processing the body
atoms. While Prolog systems are usually required to follow the ordering in which the program is written, $\dat$ has a purely declarative semantics which is independent of the body ordering, allowing for an arbitrary ordering to be adopted. The second degree of freedom is slightly more subtle, and concerns the selection of the terms to be
considered bound to constants from previous evaluations. Indeed, while we have considered the propagation of all
the binding information that originates from previously processed body atoms, it is in general possible to
restrict the top-down evaluation to partially propagate this information. For instance, one may desire to
propagate only information generated from the evaluation of EDB predicates, or even just the information that is passed on via the head atom.

The specific propagation strategy adopted in the top-down evaluation scheme is called {\em sideways information
passing strategy} (SIPS), which is just a way of formalizing a partial ordering over the atoms of each rule
together with the specification of how the bindings originated and propagate
\cite{beer-rama-91,grec-2003}.
To formalize this concept, in what follows, for each IDB atom $p(\t)$, we shall denote its associated
binding information (originated in a certain step of the top-down evaluation) by means of a string $\alpha$
built over the letters $b$ and $f$, denoting ``bound'' and ``free'', respectively, for each argument of
$p(\t)$.

\begin{definition}[SIPS for $\dat$ rules]
\label{def:sip}
A {\em SIPS} for a $\dat$ rule $\R$ with respect to a binding $\alpha$
for the atom ${p(\t)} \in \HR$ is a pair $(\prec^{\alpha}_r,f^{\alpha}_r)$, where:
\begin{enumerate}
  \item $\prec^{\alpha}_r$ is a strict partial order over the atoms in $\HR \cup \posbody{\R}$,
  such that ${p(\t)} \prec^{\alpha}_r {q(\s)}$, for all atoms ${q(\s)}\in \posbody{\R}$; and,

  \item  $f^{\alpha}_r$ is a function assigning to each atom ${q(\s)}\in \HR \cup \posbody{\R}$
  a subset of the variables in $\s$---intuitively,
  those made bound when processing ${q(\s)}$.
\end{enumerate}
\end{definition}

Intuitively, for each atom $q(\s)$ occurring in $\R$, the strict partial
order $\prec^{\alpha}_r$ specifies those atoms that have to be processed
before processing atom $q(\s)$. Eventually, an argument $X$ of $q(\s)$
is bound to a constant if there exists an atom $q'(\s')$ such that
${q'(\s')} \prec^{\alpha}_r {q(\s)}$ and
${X} \in f^{\alpha}_r({q'(\s')})$.
Note that the head atom $p(\t)$ precedes
all other atoms in $\prec^{\alpha}_r$.

\begin{example}\label{ex:sip2}\em
The SIPS we have adopted in Example~\ref{ex:sip1} for $\R_1$ with respect to the binding $\tt bb$ (originating from the
query $\tt path(1,5)$) can be formalized as the pair $(\prec^{\tt bb}_{\R_1},f^{\tt bb}_{\R_1})$, where
${\tt path(X,Y)}\prec^{\tt bb}_{\R_1} {\tt edge(X,Y)}$,
$f^{\tt bb}_{\R_1}({\tt path(X,Y)})=\{{\tt X,Y}\}$, and
$f^{\tt bb}_{\R_1}({\tt edge(X,Y)})=\emptyset$.
Instead, the SIPS we have adopted for $\R_2$ with respect to the binding $\tt bb$ can be formalized
as the pair $(\prec^{\tt bb}_{\R_2},f^{\tt bb}_{\R_2})$, where ${\tt path(X,Y)} \prec^{\tt bb}_{\R_2} {\tt
edge(X,Z)} \prec^{\tt bb}_{\R_2} {\tt path(Z,Y)}, f^{\tt bb}_{\R_2}({\tt path(X,Y)}) = \{{\tt X,Y}\}, f^{\tt
bb}_{\R_2}({\tt edge(X,Z)}) = \{{\tt Z}\}$, and $f^{\tt bb}_{\R_2}({\tt path(Z,Y)}) = \emptyset$.
%
\hfill $\Box$
\end{example}

All the algorithms and techniques we shall develop in this paper are orthogonal with respect to the underlying SIPSes to
be used in the top-down evaluation. Thus, in Section~\ref{sec:msdat}, we shall assume that $\dat$ programs are provided in
input together with some arbitrarily defined SIPS $(\prec^{\alpha}_r,f^{\alpha}_r)$, for each rule $r$
and for each possible adornment $\alpha$ for the head atom in $\HR$.

\subsection{Magic Sets for $\dat$ Programs}
\label{sec:msdat}

The Magic Set method is a strategy for simulating the top-down evaluation of a query by modifying the original
program by means of additional rules, which narrow the computation to what is relevant for answering the query.
We next provide a brief and informal description of the Magic Set rewriting technique. The reader is referred to
\cite{ullm-89} for a detailed presentation.

The method is structured in four main phases, which are informally illustrated below by means of
Example~\ref{ex:sip1}.

\textbf{(1) Adornment.} The key idea is to materialize the binding information for IDB predicates that would be
propagated during a top-down computation.
In particular, the fact that an IDB predicate  $p(\t)$ is associated with a binding information $\alpha$
(i.e., a string over the letters $b$ and $f$, one for each term in $\t$) is denoted by the atom obtained
\emph{adorning} the predicate symbol with the binding at hand, that is, by $p^{\alpha}(\t)$. In what
follows, the predicate $p^{\alpha}$ is said to be an adorned predicate.

First, adornments are created for query predicates so that an argument occurring in the query is adorned with
the letter $b$ if it is a constant, or with the letter $f$ if it is a variable. For instance, the
adorned version of the query atom $\tt path(1,5)$ is $\tt path^{\tt bb}(1,5)$, which gives rise to the adorned
predicate $\tt path^{\tt bb}$.

Each adorned predicate is eventually used to propagate its information into the body of the rules defining it
according to a SIPS, thereby simulating a top-down evaluation. In particular, assume that the binding $\alpha$ has to be propagated into a rule $r$ whose head is $p(\t)$. Thus, the associated SIPS $(\prec^{\alpha}_r,f^{\alpha}_r)$ determines which variables will be bound in the evaluation of the
various body atoms. Indeed, a variable $X$ of an atom $q(\s)$ in $\R$ is bound if and only if either

\begin{enumerate}
\item ${X}\in f^{\alpha}_r({q(\s)})$ with ${q(\s)} = {p(\t)}$; or,

\item ${X}\in f^{\alpha}_r({b(\bar z)})$ for an atom ${b(\bar z)}\in \posbody{\R}$ such that
${b(\bar z)} \prec^{\alpha}_r {q(\s)}$ holds.
\end{enumerate}

Adorning a rule $r$ with respect to an adorned predicate $p^{\alpha}$ means propagating the binding information $\alpha$, starting from the head predicate ${p(\t)}\in \HR$,
thereby creating
a novel \emph{adorned rule} where all the IDB predicates in $r$ are substituted by the adorned
predicates originating from the binding according to (1) and (2).

\begin{example}\label{ex:adornament}\em
Adorning the query $\tt path(1,5)$ generates $\tt path^{\tt bb}(1,5)$. Then, propagating the
binding information $\tt bb$ into the rule $\R_1$, i.e., when adorning $\R_1$ with $\tt path^{\tt bb}$, produces the following adorned rule (recall here that adornments apply only to IDB predicates, whereas $\tt edge$
is an EDB predicate): \vspace{-3mm} {\small
\[
\begin{array}{l}
\vspace{-2mm} \R_1^a: \quad \tt path^{\tt bb}(X,Y) \derives edge(X,Y).\\
\end{array}
\]}

Instead, when propagating $\tt bb$ into the rule $\R_2$ according to the SIPS
$(\prec^{\tt bb}_{\R_2},f^{\tt bb}_{\R_2})$ defined in Example~\ref{ex:sip2},
we obtain the following adorned rule: \vspace{-3mm} {\small
\[
\begin{array}{l}
\vspace{-2mm} \R_2^a: \quad \tt path^{\tt bb}(X,Y) \derives edge(X,Z),\ path^{bb}(Z,Y).
\end{array}
\]}
\vspace{-10mm}\hfill $\Box$
\end{example}

While adorning rules, novel binding information in the form of yet unseen adorned predicates may be generated, which should be used
for adorning other rules. In fact, the adornment step is repeated until all bindings have been processed,
yielding the \emph{adorned program}, which is the set of all adorned rules created during the computation. For
instance, in the above example, the adorned program just consists of $\R_1^a$ and $\R_2^a$ for no adorned
predicate different from $\tt path^{\tt bb}$ is generated.

\textbf{(2) Generation.} In the second step of the Magic Set method, the adorned program is used to generate
{\em magic rules}, which are used to simulate the top-down evaluation scheme and
to single out the atoms relevant for answer the input query.
For an adorned atom $p^\alpha(\bar t)$, let $\magic{p^\alpha(\bar t)}$ be its \emph{magic version}
defined as the atom $magic\_p^\alpha(\bar t')$, where $\bar t'$ is obtained from $\bar t$ by
eliminating all arguments corresponding to an $f$ label in $\alpha$, and where $magic\_p^\alpha$ is
a new predicate symbol (for simplicity denoted by attaching the prefix ``$magic\_$'' to the predicate symbol $p^\alpha$).
Intuitively, $magic\_p^\alpha(\bar t')\vartheta$ ($\vartheta$ a substitution)
is inferred by the rules of the rewritten program whenever a top-down evaluation
of the original program would process a subquery of the form $p^\alpha(\bar t'')$,
where $\bar t''$ is obtained from $\bar t$ by applying $\vartheta$ to
all terms in $\bar t'$.

Thus, if $q_i^{\beta_i}(\s_i)$ is an adorned atom
(i.e., $\beta_i$ is not the empty string)
in the body of an adorned rule $\R^a$ having $p^\alpha(\t)$ in head, a
magic rule $\R^\magica$ is generated such that (i) $\head{\R^\magica} = \{\magic{q_i^{\beta_i}(\s_i)}\}$ and (ii) $\body{\R^\magica}$
is the union of $\{\magic{p^\alpha(\t)}\}$ and the set of all the atoms
${q_j^{\beta_j}(\s_j)} \in \posbody{\R}$ such that ${q_j(\s_j)} \prec^{\alpha}_r {q_i(\s_i)}$.

\begin{example}\label{ex:magic1}\em
In our running example, only one magic rule is generated,

\vspace{-3mm}
{\small
\[
\begin{array}{l}
\vspace{-2mm} \R_2^\magica: \quad \tt magic\_path^{\tt bb}(Z,Y) \derives magic\_path^{\tt bb}(X,Y),\ edge(X,Z).\\
\end{array}
\]}

\vspace{-5mm} In fact, the adorned rule $\R_1^a$ does not produce any magic rule, since there is no adorned
predicate in $\posbody{\R_1^a}$.\hfill $\Box$
\end{example}

\textbf{(3) Modification.} The adorned rules are subsequently modified by adding magic atoms to their bodies.
These magic atoms limit the range of the head variables avoiding the inference of facts which cannot contribute
to the derivation of the query. In particular, each adorned rule $\R^a$, whose head atom is $p^\alpha(\t)$, is
modified by adding the atom $\magic{p^\alpha(\t)}$ to its body. The resulting rules are called {\em modified
rules}.

\begin{example}\label{ex:magic2}\em
In our running example, the following modified rules are generated:

\vspace{-3mm}
{\small
\[
\begin{array}{l}
\vspace{-2mm} \R_1': \quad \tt path^{\tt bb}(X,Y) \derives magic\_path^{\tt bb}(X,Y),\ edge(X,Y).\\
\vspace{-2mm} \R_2': \quad \tt path^{\tt bb}(X,Y) \derives magic\_path^{\tt bb}(X,Y),\ edge(X,Z),\ path^{\tt bb}(Z,Y).
\end{array}
\]}

\vspace{-10mm}\hfill $\Box$
\end{example}

\textbf{(4) Processing the Query.} Finally, given the adorned predicate $g^{\alpha}$ obtained when adorning a query $g(\t)$,
(1) a \emph{magic seed} $\magic{g^{\alpha}(\t)}$ (a fact) and (2) a rule $g(\t) \derives g^{\alpha}(\t)$ are produced.
In our example, $\tt magic\_path^{\tt bb}(1,5)$ and $\tt path(X,Y) \derives path^{\tt bb}(X,Y)$ are generated.

The complete rewritten program according to the Magic Set method consists of the magic, modified, and query
rules (together with the original EDB).  Given a $\dat$ program $\P$, a query $\Q$, and the rewritten program $\P'$, it is well-known that $\P$
and $\P'$ are equivalent with respect to $\Q$, i.e., $\P\bqequiv{\Q} {\P'}$ and $\P\cqequiv{\Q} {\P'}$ hold
\cite{ullm-89}.

\begin{example}\em
The complete rewriting of our running example is as follows:\footnote{%
The Magic Set rewriting of a program $\p$ affects only $\IDB{\p}$, so we usually
omit $\EDB{\p}$ in examples.}

\vspace{-3mm}
{\small
\[
\begin{array}{l}
\vspace{-2mm}\phantom{\R_1':}\ \quad \tt magic\_path^{\tt bb}(1,5).\\
\vspace{-2mm}\phantom{\R_1':}\ \quad \tt path(X,Y) \derives path^{\tt bb}(X,Y).\\
\vspace{-2mm}\R_2^\magica: \quad \tt magic\_path^{\tt bb}(Z,Y) \derives magic\_path^{\tt bb}(X,Y), edge(X,Z).\\
\vspace{-2mm}\R_1': \quad \tt path^{\tt bb}(X,Y) \derives magic\_path^{\tt bb}(X,Y),\ edge(X,Y).\\
\vspace{-2mm}\R_2': \quad \tt path^{\tt bb}(X,Y) \derives magic\_path^{\tt bb}(X,Y),\ edge(X,Z),\ path^{\tt bb}(Z,Y).
\end{array}
\]}

\vspace{-5mm} In this rewriting, $\tt magic\_path^{\tt bb}(X,Y)$ represents a potential sub-path of the paths from
$\tt 1$ to $\tt 5$. Therefore, when answering the query, only these sub-paths will be actually considered in the
bottom-up computation.
One can check that this rewriting is in fact equivalent to the original program with respect to the query
$\tt path(1,5)$. \hfill $\Box$
\end{example}

\section{Magic Set Method for $\datdn$ Programs}\label{sec:magic_set-disjunctive_datalog}

In this section we present the Dynamic Magic Set algorithm ($\DMS$) for the optimization of disjunctive
programs with stratified negation. Before discussing the details of the algorithm,
we informally present the main ideas that have been
exploited for enabling the Magic Set method to work on disjunctive programs
(without negation).

\subsection{Overview of Binding Propagation in $\datd$ Programs}\label{sec:intuition}

As first observed in \cite{grec-2003}, while in non-disjunctive programs bindings are propagated only
head-to-body, a Magic Set transformation for disjunctive programs has to propagate bindings also head-to-head in order
to preserve soundness. Roughly, suppose that a predicate $p$ is relevant for the query, and a disjunctive
rule $r$ contains $p(X)$ in the head. Then, besides propagating the binding from $p(X)$ to the body of
$r$ (as in the non-disjunctive case), the binding must also be
propagated from $p(X)$ to the other head atoms of $r$. The reason is that any atom which is true in a stable model needs a supporting rule, which is a rule with a true body and in which the atom in question is the \emph{only} true head atom. Therefore, $r$ can yield support to the
truth of $p(X)$ only if all other head atoms are false, which is due to the implicit minimality
criterion in the semantics.

Consider, for instance, a $\datd$ program $\P$ consisting of the rule $\ \tt p(X)\, \Or\, q(Y)
\derives\, a(X,Y),\, b(X)$, and the query $\tt p(1)$. Even though the query propagates the binding for the
predicate $\tt p$, in order to correctly answer the query we also need to evaluate the truth value of $\tt
q(Y)$, which indirectly receives the binding through the body predicate $\tt a(X,Y)$. For instance, suppose that
the program contains the facts $\tt a(1,2)$ and $\tt b(1)$;
then the atom $\tt q(2)$ is relevant for the query $\tt p(1)$
(i.e., it should belong to the Magic Set of the query), since the truth of $\tt q(2)$ would invalidate the
derivation of $\tt p(1)$ from the above rule, due to the minimality of the semantics. It follows that, while
propagating the binding, the head atoms of disjunctive rules must be all adorned as well.

However, the adornment of the head of one disjunctive rule $r$ may give rise to multiple rules, having different
adornments for the head predicates. This process can be somehow seen as ``splitting'' $r$ into multiple rules.
While this is not a problem in the non-disjunctive case, the semantics of a disjunctive program may be affected.
Consider, for instance, the program consisting of the rule $\ \tt p(X,Y)\ \Or\ q(Y,X) \derives a(X,Y)$, in which $\tt p$ and $\tt q$
are mutually exclusive (due to minimality) since they do not appear in any other rule head. Assuming the
adornments $\tt p^{\tt bf}$ and $\tt q^{\tt bf}$ to be propagated, we might obtain rules whose heads have the form $\tt
p^{\tt bf}(X,Y)\ \Or\ q^{\tt fb}(Y,X)$ (derived while propagating $\tt p^{\tt bf}$) and $\tt p^{\tt fb}(X,Y)\ \Or\ q^{\tt bf}(Y,X)$
(derived while propagating $\tt q^{\tt bf}$). These rules could support two atoms $\tt p^{\tt bf}(m,n)$ and $\tt
q^{\tt bf}(n,m)$, while in the original program $\tt p(m,n)$ and $\tt p(n,m)$ could not hold simultaneously (due to
semantic minimality), thus changing the original semantics.

The method proposed in \cite{grec-2003} circumvents this problem by using some auxiliary predicates that collect
all facts coming from the different adornments. For instance, in the above example, two rules of the form $\tt
collect\_p(X,Y) \derives p^{\tt fb}(X,Y)$ and $\tt collect\_p(X,Y) \derives p^{\tt bf}(X,Y)$ are added for the predicate
$\tt p$.
The main deficiency of this approach is that collecting predicates will store a sizable superset
of all the atoms relevant to answer the given query.

An important observation is that these collecting predicates are defined in a deterministic way.
Since these predicates are used for restricting the computation in \cite{grec-2003}, a consequence is that assumptions during the computation
cannot be exploited for determining the relevant part of the program. In terms of bottom-up
systems, this implies that the optimization affects only the grounding portion of the
solver. Intuitively, it would be beneficial to also have a form of conditional relevance,
exploiting also relevance for assumptions. In fact, in Section~\ref{sec:experiments}, we provide
experimental evidence for this intuition.

In the following, we propose a novel Magic Set method that guarantees query equivalence
and also allows for the exploitation of conditional or dynamic relevance,
overcoming a major drawback of $\SMS$.

\subsection{$\DMS$ Algorithm}
\label{sec:dms}

Our proposal to enhance the Magic Set method for disjunctive $\dat$ programs has two crucial features compared to the one of \cite{grec-2003}:

\begin{enumerate}
  \item  First, the semantics of the program is preserved by stripping off the
  adornments from non-magic predicates in modified rules, and not by introducing collecting
  predicates that can introduce overhead in the grounding process, as discussed in Section~\ref{sec:intuition}.

  \item  Second, the proposed Magic Set technique is not just a way to cut irrelevant
  rules from the ground program; in fact, it allows for dynamic determination
  of relevance, thus optimizing also the nondeterministic computation by disabling parts
  of the programs which are not relevant in any extension of the current computation state.
\end{enumerate}

\begin{figure}[t]
 \centering
 \fbox{\hspace{2mm}\parbox{0.88\textwidth}{\scriptsize
  \begin{description}
  \item[Algorithm DMS($\Q$,$\p$)]
  \item[Input:] A \datdn\ program $\P$, and a query $\Q=g(\t)?$
  \item[Output:] The rewritten program $\DMS(\Q,\P)$;
  \item[var:]  $S$, $D$: \textbf{set} of adorned predicates;\ \modifiedRules$_{\Q,\P}$,\magicRules$_{\Q,\P}$: \textbf{set} of rules;
  \item[begin] \
  \item[]\emph{\phantom{1}1.}\ \ $S$ := $\emptyset$;\ \ $D$ := $\emptyset$;\ \ \modifiedRules$_{\Q,\P}$ := $\emptyset$;\ \ \magicRules$_{\Q,\P}$ := \{\textbf{\emph{BuildQuerySeed}}($\Q,S$)\};
  \item[]\emph{\phantom{1}2.}\ \ \textbf{while} $S\neq \emptyset$ \textbf{do}
  \item[]\emph{\phantom{1}3.}\ \ \hs $p^\alpha$\ := an element of $S$;\hs remove $p^\alpha$ from $S$;\hs add $p^\alpha$ to $D$;
  \item[]\emph{\phantom{1}4.}\ \ \hs \textbf{for each} rule $\R \in \P$ and \textbf{for each} atom $p(\t)$ $\in \HR$ \textbf{do}
  \item[]\emph{\phantom{1}5.}\, \ \hs \hs $\R^{a}$:=\textbf{\emph{Adorn}}$(\R,{p^\alpha(\t)},S,D)$;
  \item[]\emph{\phantom{1}6.}\, \ \hs \hs \magicRules$_{\Q,\P}$\ := \magicRules$_{\Q,\P}$ \ $\cup$ \textbf{\emph{Generate}}$(\R,{p^\alpha(\t)},\R^a)$;
  \item[]\emph{\phantom{1}7.} \ \hs \hs \modifiedRules$_{\Q,\P}$\ := \modifiedRules$_{\Q,\P}$ \ $\cup$ $\{$\,\textbf{\emph{Modify}}$(\R,\R^a)$\,$\}$;
  \item[]\emph{\phantom{1}8.}\ \ \hs \textbf{end for}
  \item[]\emph{\phantom{1}9.}\ \ \textbf{end while}
  \item[]\emph{10.}  $\DMS(\Q,\P)$:=\magicRules$_{\Q,\P}$ \ $\cup$ \modifiedRules$_{\Q,\P}$ $\cup$ \EDB{\p};
  \item[]\emph{11.}  \textbf{return} $\DMS(\Q,\P)$;
  \item[end.] \
  \end{description}
 }}
 \caption{Dynamic Magic Set algorithm ($\DMS$) for $\datdn$ programs}\label{fig:DMS}
\end{figure}

The algorithm $\DMS$ implementing these strategies is reported in Figure~\ref{fig:DMS} as pseudo-code. We assume that all variables are passed to functions by reference, in particular the variable $S$ is modified inside \textbf{\emph{BuildQuerySeed}} and \textbf{\emph{Adorn}}.
Its input is a $\datdn$ program\footnote{Note that the algorithm can be used for non-disjunctive
and/or positive programs as a special case.}
$\P$ and a query $\Q$.
The algorithm uses two sets, $S$ and $D$, to store adorned predicates
to be propagated and already processed, respectively.
After all the adorned predicates have been processed, the method
outputs a rewritten program $\DMS(\Q,\P)$ consisting of a set of
\emph{modified} and \emph{magic} rules, stored by means of the sets
\modifiedRules$_{\Q,\P}$ and \magicRules$_{\Q,\P}$, respectively
(together with the original EDB).
The main steps of the algorithm are illustrated by means of the following running
example.

\vspace{-2mm}\begin{example}[Strategic Companies \cite{cado-etal-97}]\label{running}
\em
Let $C = \{{c_1, \ldots, c_m}\}$ be a collection
of companies producing some goods in a set $G$, such that each company ${c_i}\in C$ is controlled by a set of
other companies $O_{i} \subseteq C$. A subset of the companies $C' \subseteq C$ is a \emph{strategic set} if it is
a minimal set of companies satisfying the following conditions:
Companies in $C'$ produce all the goods in $G$; and
$O_{i} \subseteq C'$ implies ${c_i} \in C'$, for each $i=1,\ldots,m$.

We assume that each product is produced by at most two companies and that each company is controlled by at most three companies. It is known that the problem retains its hardness (for the second level of the polynomial hierarchy; see \cite{cado-etal-97}) under these restrictions.
We assume that production of goods is represented by an EDB containing a fact ${\tt produced\_by(p,c_1,c_2})$ for each product $p$ produced by companies $c_1$ and $c_2$, and that the control is represented by facts ${\tt controlled\_by(c,c_1,c_2,c_3)}$ for each company $c$ controlled by companies $c_1$, $c_2$, and $c_3$.\footnote{If a product is produced by only one company, ${c_2} = {c_1}$, and similarly for companies controlled by fewer than three companies.}
This problem can be modeled via the following disjunctive program
$\mathcal{P}_{sc}$:

\vspace{-2mm}\begin{dlvcode}
\R_3:\ \tt sc(C_1)\ \Or\ sc(C_2) \derives produced\_by(P,C_1,C_2). \\
\R_4:\ \tt sc(C) \derives controlled\_by(C,C_1,C_2,C_3),\ sc(C_1),\ sc(C_2),\ sc(C_3).
\end{dlvcode}
\vspace{-2mm}Moreover, given a company ${c} \in C$, we consider a query $\Q_{sc}=\tt sc(c)$
asking whether $c$ belongs to some strategic set of $C$. \hfill $\Box$
\end{example}

\begin{figure}[t]
 \centering
 \fbox{\hspace{2mm}\parbox{0.88\textwidth}{\scriptsize
  \begin{description}
  \item[Function BuildQuerySeed($\Q$, $S$)]
  \item[Input:] $\Q$: query;\hs $S$\,: {\bf set} of adorned predicates;
  \item[Output:] The query seed (a magic atom);
  \item[var:] $\alpha$: adornment string;
  \item[begin] \
  \item[]\emph{\phantom{1}1.}\hs
      Let $p(\t)$ be the atom in $\Q$.
  \item[]\emph{\phantom{1}2.}\hs
      $\alpha\ :=\ \epsilon$;
  \item[]\emph{\phantom{1}3.}\hs
      {\bf for each} argument $t$ in $\t$ {\bf do}
  \item[]\emph{\phantom{1}4.}\hs
      \hs \textbf{if} $t$ is a constant {\bf then}\hs $\alpha\ :=\ \alpha{b}$;\hs {\bf else}\hs $\alpha\ :=\ \alpha{f}$;\hs \textbf{end if}
  \item[]\emph{\phantom{1}5.}\hs
      \textbf{end for}
  \item[]\emph{\phantom{1}6.}\hs
      add $p^\alpha$ to $S$;
  \item[]\emph{\phantom{1}7.}\hs
      {\bf return} $\magic{p^\alpha(\t)}$;
  \item[end.] \
  \end{description}
 }}
 \caption{BuildQuerySeed function}\label{fig:BuildQuerySeed}
\end{figure}

The computation starts in step \emph{1} by initializing $S$, $D$, and \modifiedRules$_{\Q,\P}$ to the empty set. Then, the
function \textbf{\emph{BuildQuerySeed}}$(\Q,S)$ is used for storing in \magicRules$_{\Q,\P}$ the magic seed, and
inserting in the set $S$ the adorned predicate of $\Q$. Note that we do not generate any query rules because
standard atoms in the transformed program will not contain adornments.
Details of \textbf{\emph{BuildQuerySeed}}$(\Q,S)$ are reported in Figure~\ref{fig:BuildQuerySeed}.

\begin{example}\label{runningseed}\em
Given the query $\Q_{sc}=\tt sc(c)$ and the program $\mathcal{P}_{sc}$, function \textbf{\emph{BuildQuerySeed}}$(\Q_{sc},S)$ creates
the fact $\tt magic\_sc^{\tt b}(c)$ and inserts $\tt sc^{\tt b}$ in $S$. \hfill $\Box$
\end{example}

\begin{figure}[t]
 \centering
 \fbox{\hspace{2mm}\parbox{0.88\textwidth}{\scriptsize
  \begin{description}
  \item[Function Adorn(\R, $p^\alpha(\t)$, $S$, $D$)]
  \item[Input:] \R: rule;\hs $p^\alpha(\t)$: adorned atom;\hs $S$, $D$\,: {\bf set} of adorned predicates;
  \item[Output:] an adorned rule;
  \item[var:] $\R^a$: adorned rule;\hs $\alpha_i$: adornment string;
  \item[begin] \
  \item[]\emph{\phantom{1}1.}\hs
      Let $(\SIPSprecR{p^\alpha(\t)}, f_\R^{p^\alpha(\t)})$ be the SIPS associated with $\R$ and $p^\alpha(\t)$.
  \item[]\emph{\phantom{1}2.}\hs
      $\R^a\ :=\ \R$;
  \item[]\emph{\phantom{1}3.}\hs
      {\bf for each} IDB atom $p_i(\t_i)$ in $\HR \cup \BpR \cup \BnR$ {\bf do}
  \item[]\emph{\phantom{1}4.}\hs
      \hs $\alpha_i\ :=\ \epsilon$;
  \item[]\emph{\phantom{1}5.}\hs
      \hs {\bf for each} argument $t$ in $\t$ {\bf do}
  \item[]\emph{\phantom{1}6.}\hs
      \hs\hs {\bf if} $t$ is a constant {\bf then}
  \item[]\emph{\phantom{1}7.}\hs
      \hs\hs\hs $\alpha_i\ :=\ \alpha_i b$;
  \item[]\emph{\phantom{1}8.}\hs
      \hs\hs {\bf else}
  \item[]\emph{\phantom{1}9.}\hs
      \hs\hs\hs Argument $t$ is a variable. Let $X$ be this variable.
  \item[]\emph{\phantom{}10.}\hs
      \hs\hs\hs {\bf if} ${X} \in f_{\R}^{p^\alpha(\t)}({p(\t)})$ {\bf or}
      there is ${q(\s)}$ in $\posbody{\R}$ such that
  \item[]\emph{\phantom{}11.}\hs
      \hs\hs\hs \phantom{{\bf if} ${X} \in f_{\R}^{p^\alpha(\t)}({p(\t)})$ {\bf or}}
      ${q(\s)} \SIPSprecR{p^\alpha(\t)} {p_i(\t_i)}$ {\bf and}
      ${X}\in f^{p^\alpha(\t)}_\R({q(\s)})$ {\bf then}
  \item[]\emph{\phantom{}12.}\hs
      \hs\hs\hs\hs $\alpha_i\ :=\ \alpha_i b$;
  \item[]\emph{\phantom{}13.}\hs
      \hs\hs\hs {\bf else}
  \item[]\emph{\phantom{}14.}\hs
      \hs\hs\hs\hs $\alpha_i\ :=\ \alpha_i f$;
  \item[]\emph{\phantom{}15.}\hs
      \hs\hs\hs {\bf end if}
  \item[]\emph{\phantom{}16.}\hs
      \hs\hs {\bf end if}
  \item[]\emph{\phantom{}17.}\hs
      \hs {\bf end for}
  \item[]\emph{\phantom{}18.}\hs
      \hs substitute $p_i(\t_i)$ in $\R^a$ with $p_i^{\alpha_i}(\t_i)$;
  \item[]\emph{\phantom{}19.}\hs
      \hs {\bf if} set $D$ does not contain ${p_i^{\alpha_i}}$ {\bf then}\quad
            add $p_i^{\alpha_i}$ to $S$;\quad {\bf end if}
  \item[]\emph{\phantom{}20.}\hs
      {\bf end for}
  \item[]\emph{\phantom{}21.}\hs
      {\bf return} $\R^a$;
  \item[end.] \
  \end{description}
 }}
 \caption{Adorn function}\label{fig:adorn}
\end{figure}

The core of the algorithm (steps \emph{3--8}) is repeated until the set $S$ is empty, i.e., until there is no
further adorned predicate to be propagated. In particular, an adorned predicate $p^\alpha$ is moved from
$S$ to $D$ in step \emph{3}, and its binding is propagated in each (disjunctive) rule $\R \in \P$ of the form
\begin{dlvcode}
\R: \ p(\t) \ \Or\ p_1(\t_1) \ \Or\ \cdots \ \Or\ p_n(\t_n) \derives
    q_1(\s_1),\ \ldots,\ q_j(\s_j), \\
    \quad\quad\quad\quad\quad\quad\quad\quad\quad\quad\quad\ \
    \naf~q_{j+1}(\s_{j+1}),\ \ldots,\ \naf~q_m(\s_m).
\end{dlvcode}
(with $n\geq 0$) having an atom $p(\t)$ in the head (note that the rule
$\R$ is processed a number of times that equals the number of head atoms with predicate
$p$; steps \emph{4--8}).

\noindent \textbf{(1) Adornment.} Step \emph{5} in Figure~\ref{fig:DMS} implements the adornment of the rule.
Different from the case of non-disjunctive positive programs,
the binding of the predicate $p^\alpha$ needs to be also propagated to the atoms
$p_1(\t_1),\ldots,p_n(\t_n)$ in the head.  Therefore, binding propagation has to be extended to the head
atoms different from $p(\t)$, which are therefore adorned according to a SIPS specifically conceived for
disjunctive programs.
Notation gets slightly more involved here: Since in non-disjunctive
rules there is a single head atom, it was sufficient to specify an
order and a function for each of its adornments (omitting the head
atom in the notation). With disjunctive rules, an order and a function
need to be specified for each adorned head atom, so it is no longer
sufficient to include only the adornment in the notation, but we
rather include the full adorned atom.

\begin{definition}[SIPS for $\datdn$ rules]
\label{def:sip2} A {\em SIPS} for a $\datdn$ rule $\R$ with respect to a binding $\alpha$ for an atom ${p(\t)} \in
\HR$ is a pair $(\prec^{p^\alpha(\t)}_r,f^{p^\alpha(\t)}_r)$, where:
\begin{enumerate}
  \item $\prec^{p^\alpha(\t)}_r$ is a strict partial order over the atoms in $\HR \cup \BpR \cup \BnR$, such that:
  \begin{enumerate}
  \item ${p(\t)} \prec^{p^\alpha(\t)}_r {q(\s)}$, for all atoms ${q(\s)}\in \HR \cup \BpR \cup \BnR$
        different from ${p(\t)}$;
  \item for each pair of atoms ${q(\s)} \in (\HR \setminus \{{p(\t)}\}) \cup \BnR$
        and ${b(\z)} \in \HR \cup \BpR \cup \BnR$,
        ${q(\s)} \prec^{p^\alpha(\t)}_r {b(\z)}$ does not hold; and,
  \end{enumerate}

  \item  $f^{p^\alpha(\t)}_r$ is a function assigning to each atom ${q(\s)}\in \HR \cup \BpR \cup \BnR$ a subset of the variables in $\s$---intuitively,
  those made bound when processing ${q(\s)}$.
\end{enumerate}
\end{definition}

As for $\dat$ rules, for each atom $q(\s)$ occurring in $\R$, the strict partial
order $\prec^{p^\alpha(\t)}_r$ specifies those atoms that have to be processed
before processing atom $q(\s)$, and an argument $X$ of $q(\s)$
is bound to a constant if there exists an atom $q'(\s')$ occurring in $\R$ such that
${q'(\s')} \prec^{p^\alpha(\t)}_r {q(\s)}$ and
${X} \in f^{p^\alpha(\t)}_r({q'(\s')})$.
The difference with respect to SIPSes for $\dat$ rules is precisely in the dependency from $p(\t)$
in addition to $\alpha$, and in condition \emph{(1.b)} stating that head atoms different from $p(\t)$
and negative body literals
cannot provide bindings to variables of other atoms.

The underlying idea is that a rule which is used to ``prove'' the
truth of an atom in a top-down method will be a rule which supports
that atom. This implies that all other head atoms in that rule must be
false and that the body must be true. Head atoms and atoms occurring
in the negative body cannot ``create'' bindings (that is, restrict
the values of variables), but these atoms are still relevant to the
query, which leads to the restrictions in Definition~\ref{def:sip2}.

Note that this definition considers each rule in isolation and is
therefore independent of the inter-rule structure of a program. In
particular, it is not important for the SIPS definition whether a
program is cyclic or contains head cycles.

In the following, we shall assume that each $\datdn$ program is
provided in input together with some arbitrarily defined SIPS for $\datdn$ rules \mbox{$(\prec^{p^\alpha(\t)}_r,f^{p^\alpha(\t)}_r)$}. In fact, armed with $(\prec^{p^\alpha(\t)}_r,f^{p^\alpha(\t)}_r)$, the adornment can be carried out precisely as we discussed for $\dat$ programs; in particular,
we recall here that a variable $X$ of an atom $q(\s)$ in $\R$ is bound if and only if either:

\begin{enumerate}
\item ${X}\in f^{p^\alpha(\t)}_r({q(\s)})$ with ${q(\s)} = {p(\t)}$; or,

\item ${X}\in f^{p^\alpha(\t)}_r({b(\bar z)})$ for an atom
${b(\bar z)}\in \posbody{\R}$ such that
${b(\bar z)} \prec^{p^\alpha(\t)}_r {q(\s)}$ holds.
\end{enumerate}

The function \textbf{\emph{Adorn}}$(\R,{p^\alpha(\t)},S,D)$ produces an adorned disjunctive rule $\R^a$
from an adorned atom $p^\alpha(\t)$ and a suitable
unadorned rule $\R$ (according to the bindings defined in the points (1) and (2) above),
by inserting all newly adorned
predicates in $S$. Hence, in step \emph{5} the rule $\R^a$ is of the form

\vspace{-1mm}
\begin{dlvcode}
\R^a: \ p^\alpha(\t)\,\Or\,p_1^{\alpha_1}(\t_1)\,\Or\,\cdots\,\Or\,p_n^{\alpha_n}(\t_n) \derives
    q_1^{\beta_1}(\s_1),\ \ldots,\ q_j^{\beta_j}(\s_j), \\
    \quad\quad\quad\quad\quad\quad\quad\quad\quad\quad\quad\quad\quad\ \
    \naf~q_{j+1}^{\beta_{j+1}}(\s_{j+1}),\ \ldots,\ \naf~q_m^{\beta_m}(\s_m).
\end{dlvcode}%
Details of \textbf{\emph{Adorn}}$(\R,{p^\alpha(\t)},S,D)$ are reported in Figure~\ref{fig:adorn}.

\begin{example}\label{running2}\em
Let us resume from Example~\ref{runningseed}.
We are supposing that the adopted SIPS is passing the bindings
via $\tt produced\_by$ and $\tt controlled\_by$ to the variables
of $\tt sc$ atoms, in particular
\begin{eqnarray*}
{\tt sc(C_1)} & \prec^{\tt sc^{\tt b}(C_1)}_{\R_3} & {\tt produced\_by(P,C_1,C_2)}\\
{\tt sc(C_1)} & \prec^{\tt sc^{\tt b}(C_1)}_{\R_3} & {\tt sc(C_2)}\\
{\tt produced\_by(P,C_1,C_2)} &\prec^{\tt sc^{\tt b}(C_1)}_{\R_3}& {\tt sc(C_2)}\\\\
{\tt sc(C_2)} & \prec^{\tt sc^{\tt b}(C_2)}_{\R_3} & {\tt produced\_by(P,C_1,C_2)}\\
{\tt sc(C_2)} & \prec^{\tt sc^{\tt b}(C_2)}_{\R_3} & {\tt sc(C_1)}\\
{\tt produced\_by(P,C_1,C_2)} &\prec^{\tt sc^{\tt b}(C_2)}_{\R_3}& {\tt sc(C_1)}\\\\
{\tt sc(C)} & \prec^{\tt sc^{\tt b}(C)}_{\R_4} & {\tt controlled\_by(C,C_1,C_2,C_3)}\\
{\tt sc(C)} & \prec^{\tt sc^{\tt b}(C)}_{\R_4} & {\tt sc(C_1)}\\
{\tt sc(C)} & \prec^{\tt sc^{\tt b}(C)}_{\R_4} & {\tt sc(C_2)}\\
{\tt sc(C)} & \prec^{\tt sc^{\tt b}(C)}_{\R_4} & {\tt sc(C_3)}\\
{\tt controlled\_by(C,C_1,C_2,C_3)} & \prec^{\tt sc^{\tt b}(C)}_{\R_4} & {\tt sc(C_1)}\\
{\tt controlled\_by(C,C_1,C_2,C_3)} & \prec^{\tt sc^{\tt b}(C)}_{\R_4} & {\tt sc(C_2)}\\
{\tt controlled\_by(C,C_1,C_2,C_3)} & \prec^{\tt sc^{\tt b}(C)}_{\R_4} & {\tt sc(C_3)}
\end{eqnarray*}
\[
\begin{array}{l}
f^{\tt sc^{\tt b}(C_1)}_{\R_3}({\tt sc(C_1)}) =  \{{\tt C_1}\}\\
f^{\tt sc^{\tt b}(C_1)}_{\R_3}({\tt produced\_by(P,C_1,C_2)}) = \{{\tt P,C_2}\}\\
f^{\tt sc^{\tt b}(C_1)}_{\R_3}({\tt sc(C_2)}) =  \emptyset \vspace{1em}\\
f^{\tt sc^{\tt b}(C_2)}_{\R_3}({\tt sc(C_2)}) = \{{\tt C_2}\}\\
f^{\tt sc^{\tt b}(C_2)}_{\R_3}({\tt produced\_by(P,C_1,C_2)}) = \{{\tt P,C_1}\}\\
f^{\tt sc^{\tt b}(C_2)}_{\R_3}({\tt sc(C_1)}) = \emptyset \vspace{1em}\\
f^{\tt sc^{\tt b}(C)}_{\R_4}({\tt sc(C)}) = \{{\tt C}\}\\
f^{\tt sc^{\tt b}(C)}_{\R_4}({\tt controlled\_by(C,C_1,C_2,C_3)}) =\{{\tt C_1,C_2,C_3}\}\\
f^{\tt sc^{\tt b}(C)}_{\R_4}({\tt sc(C_1)}) =
f^{\tt sc^{\tt b}(C)}_{\R_4}({\tt sc(C_2)}) =
f^{\tt sc^{\tt b}(C)}_{\R_4}({\tt sc(C_3)}) = \emptyset
\end{array}
\]
When $\tt sc^{\tt b}$ is removed from the set $S$, we first select rule $\R_3$
and the head predicate $\tt sc(C_1)$. Then
the adorned version is
\vspace{-2mm}\begin{dlvcode}
\R_{3,1}^a:\ \tt sc^{\tt b}(C_1)\ \Or\ sc^{\tt b}(C_2) \derives produced\_by(P,C_1,C_2). \\
\end{dlvcode}
\vspace{-2mm}Next, $r_3$ is processed again, this time with head predicate $\tt sc(C_2)$, producing
\vspace{-2mm}\begin{dlvcode}
\R_{3,2}^a:\ \tt sc^{\tt b}(C_2)\ \Or\ sc^{\tt b}(C_1) \derives produced\_by(P,C_1,C_2). \\
\end{dlvcode}
\vspace{-2mm}Finally, processing $r_4$ we obtain
\vspace{-2mm}\begin{dlvcode}
\R_4^a:\ \tt sc^{\tt b}(C) \derives controlled\_by(C,C_1,C_2,C_3),\,sc^{\tt b}(C_1),\,sc^{\tt b}(C_2),\,sc^{\tt b}(C_3).\\
\end{dlvcode}
\vspace{-13mm}
\hfill $\Box$
\end{example}

\begin{figure}[t]
 \centering
 \fbox{\hspace{2mm}\parbox{0.88\textwidth}{\scriptsize
  \begin{description}
  \item[Function Generate(\R, $p^\alpha(\t)$, $\R^a$)]
  \item[Input:] $\R$: rule;\hs $p^\alpha(\t)$: adorned atom;\hs $\R^a$: adorned rule;
  \item[Output:] a {\bf set} of magic rules;
  \item[var:] $R$: {\bf set} of rules;\hs $\R^\magica$: rule;
  \item[begin] \
  \item[]\emph{\phantom{1}1.}\hs
      Let $(\SIPSprecR{p^\alpha(\t)}, f_\R^{p^\alpha(\t)})$ be the SIPS associated with $\R$ and $p^\alpha(\t)$.
  \item[]\emph{\phantom{1}2.}\hs
      $R\ :=\ \emptyset$;
  \item[]\emph{\phantom{1}3.}\hs
      {\bf for each} atom $p_i^{\alpha_i}(\t_i)$ in $\head{\R^a} \cup \posbody{\R^a} \cup \negbody{\R^a}$ different from $p^\alpha(\t)$ {\bf do}
  \item[]\emph{\phantom{1}4.}\hs
      \hs {\bf if} $\alpha_i \neq \epsilon$ {\bf then}
  \item[]\emph{\phantom{1}5.}\hs
      \hs\hs $\R^\magica\ :=\ \magic{p_i^{\alpha_i}(\t_i)} \derives \magic{p^\alpha(\t)}$;
  \item[]\emph{\phantom{1}6.}\hs
      \hs\hs {\bf for each} atom $p_j(\t_j)$ in \posbody{\R} such that ${p_j(\t_j)} \SIPSprecR{p^\alpha(\t)} {p_i(\t_i)}$ {\bf do}
  \item[]\emph{\phantom{1}7.}\hs
      \hs\hs\hs add atom $p_j(\t_j)$ to \posbody{\R^\magica};
  \item[]\emph{\phantom{1}8.}\hs
      \hs\hs {\bf end for}
  \item[]\emph{\phantom{1}9.}\hs
      \hs\hs $R\ :=\ R \cup \{\R^\magica\}$;
  \item[]\emph{\phantom{}10.}\hs
      \hs {\bf end if}
  \item[]\emph{\phantom{}11.}\hs
      {\bf end for}
  \item[]\emph{\phantom{}12.}\hs
      {\bf return} $R$;
  \item[end.] \
  \end{description}
 }}
 \caption{Generate function}\label{fig:generate}
\end{figure}

\noindent \textbf{(2) Generation.} The algorithm uses the adorned rule $\R^a$ for generating and collecting the
magic rules in step \emph{6} (Figure~\ref{fig:DMS}).
More specifically, \textbf{\emph{Generate}}$(\R,{p^\alpha(\t)},\R^a)$
produces magic rules according to the following schema:
if $p_i^{\alpha_i}(\t_i)$ is an adorned atom
(i.e., $\alpha_i$ is not the empty string) occurring in $\R^a$ and
different from $p^\alpha(\t)$,
a magic rule $\R^\magica$ is generated such that
(i) $\head{\R^\magica} = \{\magic{p_i^{\alpha_i}(\t_i)}\}$ and
(ii) $\body{\R^\magica}$ is the union of $\{\magic{p^\alpha(\t)}\}$ and the
set of all the atoms ${q_j^{\beta_j}(\s_j)} \in \posbody{\R}$ such that
${q_j(\s_j)} \prec^{\alpha}_r {p_i(\t_i)}$.
Details of \textbf{\emph{Generate}}$(\R,{p^\alpha(\t)},\R^a)$ are reported in Figure~\ref{fig:generate}.

\vspace{-2mm}\begin{example}\label{running3}\em Continuing with our running example,
by invoking \textbf{\emph{Generate}}$(\R_3,{\tt sc^{\tt b}(C_1)},\R_{3,1}^a)$, the
following magic rule is produced:
\vspace{-2mm}\begin{dlvcode}
\R_{3,1}^\magica:\ \tt magic\_sc^{\tt b}(C_2) \derives magic\_sc^{\tt b}(C_1),\ produced\_by(P,C_1,C_2). \\
\end{dlvcode}
\vspace{-2mm}Similarly, by invoking \textbf{\emph{Generate}}$(\R_3,{\tt sc^{\tt b}(C_2)},\R_{3,2}^a)$,
the following magic rule is produced:
\vspace{-2mm}\begin{dlvcode}
\R_{3,2}^\magica:\ \tt magic\_sc^{\tt b}(C_1) \derives magic\_sc^{\tt b}(C_2),\ produced\_by(P,C_1,C_2). \\
\end{dlvcode}
\vspace{-2mm} Finally, the following magic rules are produced by
\textbf{\emph{Generate}}$(\R_4,{\tt sc^{\tt b}(C)},\R_4^a)$:
\vspace{-2mm}\begin{dlvcode}
\R_{4,1}^\magica:\ \tt magic\_sc^{\tt b}(C_1) \derives magic\_sc^{\tt b}(C),\ controlled\_by(C,C_1,C_2,C_3).\\
\R_{4,2}^\magica:\ \tt magic\_sc^{\tt b}(C_2) \derives magic\_sc^{\tt b}(C),\ controlled\_by(C,C_1,C_2,C_3).\\
\R_{4,3}^\magica:\ \tt magic\_sc^{\tt b}(C_3) \derives magic\_sc^{\tt b}(C),\ controlled\_by(C,C_1,C_2,C_3).\\
\end{dlvcode}
\vspace{-13mm}\hfill $\Box$
\end{example}

\noindent \textbf{(3) Modification.} In step \emph{7} the modified rules are generated and collected. The only difference
with respect to the $\dat$ case is that the adornments are stripped off the original atoms.
Specifically, given an adorned rule $\R^a$ associated with a rule $\R$,
a modified rule $\R'$ is obtained from $\R$ by adding to its body an
atom $\magic{p^\alpha(\t)}$ for each atom $p^\alpha(\t)$ occurring in
$\head{\R^a}$.
Hence, the function
\textbf{\emph{Modify}}$(\R,\R^a)$, reported in Figure~\ref{fig:modify}, constructs a rule $\R'$ of the form
\vspace{-2mm}\begin{dlvcode}
\R': \ p(\t)\,\Or\,p_1(\t_1)\,\Or\,\cdots\,\Or\,p_n(\t_n) \derives \magic{p^\alpha(\t)},
     \magic{p_1^{\alpha_1}(\t_1)}, \ldots, \\
     \quad\quad\ \
     \magic{p_n^{\alpha_n}(\t_n)},
     q_1(\s_1),\ldots,q_j(\s_j),
     \naf~q_{j+1}(\s_{j+1}),\ \ldots,\ \naf~q_m(\s_m).
\end{dlvcode}

\begin{figure}[t]
 \centering
 \fbox{\hspace{2mm}\parbox{0.88\textwidth}{\scriptsize
  \begin{description}
  \item[Function Modify($\R$, $\R^a$)]
  \item[Input:] \R: rule;\hs $\R^a$: adorned rule;
  \item[Output:] a modified rule;
  \item[var:] $\R'$: rule;
  \item[begin] \
  \item[]\emph{\phantom{1}1.}\hs
      $\R'\ :=\ \R$;
  \item[]\emph{\phantom{1}2.}\hs
      {\bf for each} atom $p^\alpha(\t)$ in $\head{\R^a}$ {\bf do}
  \item[]\emph{\phantom{1}3.}\hs
      \hs add \magic{p^\alpha(\t)} to $\posbody{\R'}$;
  \item[]\emph{\phantom{1}4.}\hs
      {\bf end for}
  \item[]\emph{\phantom{1}5.}\hs
      {\bf return} $\R'$;
  \item[end.] \
  \end{description}
 }}
 \caption{Modify function}\label{fig:modify}
\end{figure}

\noindent Finally, after all the adorned predicates have been processed, the algorithm outputs the program
$\DMS(\Q,\P)$.

\begin{example}\label{running4}\em
In our running example, we derive the
following set of modified rules:
\vspace{-2mm}{\small
\begin{dlvcode}
\begin{array}{ll}
\R_{3,1}':\ & \tt sc(C_1)\ \Or\ sc(C_2) \derives
          magic\_sc^{\tt b}(C_1),\ magic\_sc^{\tt b}(C_2),\\
          & \tt \phantom{sc(C_1)\ \Or\ sc(C_2) \derives} produced\_by(P,C_1,C_2). \\
\R_{3,2}':\ & \tt sc(C_2)\ \Or\ sc(C_1) \derives
          magic\_sc^{\tt b}(C_2),\ magic\_sc^{\tt b}(C_1),\\
          & \tt \phantom{sc(C_1)\ \Or\ sc(C_2) \derives} produced\_by(P,C_1,C_2). \\
\R_4':\ & \tt sc(C) \derives magic\_sc^{\tt b}(C),\
          controlled\_by(C,C_1,C_2,C_3),\\
          & \tt \phantom{sc(C) \derives} sc(C_1),\ sc(C_2),\ sc(C_3).
\end{array}
\end{dlvcode}}
\vspace{-2mm}\noindent Here, $\R_{3,1}'$ (resp. $\R_{3,2}'$, $\R_4'$)
is derived by adding magic predicates and
stripping off adornments for the rule $\R_{3,1}^a$ (resp. $\R_{3,2}^a$, $\R_4^a$).
Thus, the optimized program
$\DMS(\Q_{sc},\P_{sc})$ comprises the above modified rules as well as  the magic rules in
Example~\ref{running3}, and the magic seed $\tt magic\_sc^{\tt b}(c)$
(together with the original EDB). \hfill $\Box$
\end{example}

Before establishing the correctness of the technique, we briefly present an
example of the application of $\DMS$ on a program containing disjunction and
stratified negation.

\begin{example}\label{running5}\em
Let us consider a slight variant of the Strategic Companies problem described
in Example~\ref{running} in which we have to determine whether a given company
$\tt c$ does not belong to any strategic set. We can thus consider the query
$\tt nsc(c)$ for the program $\p_{nsc}$ obtained by adding to $\p_{sc}$
the following rule:
\vspace{-2mm}{\small
\begin{dlvcode}
\begin{array}{ll}
\R_{nsc}:\ & \tt nsc(C) \derives company(C),\ \naf~sc(C).
\end{array}
\end{dlvcode}}
where $\tt company$ is an EDB predicate. Company $\tt c$ does not belong to any
strategic set if the query is cautiously false.

In this case, processing the query produces the query seed $\tt magic\_nsc^{\tt b}(c)$
(a fact) and the adorned predicate $\tt nsc^{\tt b}$ (which is added to set $S$).
After that, $\tt nsc^{\tt b}$ is moved from $S$ to $D$ and rule $\R_{nsc}$ is considered.
Assuming the following SIP:
\[\begin{array}{c}
{\tt nsc(C)} \prec^{\tt nsc^{\tt b}(C)}_{\R_{nsc}} {\tt company(C)}
\qquad\qquad
{\tt nsc(C)} \prec^{\tt nsc^{\tt b}(C)}_{\R_{nsc}} {\tt sc(C)}\\
f^{\tt nsc^{\tt b}(C)}_{\R_{nsc}}({\tt nsc(C)}) = \{{\tt C}\}
\quad\quad
f^{\tt nsc^{\tt b}(C)}_{\R_{nsc}}({\tt company(P)}) =
f^{\tt nsc^{\tt b}(C)}_{\R_{nsc}}({\tt sc(C)}) = \emptyset
\end{array}
\]
by invoking \textbf{\emph{Adorn}}$(\R_{nsc},{\tt nsc^{\tt b}(C)},S,D)$ we obtain the
following adorned rule:
\vspace{-2mm}{\small
\begin{dlvcode}
\begin{array}{ll}
\R_{nsc}^a:\ & \tt nsc^{\tt b}(C) \derives company(C),\ \naf~sc^{\tt b}(C).
\end{array}
\end{dlvcode}}
The new adorned predicate $\tt sc^{\tt b}$ is added to $S$. Then,
\textbf{\emph{Generate}}$(\R_{nsc},{\tt nsc^{\tt b}(C)},\R_{nsc}^a)$ and
\textbf{\emph{Modify}}$(\R_{nsc},\R_{nsc}^a)$ produce the following magic
and modified rules:
\vspace{-2mm}{\small
\begin{dlvcode}
\begin{array}{ll}
\R_{nsc}^\magica:\ & \tt magic\_sc^{\tt b}(C) \derives magic\_nsc^{\tt b}(C).\\
\R_{nsc}':\ & \tt nsc(C) \derives magic\_nsc^{\tt b}(C),\ company(C),\ \naf~sc(C).
\end{array}
\end{dlvcode}}
The algorithm then processes the adorned atom $\tt sc^{\tt b}$.
Hence, if the SIPS presented in Example~\ref{running2} is assumed, the rewritten
program comprises the following rules:
$\R_{nsc}'$,
$\R_{3,1}'$,
$\R_{3,2}'$,
$\R_4'$,
$\R_{nsc}^\magica$,
$\R_{3,1}^\magica$,
$\R_{3,2}^\magica$,
$\R_{4,1}^\magica$,
$\R_{4,2}^\magica$ and
$\R_{4,3}^\magica$.  \hfill $\Box$
\end{example}

\subsection{Query Equivalence Result}\label{sec:teoria}

We conclude the presentation of the $\DMS$ algorithm by formally proving its correctness. 
We would like to point out that all of these results hold for any kind of SIPS, as long as it conforms to Definition~\ref{def:sip2}. Therefore, in the remainder of this section, we assume that any program comes with some associated SIPS.
In the proofs, we
use the well established notion of unfounded set for disjunctive $\dat$ programs (possibly with negation)
defined in \cite{leon-etal-97b}.
Before introducing unfounded sets, however, we have to define partial interpretations,
that is, interpretations for which some atoms may be undefined.

\begin{definition}[Partial Interpretation]\label{def:partial_interpretation}
Let $\p$ be a $\datdnr$ program. A {\em partial interpretation} for $\p$ is a pair
$\tuple{T,N}$ such that $T \subseteq N \subseteq \BP$. The atoms in $T$
are interpreted as true, while the atoms in $N$ are not false and those in $N \setminus T$ are undefined.
All other atoms are false.
\end{definition}

Note that total interpretations are a special case in which $T = N$.
We can then formalize the notion of unfounded set.

\begin{definition}[Unfounded Sets]
\label{def:unfoundedset} Let $\tuple{T,N}$ be a partial interpretation for a $\datdnr$ program $\p$, and $X \subseteq \BP$
be a set of  atoms. Then, $X$ is an \emph{unfounded set} for $\p$ with respect to $\tuple{T,N}$  if and only if, for each ground rule
$\R_g \in \ground{\p}$ with $X \cap \head{\R_g} \neq \emptyset$,
at least one of the following conditions holds:
$(1.a)$ $\posbody{\R_g} \not\subseteq N$;
$(1.b)$ $\negbody{\R_g} \cap T \neq \emptyset$;
$(2)$ $\posbody{\R_g} \cap X \neq \emptyset$;
$(3)$ $\head{\R_g} \cap (T \setminus X) \neq \emptyset$.
\end{definition}

Intuitively, conditions $(1.a)$, $(1.b)$ and $(3)$ check if the rule is satisfied by $\tuple{T,N}$ regardless of the atoms in $X$,
while condition $(2)$ checks whether the rule can be satisfied by taking the atoms in $X$ as false.

\begin{example}\label{ex:unfounded}\em
Consider again the program $\p_{sc}$ of Example~\ref{running} and assume
$\EDB{\p_{sc}} = \{{\tt produced\_by(p,c,c_1)}\}$.
Then $\ground{\p_{sc}}$ consists of the rule

\vspace{-3mm}\begin{dlvcode}
\R_{sc}: \ \tt sc(c) \Or sc(c_1) \derives produced\_by(p,c,c_1).
\end{dlvcode}
\vspace{-7mm}

\noindent
(together with facts, and rules having some ground instance of EDB predicate not occurring
in $\EDB{\p_{sc}}$, omitted for simplicity).
Consider now a partial interpretation $\tuple{M_{sc},B_{\p_{sc}}}$ such that $M_{sc} = \{{\tt produced\_by(p,c,c_1), sc(c)}\}$.
Thus, $\{{\tt sc(c_1)}\}$ is an unfounded set for $\p_{\sc}$ with respect to $\tuple{M_{sc},B_{\p_{sc}}}$
($\R_{sc}$ satisfies condition $(3)$ of Definition~\ref{def:unfoundedset}),
while $\{{\tt sc(c), sc(c_1)}\}$ is not ($\R_{sc}$ violates all conditions).
\hfill $\Box$
\end{example}

The following is an adaptation of Theorem~4.6 in \cite{leon-etal-97b} to our notation.

\begin{theorem}[\cite{leon-etal-97b}]\label{theo:unfounded}
Let $\tuple{T,N}$ be a partial interpretation for a $\datdnr$ program $\p$. Then, for any stable
model $M$ of $\p$ such that $T \subseteq M \subseteq N$, and for each unfounded set $X$ of $\p$ with respect to $\tuple{T,N}$,
$M \cap X = \emptyset$ holds.
\end{theorem}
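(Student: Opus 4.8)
The plan is to argue by contradiction, exploiting the defining property of stable models as subset-minimal models of the reduct. Suppose that $M$ is a stable model of $\p$ with $T \subseteq M \subseteq N$, that $X$ is an unfounded set for $\p$ with respect to $\tuple{T,N}$, and yet $M \cap X \neq \emptyset$. I would then set $M' := M \setminus X$ and show that $M'$ is still a model of the reduct $\ground{\p}^M$. Since $M \cap X \neq \emptyset$, the inclusion $M' \subsetneq M$ is strict, so the existence of such a model $M'$ contradicts the minimality of $M$ among the models of $\ground{\p}^M$. This contradiction yields $M \cap X = \emptyset$.

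To establish that $M'$ is a model of $\ground{\p}^M$, I would take an arbitrary rule of the reduct. By the definition of the reduct it arises from a ground rule $\R_g \in \ground{\p}$ with $\negbody{\R_g} \cap M = \emptyset$, and the corresponding reduct rule has the form $\head{\R_g} \derives \posbody{\R_g}$. Assume its body is satisfied by $M'$, i.e.\ $\posbody{\R_g} \subseteq M'$; I must exhibit a head atom in $M'$. The natural split is on whether $X$ meets the head. If $\head{\R_g} \cap X = \emptyset$, then from $\posbody{\R_g} \subseteq M' \subseteq M$ and the fact that $M$ satisfies the reduct, some head atom lies in $M$; since it cannot lie in $X$, it lies in $M \setminus X = M'$, as required.

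The more delicate case is $\head{\R_g} \cap X \neq \emptyset$, where the unfounded-set conditions enter. Because $X$ is unfounded with respect to $\tuple{T,N}$ and $\head{\R_g} \cap X \neq \emptyset$, one of the four conditions of Definition~\ref{def:unfoundedset} holds for $\R_g$. The crux is that the standing assumptions exclude three of them: condition $(1.a)$ fails because $\posbody{\R_g} \subseteq M' \subseteq N$; condition $(1.b)$ fails because $\negbody{\R_g} \cap T \subseteq \negbody{\R_g} \cap M = \emptyset$, the latter equality holding precisely because $\R_g$ survived into the reduct; and condition $(2)$ fails because $\posbody{\R_g} \subseteq M' = M \setminus X$ is disjoint from $X$. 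Hence condition $(3)$ must hold, providing an atom in $\head{\R_g} \cap (T \setminus X)$, which by $T \subseteq M$ lies in $M \setminus X = M'$. Thus $M'$ satisfies this rule as well.

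I expect the main obstacle to be bookkeeping rather than a deep difficulty: one must verify that exactly the conditions incompatible with a surviving reduct rule whose positive body is satisfied by $M'$ are ruled out, leaving $(3)$ as the sole survivor, which is precisely the condition delivering a true head atom outside $X$. The step requiring the most care is tracking the inclusions $T \subseteq M \subseteq N$ in the right direction: $T \subseteq M$ is what turns condition $(1.b)$ into a statement about $M$ (so that a surviving rule cannot satisfy it) and simultaneously turns condition $(3)$ into membership in $M'$, while $M \subseteq N$ is what neutralizes condition $(1.a)$. Once these are correctly aligned, the minimality contradiction closes the argument.
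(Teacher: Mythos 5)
Your proof is correct. There is, however, nothing in the paper to compare it against: the paper does not prove this statement at all, but imports it as an adaptation of Theorem~4.6 of \cite{leon-etal-97b}, so the theorem functions there purely as a cited black box. Your argument---assume $M \cap X \neq \emptyset$, pass to $M' = M \setminus X$, and show $M'$ still satisfies every rule of $\ground{\p}^{M}$, contradicting the minimality of $M$ among models of the reduct---is the standard route to this result and is essentially how it is established in the cited reference. The case analysis is handled correctly: when $\head{\R_g} \cap X = \emptyset$ the truth of the head under $M$ transfers to $M'$, and when $\head{\R_g} \cap X \neq \emptyset$ the three inclusions are used in exactly the right directions, namely $\posbody{\R_g} \subseteq M' \subseteq M \subseteq N$ kills condition $(1.a)$, $T \subseteq M$ together with $\negbody{\R_g} \cap M = \emptyset$ (survival into the reduct) kills $(1.b)$, and $\posbody{\R_g} \subseteq M \setminus X$ kills $(2)$, so that $(3)$ must hold and delivers a head atom in $T \setminus X \subseteq M'$.
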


\begin{example}\label{ex:unfoundAndAnswerSet}\em
In Example~\ref{ex:unfounded}, we have shown that $\{{\tt sc(c_1)}\}$ is an
unfounded set for $\p_{\sc}$ with respect to $\tuple{M_{sc},B_{\p_{sc}}}$.
Note that the total interpretation $M_{sc}$ is a stable model of $\p_{sc}$,
and that the unfounded set $\{{\tt sc(c_1)}\}$ is disjoint from
$M_{sc}$. \hfill $\Box$
\end{example}

Equipped with these notions and Theorem~\ref{theo:unfounded}, we now proceed to prove the correctness of the $\DMS$ strategy. In
particular, we shall first show that the method is \emph{sound} in that, for each stable model $M$ of $\dmsqp$,
there is a stable model $M'$ of $\p$ such that $M'|_\Q = M|_\Q$ (i.e., the two models coincide when restricted
to the query). Then, we prove that the method is also \emph{complete}, i.e., for each stable model $M'$ of
$\p$, there is a stable model $M$ of $\dmsqp$ such that $M'|_\Q = M|_\Q$.

In both parts of the proof, we shall exploit the following (syntactic) relationship between the original program
and the transformed one.

\begin{lemma}\label{lem:mappingGroundNonground}
Let $\p$ be a $\datdn$ program, $\Q$ a query, and let $\magic{{p^\alpha(\t)}}$ be a ground atom%
\footnote{Note that in this way the lemma refers only to rules that contain a head atom for which
a magic predicate has been generated during the transformation.}
in $\ensuremath{B_{\dmsqp}}$ (the base of the transformed program). Then the ground rule

\begin{dlvcode}
\R_g: \ p(\t)\,\Or\,p_1(\t_1)\,\Or\,\cdots\,\Or\,p_n(\t_n) \derives
    q_1(\s_1),\ \ldots,\ q_j(\s_j), \\
    \quad\quad\quad\quad\quad\quad\quad\quad\quad\quad\quad\ \
    \naf~q_{j+1}(\s_{j+1}),\ \ldots,\ \naf~q_m(\s_m).
\end{dlvcode}

\vspace{-6mm}belongs to $\ground{\p}$ if and only if the ground rule

\vspace{-3mm}\begin{dlvcode}
\R_g': \ p(\t)\,\Or\,p_1(\t_1)\,\Or\,\cdots\,\Or\,p_n(\t_n) \derives \magic{p^\alpha(\t)},
     \magic{p_1^{\alpha_1}(\t_1)}, \ldots, \\
     \quad\quad\ \
     \magic{p_n^{\alpha_n}(\t_n)},
     q_1(\s_1),\ldots,q_j(\s_j),
     \naf~q_{j+1}(\s_{j+1}),\ \ldots,\ \naf~q_m(\s_m).
\end{dlvcode}

\vspace{-5mm}belongs to $\ground{\dmsqp}$.
\end{lemma}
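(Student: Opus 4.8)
The plan is to prove the two directions by tracing the $\DMS$ algorithm on the non-ground rule $\R$ of which $\R_g$ (resp.\ $\R_g'$) is an instance, relying on the fact that \textbf{\emph{Modify}} preserves the original head and body atoms of $\R$ verbatim and merely appends magic atoms to the body. I would first record a preliminary observation: $\DMS$ introduces no new constants, since every constant occurring in the magic rules, the modified rules, or $\EDB{\p}$ comes from $\p$ or from $\Q$, and by assumption every constant of $\Q$ already occurs in $\p$. Hence $U_{\dmsqp} = \UP$, so for a fixed rule template the ground instances in $\ground{\p}$ and in $\ground{\dmsqp}$ correspond via the very same substitutions over the same universe; the whole argument then reduces to matching rule templates and applying grounding uniformly.

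For the forward direction, assume $\R_g \in \ground{\p}$, say $\R_g = \R\vartheta$ for some $\R \in \p$ whose designated head atom has predicate $p$. The hypothesis that $\magic{p^\alpha(\t)} \in B_{\dmsqp}$ means the predicate symbol $magic\_p^\alpha$ occurs in $\dmsqp$, and by inspection of the pseudocode this can happen only if $p^\alpha$ was inserted into $S$ (either by \textbf{\emph{BuildQuerySeed}} or in line~19 of \textbf{\emph{Adorn}}) and thus eventually moved to $D$ and processed in the main loop. When $p^\alpha$ is processed, step~4 selects $\R$ (because $p(\cdot)\in\HR$) with the occurrence under consideration as driver, step~5 calls \textbf{\emph{Adorn}}$(\R,p^\alpha(\t),S,D)$ producing an adorned rule $\R^a$ whose head adornments are $\alpha,\alpha_1,\dots,\alpha_n$, and step~7 calls \textbf{\emph{Modify}}$(\R,\R^a)$. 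By construction \textbf{\emph{Modify}} returns the rule $\R'$ obtained from $\R$ by adding to its body exactly the atoms $\magic{p^\alpha(\t)},\magic{p_1^{\alpha_1}(\t_1)},\dots,\magic{p_n^{\alpha_n}(\t_n)}$, one per head atom of $\R^a$, while leaving all original atoms unadorned. Applying $\vartheta$ then yields $\R'\vartheta=\R_g'$, and since $\R'$ is a modified rule of $\dmsqp$, we conclude $\R_g'\in\ground{\dmsqp}$.

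For the converse, assume $\R_g'\in\ground{\dmsqp}$. Among the three components of $\dmsqp$, the only rules whose head is a disjunction of non-magic (IDB) atoms are the modified rules: magic rules carry a single magic atom in the head, and $\EDB{\p}$ consists of facts over EDB predicates. Since $p$ is adorned it is an IDB predicate, so $\R_g'$ must be an instance $\R'\vartheta$ of a modified rule $\R'$ returned by \textbf{\emph{Modify}}$(\R,\R^a)$ for some $\R\in\p$. Deleting the magic atoms from the body of $\R'$ recovers precisely $\R$; correspondingly, deleting the ground magic atoms $\magic{p^\alpha(\t)},\dots,\magic{p_n^{\alpha_n}(\t_n)}$ from $\R_g'$ yields $\R\vartheta=\R_g$, whence $\R_g\in\ground{\p}$.

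I expect the delicate points to be bookkeeping rather than conceptual. The first is justifying rigorously that $\magic{p^\alpha(\t)}\in B_{\dmsqp}$ forces $p^\alpha$ to have been processed, which calls for a short inspection of every place where a magic predicate can be created by the algorithm. The second is confirming that the adornments $\alpha_1,\dots,\alpha_n$ decorating the non-driver head atoms in the statement coincide with those that \textbf{\emph{Adorn}} computes for this particular driver $p^\alpha(\t)$, so that the magic atoms appended by \textbf{\emph{Modify}} are exactly the ones displayed in $\R_g'$. Both are settled by reading off the pseudocode of \textbf{\emph{Adorn}} and \textbf{\emph{Modify}}, and neither affects the overall shape of the correspondence.
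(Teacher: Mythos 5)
Your proposal is correct and follows essentially the same route as the paper's own proof: both directions are settled by tracing the $\DMS$ construction to match the non-ground templates (the modified rule $\R'$ obtained via \textbf{\emph{Adorn}}/\textbf{\emph{Modify}} from $\R$, and vice versa by stripping the magic atoms) and then reusing the very same substitution $\vartheta$, which works because the appended magic atoms only use variables already occurring in the head atoms. Your extra bookkeeping (that $\DMS$ adds no new constants, so the two programs share the same universe, and that only modified rules can have non-magic disjunctive heads) just makes explicit what the paper leaves implicit.
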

\begin{proof}
$(\Rightarrow)$ Consider the following rule $\R\in \p$ such that $\R_g=\R\vartheta$ for some substitution $\vartheta$:

\begin{dlvcode}
\R: \ p(\t')\,\Or\,p_1(\t_1')\,\Or\,\cdots\,\Or\,p_n(\t_n') \derives
    q_1(\s_1'),\ \ldots,\ q_j(\s_j'), \\
    \quad\quad\quad\quad\quad\quad\quad\quad\quad\quad\quad\ \
    \naf~q_{j+1}(\s_{j+1}'),\ \ldots,\ \naf~q_m(\s_m').
\end{dlvcode}

Since $\magic{{p^\alpha(\t)}}$ is a ground atom in $\ensuremath{B_{\dmsqp}}$, $p^{\alpha}$ has been
inserted in the set $S$ at some point of the Magic Set transformation, and it has eventually been used to adorn and
modify $\R$, thereby producing the following rule $\R'\in\dmsqp$:

\begin{dlvcode}
\R': \ p(\t')\,\Or\,p_1(\t_1')\,\Or\,\cdots\,\Or\,p_n(\t_n') \derives \magic{p^\alpha(\t')},
     \magic{p_1^{\alpha_1}(\t_1')}, \ldots, \\
     \quad\quad\ \
     \magic{p_n^{\alpha_n}(\t_n')},
     q_1(\s_1'),\ldots,q_j(\s_j'),
     \naf~q_{j+1}(\s_{j+1}'),\ \ldots,\ \naf~q_m(\s_m').
\end{dlvcode}

Clearly enough, the substitution $\vartheta$ mapping $\R$ into $\R_g$ can also be used to map $\R'$ into $\R_g'$, since the
magic atoms added into the positive body of $\R'$ are defined over a subset of the variables occurring in head atoms.

$(\Leftarrow)$ Let $\R' \in \dmsqp$ be a rule such that $\R_g'=\R'\vartheta$ for some substitution $\vartheta$:

\begin{dlvcode}
\R': \ p(\t')\,\Or\,p_1(\t_1')\,\Or\,\cdots\,\Or\,p_n(\t_n') \derives \magic{p^\alpha(\t')},
     \magic{p_1^{\alpha_1}(\t_1')}, \ldots, \\
     \quad\quad\ \
     \magic{p_n^{\alpha_n}(\t_n')},
     q_1(\s_1'),\ldots,q_j(\s_j'),
     \naf~q_{j+1}(\s_{j+1}'),\ \ldots,\ \naf~q_m(\s_m').
\end{dlvcode}

By the construction of $\dmsqp$, $\R'$ is a modified rule produced by adding some magic atom
to the positive body of a rule $\R \in \p$ of the form:

\begin{dlvcode}
\R: \ p(\t')\,\Or\,p_1(\t_1')\,\Or\,\cdots\,\Or\,p_n(\t_n') \derives
    q_1(\s_1'),\ \ldots,\ q_j(\s_j'), \\
    \quad\quad\quad\quad\quad\quad\quad\quad\quad\quad\quad\ \
    \naf~q_{j+1}(\s_{j+1}'),\ \ldots,\ \naf~q_m(\s_m').
\end{dlvcode}

Thus, the substitution $\vartheta$ mapping $\R'$ to $\R_g'$ can also be used to map $\R$ to $\R_g$, since $\R$ and $\R'$
have the same variables.
\end{proof}

\subsubsection{Soundness of the Magic Set Method}
\label{sec:soundness}

Let us now start with the first part of the proof, in particular, by stating some further definitions and
notations. Given a model $M'$ of $\dmsqp$, and a model $N' \subseteq M'$ of $\ground{\dmsqp}^{M'}$,
we next define the set of atoms which are relevant for $\Q$ but are false with respect to $N'$.

\begin{definition}[Killed Atoms]
\label{def:killed} Given a model $M'$ for $\dmsqp$, and a model $N' \subseteq M'$ of $\ground{\dmsqp}^{M'}$,
the set $\killedmpnp$ of the \emph{killed atoms}
with respect to $M'$ and $N'$ is defined as:
$$
\begin{array}{ll}
\{{k(\t)} \in \BP \setminus N' \ | & \mbox{ either }\, {k}\, \mbox{ is an EDB predicate, or }\\
& \mbox{ there is a binding } \alpha \mbox{ such that } \magic{{k^\alpha(\t)}} \in N' \}.
\end{array}
$$
\end{definition}
\vspace{-1em}

\begin{example}\label{ex:killed}\em
We consider the program $\DMS(\Q_{sc},\p_{sc})$ presented in Section~\ref{sec:dms}
(we recall that $\Q_{sc} = \tt sc(c)$), the EDB $\{{\tt produced\_by(p,c,c_1)}\}$
introduced in Example~\ref{ex:unfounded},
and a stable model $M_{sc}' = \{{\tt produced\_by(p,c,c_1), sc(c)}, {\tt magic\_sc^{\tt b}(c), magic\_sc^{\tt b}(c_1)}\}$ for $\DMS(\Q_{sc},\p_{sc})$.
Thus, $\ground{\DMS(\Q_{sc},\p_{sc})}^{M_{sc}}$ consists of the following rules:

\vspace{-3mm}\begin{dlvcode}
\tt magic\_sc^{\tt b}(c). \quad\quad magic\_sc^{\tt b}(c_1) \derives magic\_sc^{\tt b}(c).\\
\tt sc(c) \Or sc(c_1) \derives magic\_sc^{\tt b}(c),\ magic\_sc^{\tt b}(c_1),\ produced\_by(p,c,c_1).
\end{dlvcode}
\vspace{-7mm}

\noindent
Since $M_{sc}'$ is also a model of the program above, we can compute
$\killed{M_{sc}'}{M_{sc}'}{\Q_{sc}}{\p_{sc}}$ and check that $\tt sc(c_1)$
belongs to it because of $\tt magic\_sc^{\tt b}(c_1)$ in $M_{sc}'$.
Note that, by definition, also false ground instances of EDB predicates like
$\tt produced\_by(p,c_1,c)$ or $\tt controlled\_by(c,c_1,c_1,c_1)$ belong
to $\killed{M_{sc}'}{M_{sc}'}{\Q_{sc}}{\p_{sc}}$.
Moreover, note that no other atom belongs to this set.
\hfill $\Box$
\end{example}

The intuition underlying the definition above is that killed atoms are either
false ground instances of some EDB predicate, or false atoms which are relevant
with respect to $\Q$ (for there exists an associated magic atom in the model $N'$);
since $N'$ is a model of $\ground{\dmsqp}^{M'}$ contained in $M'$,
we expect that these atoms are also false in any stable model for $\p$ containing $\Mpp$
(which, we recall here, is the model $M'$ restricted on the atoms originally occurring in $\P$).

\begin{example}\em
Let us resume from Example~\ref{ex:killed}.
We have that $M_{sc}'|_{\p_{sc}} = \{{\tt produced\_by(p,c,c_1), sc(c)}\}$, which
coincides with model $M_{sc}$ of Example~\ref{ex:unfounded}. Hence,
we already know that $\{{\tt sc(c_1)}\}$ is an unfounded set for $\p_{sc}$
with respect to $\tuple{M_{sc},B_{\p_{sc}}}$.
Since each other atom $k(\t)$ in $\killed{M_{sc}'}{M_{sc}'}{\Q_{sc}}{\p_{sc}}$ is
such that $k$ is an EDB predicate,
we also have that $\killed{M_{sc}'}{M_{sc}'}{\Q_{sc}}{\p_{sc}}$
is an unfounded set for $\p_{sc}$ with respect to $\tuple{M_{sc},B_{\p_{sc}}}$.
Therefore, as a consequence of Theorem~\ref{theo:unfounded},
each stable model $M$ of $\p_{sc}$ such that $M_{sc} \subseteq M \subseteq B_{\p_{sc}}$
(in this case only $M_{sc}$ itself) is disjoint from
$\killed{M_{sc}'}{M_{sc}'}{\Q_{sc}}{\p_{sc}}$.
\hfill $\Box$
\end{example}

This intuition is formalized below.


\begin{proposition}
\label{prop:killed_unfounded} Let $M'$ be a model for $\dmsqp$,
and $N' \subseteq M'$ be a model of $\ground{\dmsqp}^{M'}$. Then, $\killedmpnp$ is an unfounded set
for $\p$ with respect to $\tuple{\Mpp,\BP}$.
\end{proposition}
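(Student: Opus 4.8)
The plan is to argue by contradiction, exploiting the syntactic correspondence of Lemma~\ref{lem:mappingGroundNonground} together with the shape of the magic rules produced by \textbf{\emph{Generate}}. Write $X = \killedmpnp$; I must verify that $X$ is unfounded with respect to $\tuple{\Mpp,\BP}$, i.e.\ with $T=\Mpp$ and $N=\BP$ in Definition~\ref{def:unfoundedset}. A first useful observation is that condition $(1.a)$ can never apply here, since $\posbody{\R_g}\subseteq\BP=N$ for every ground rule $\R_g\in\ground{\p}$; hence I will only try to establish one of $(1.b)$, $(2)$, $(3)$. So I would fix an arbitrary $\R_g\in\ground{\p}$ with $\head{\R_g}\cap X\neq\emptyset$ and assume, for contradiction, that none of $(1.b)$, $(2)$, $(3)$ holds. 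Spelled out, these assumptions state: $\negbody{\R_g}\cap\Mpp=\emptyset$; no positive body atom lies in $X$; and every head atom lying in $\Mpp$ also lies in $X$.

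Next I would extract a distinguished head atom. Pick $p(\t)\in\head{\R_g}\cap X$. Then $p$ must be an IDB predicate: an EDB head atom could only arise from a fact of $\EDB{\p}\subseteq\dmsqp$, which the reduct model $N'$ must satisfy, so such an atom would be in $N'$ and could not be killed. Consequently, by the definition of killed atoms, there is a binding $\alpha$ with $\magic{p^\alpha(\t)}\in N'\subseteq B_{\dmsqp}$. Since $\R_g\in\ground{\p}$ and $\magic{p^\alpha(\t)}$ is a ground atom of the transformed base, Lemma~\ref{lem:mappingGroundNonground} yields the corresponding modified ground rule $\R_g'\in\ground{\dmsqp}$, whose positive body contains the magic atoms $\magic{p^\alpha(\t)},\magic{p_1^{\alpha_1}(\t_1)},\ldots,\magic{p_n^{\alpha_n}(\t_n)}$ of all head atoms together with $q_1(\s_1),\ldots,q_j(\s_j)$, and whose negative body coincides with that of $\R_g$. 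Because negative body atoms belong to $\BP$, the first assumption gives $\negbody{\R_g'}\cap M'=\negbody{\R_g}\cap\Mpp=\emptyset$, so $\R_g'$ survives the reduct and its negation-free image belongs to $\ground{\dmsqp}^{M'}$.

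The technical heart, and the step I expect to be the main obstacle, is to show that the \emph{entire} positive body of this reduct rule lies in $N'$. I would prove this by induction along the partial order $\SIPSprecR{p^\alpha(\t)}$, simultaneously establishing, for every atom in the rule, that (i) each positive body atom $q_i(\s_i)$ is in $N'$, and (ii) each head-atom magic atom $\magic{p_l^{\alpha_l}(\t_l)}$ is in $N'$; the negative body atoms are not needed, being disposed of by the first assumption. The key tool is that \textbf{\emph{Generate}} produces, for each such adorned atom, a magic rule whose body is $\magic{p^\alpha(\t)}$ together with exactly those positive body atoms that $\SIPSprecR{p^\alpha(\t)}$-precede it; here the restrictions of Definition~\ref{def:sip2} (head atoms other than $p(\t)$ and negative literals precede nothing) are what keep these bodies confined to $\magic{p^\alpha(\t)}$ and positive body atoms, making the induction well-founded. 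Since magic rules are negation-free, they appear unchanged in the reduct; processing atoms in $\SIPSprecR{p^\alpha(\t)}$-order, the inductive hypothesis places all preceding positive body atoms in $N'$, so each magic rule fires and puts its magic head atom in $N'$. For a positive body atom $q_i(\s_i)$ I then argue $q_i(\s_i)\in N'$: otherwise $q_i(\s_i)\in\BP\setminus N'$, and either $q_i$ is EDB, or, being IDB, its magic atom $\magic{q_i^{\beta_i}(\s_i)}$ has just been shown to lie in $N'$; in either case $q_i(\s_i)\in X$, contradicting the second assumption.

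Finally I would close the argument. Having established that the positive body of the reduct rule is contained in $N'$, and since $N'$ models $\ground{\dmsqp}^{M'}$, some head atom $h\in\head{\R_g}$ must lie in $N'$. As $N'\subseteq M'$ and $h\in\BP$, we obtain $h\in\Mpp$. By the third assumption, $h\in\Mpp$ forces $h\in X$, whence $h\in\BP\setminus N'$, i.e.\ $h\notin N'$ -- contradicting $h\in N'$. This contradiction shows that one of $(1.b)$, $(2)$, $(3)$ must hold for $\R_g$; since $\R_g$ was an arbitrary ground rule meeting $X$ in its head, $X$ is an unfounded set for $\p$ with respect to $\tuple{\Mpp,\BP}$, as claimed.
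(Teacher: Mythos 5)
Your proof is correct, and it rests on the same essential machinery as the paper's: Lemma~\ref{lem:mappingGroundNonground} to pass from $\R_g$ to its modified counterpart $\R_g'$, the shape of the rules built by \textbf{\emph{Generate}} (the triggering magic atom plus exactly the $\SIPSprecR{p^\alpha(\t)}$-preceding positive body atoms), and Definition~\ref{def:killed} to turn ``in $\BP\setminus N'$ and magic-supported (or EDB)'' into membership in $\killedmpnp$. The difference is organizational. The paper argues directly: since $M'$ is a model of $\dmsqp$, the rule $\R_g'$ is satisfied by $M'$ in one of three ways (scenarios {\bf (S1)}--{\bf (S3)}: negative body false, positive body false, head true), and each scenario is shown to entail condition $(1.b)$, $(2)$, or $(2)$/$(3)$ of Definition~\ref{def:unfoundedset}, using a $\SIPSprecR{p^\alpha(\t)}$-minimal element of $\posbody{\R_g'}|_{\BP}\setminus N'$ where you run an induction along the order --- these two devices are interchangeable. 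You instead negate all three conditions simultaneously and derive a single global contradiction. Your organization buys two small things: it never invokes the hypothesis that $M'$ satisfies $\dmsqp$, only that $N'$ is a model of $\ground{\dmsqp}^{M'}$, so it establishes a marginally more general fact; and it makes explicit why a killed EDB atom cannot head a rule of $\ground{\p}$ (such a rule would be a fact of $\EDB{\p}$, hence true in $N'$), a point the paper passes over silently when it simply asserts the existence of the binding $\alpha$. The paper's case analysis, in return, is more informative, since it pinpoints which unfounded-set condition holds in which situation rather than merely refuting their joint failure. One harmless imprecision in your write-up: the confinement of magic-rule bodies to $\magic{p^\alpha(\t)}$ and positive body atoms is guaranteed by the construction of \textbf{\emph{Generate}} itself (its loop ranges over $\posbody{\R}$ only), not by condition \emph{(1.b)} of Definition~\ref{def:sip2}; the latter is what makes the SIPS sensible, but your induction would be well-founded either way.
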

\begin{proof}
According to Definition~\ref{def:unfoundedset} of unfounded sets (for $\p$ with respect to $\tuple{\Mpp,\BP}$),
given any rule $\R_g$ in $\ground{\p}$ of the form

\begin{dlvcode}
\R_g: \ k(\t)\,\Or\,p_1(\t_1)\,\Or\,\cdots\,\Or\,p_n(\t_n) \derives
    q_1(\s_1),\ \ldots,\ q_j(\s_j), \\
    \quad\quad\quad\quad\quad\quad\quad\quad\quad\quad\quad\ \
    \naf~q_{j+1}(\s_{j+1}),\ \ldots,\ \naf~q_m(\s_m).
\end{dlvcode}

\vspace{-3mm}we have to show that if ${k(\t)} \in \killedmpnp\cap \head{\R_g}$, then at least one of the following conditions holds: $(1.a)$ $\posbody{\R_g}
\not\subseteq \BP$; 
$(1.b)$ $\negbody{\R_g} \cap \Mpp \neq \emptyset$;
$(2)$ $\posbody{\R_g} \cap \killedmpnp \neq \emptyset$; $(3)$ $\head{\R_g} \cap (\Mpp
\setminus \killedmpnp) \neq \emptyset$.

Note that the properties above refer to the original program $\p$. However, our hypothesis is
formulated over the transformed one $\dmsqp$ (for instance, we know that $M'$ is a model of $\dmsqp$). The line of the proof
is then to analyze $\dmsqp$ in the light of its syntactic relationships with $\p$ established via
Lemma~\ref{lem:mappingGroundNonground}. In particular, recall first that, by Definition~\ref{def:killed}, there is a binding $\alpha$ such that $\magic{{\tt
k^\alpha(\t)}} \in N'$ (and, hence, $\magic{{k^\alpha(\t)}}$ is a ground atom in $\ensuremath{B_{\dmsqp}}$).
Thus, we can apply Lemma~\ref{lem:mappingGroundNonground} and conclude the existence of a ground rule
$\R_g'\in\ground{\dmsqp}$ such that:

\begin{dlvcode}
\R_g': \ k(\t)\,\Or\,p_1(\t_1)\,\Or\,\cdots\,\Or\,p_n(\t_n) \derives \magic{k^\alpha(\t)},
     \magic{p_1^{\alpha_1}(\t_1)}, \ldots, \\
     \quad\quad\ \
     \magic{p_n^{\alpha_n}(\t_n)},
     q_1(\s_1),\ldots,q_j(\s_j),
     \naf~q_{j+1}(\s_{j+1}),\ \ldots,\ \naf~q_m(\s_m).
\end{dlvcode}

Since $M'$ is a model of $\dmsqp$, the proof is just based on analyzing the following three scenarios that exhaustively cover all possibilities (concerning
the fact that the rule $\R_g'$ is satisfied by $M'$):

\vspace{-4mm}
\begin{description}
  \item[(S1)] $\negbody{\R_g'} \cap M' \neq \emptyset$, i.e., the negative body of $\R_g'$ is false with respect to $M'$;
  \item[(S2)] $\posbody{\R_g'} \not\subseteq M'$, i.e., the positive body of $\R_g'$ is false with respect to $M'$;
  \item[(S3)] $\negbody{\R_g'} \cap M' = \emptyset$, $\posbody{\R_g'} \subseteq M'$, and $\head{\R_g'} \cap M' \neq \emptyset$, i.e., none of the previous cases holds,
  and hence the head of $\R_g'$ is true with respect to $M'$.
\end{description}

In the remaining, we shall show that {\bf (S1)} implies condition $(1.b)$, {\bf (S2)} implies condition (2), and {\bf (S3)} implies either (2) or (3). In fact, note that
condition $(1.a)$ cannot hold.

\begin{description}
\item[(S1)] Assume that $\negbody{\R_g'} \cap M' \neq \emptyset$. Since $\negbody{\R_g} = \negbody{\R_g'}$ and
$\negbody{\R_g} \subseteq \BP$,
from $\negbody{\R_g'} \cap M' \neq \emptyset$
we immediately conclude $\negbody{\R_g} \cap \Mpp \neq \emptyset$,
i.e., $(1.b)$ holds.

\vspace{2mm}
\item[(S2)] Assume that $\posbody{\R_g'} \not\subseteq M'$, and let $\R' \in \dmsqp$ be a modified rule such that $\R'_g = \R'\vartheta$
for some substitution $\vartheta$:

\begin{dlvcode}
\hspace{-5mm}
\R': \ k(\t')\,\Or\,p_1(\t_1')\,\Or\,\cdots\,\Or\,p_n(\t_n') \derives \magic{k^\alpha(\t')},
     \magic{p_1^{\alpha_1}(\t_1')}, \ldots, \\
\hspace{-5mm}
     \quad\quad\ \
     \magic{p_n^{\alpha_n}(\t_n')},
     q_1(\s_1'),\ldots,q_j(\s_j'),
     \naf~q_{j+1}(\s_{j+1}'),\ \ldots,\ \naf~q_m(\s_m').
\end{dlvcode}

\noindent
We first claim that $\posbody{\R'_g}|_{\BP} \not\subseteq N'$ must hold in this case. To prove the claim, observe that
during the {\em Generation} step preceding the production of $\R'$,
a magic rule $\R^\magica_{i}$
such that $\head{\R^\magica_{i}} = \{\magic{p_i^{\alpha_i}(\t_i')}\}$
and $\posbody{\R^\magica_{i}} \subseteq \{\magic{k^\alpha(\t')},$ ${q_1(\s_1'),\ldots,q_j(\s_j')}\}$
has been produced
for each $1\leq i\leq n$ (we recall that magic rules have empty negative bodies).
Hence,
since the variables of $\R^\magica_{i}$ are a subset
of the variables of $\R'$,
by applying the substitution $\vartheta$ to
$\R^\magica_{i}$ we obtain a ground rule $\R^\magica_{{i},g}$
such that $\head{\R^\magica_{{i},g}}=\{\magic{p_i^{\alpha_i}(\t_i)}\}$ and
$\posbody{\R^\magica_{{i},g}}\subseteq \{\magic{{k^\alpha(\t)}},$
${q_1(\s_1),\ldots,q_j(\s_j)} \}= \{\magic{{k^\alpha(\t)}}\} \cup \posbody{r_g'}|_{\BP}$.
Thus, if $\posbody{\R'_g}|_{\BP} \subseteq N'$, from the above magic rules
and since $N'$ is a model containing ${\magic{k^\alpha(\t)}}$ by assumption, then
we would conclude that $\posbody{\R'_g} \subseteq N'$. However,
this is impossible, since $N' \subseteq M'$ and $\posbody{\R'_g} \not\subseteq M'$ imply $\posbody{\R'_g} \not\subseteq N'$.

Now, $\posbody{\R'_g}|_{\BP} \not\subseteq N'$ implies the existence of an atom ${q_i(\s_i)} \in \posbody{\R_g'}|_{\BP}$ such that ${q_i(\s_i)} \not\in N'$, that is,
${q_i(\s_i)}\in \BP\setminus N'$.
In particular, we can assume w.l.o.g. that, for any ${q(\s)} \in \posbody{\R_g'}|_{\BP}$ with ${q(\s')} \SIPSprecR{k^\alpha(\t')} {q_i(\s_i')}$, it is the case that
${q(\s)} \in N'$, where $\R$ is the rule in $\p$ from which the modified rule $\R'$ has been generated (just take a $\SIPSprecR{ k^\alpha(\t')}$-minimum element in $\posbody{\R'_g}|_{\BP}\setminus N'$).
If $q_i$ is an EDB predicate, the atom $q_i(\s_i)$ belongs to $\killedmpnp$ by the definition of killed atoms.
Otherwise, $q_i$ is an IDB predicate.
In this case, there is a magic rule $\R_{i}^\magica$, produced during the
{\em Generation} step preceding the production of $\R'$, such that
$\head{\R_{i}^\magica} = \{{\magic{q_i^{\beta_i}(\s_i')}}\}$ and
$\body{\R_{i}^\magica} = \{{\magic{k^\alpha(\t')}}\} \cup \{{q(\s')} \in \posbody{\R} \mid {q(\s')} \SIPSprecR{k^\alpha(\t')} {q_i(\s_i')}\}$.
Thus, $\R_{{i},g}^\magica = \R_{i}^\magica\vartheta$ belongs to $\ground{\dmsqp}$.
In particular, $\posbody{\R_{{i},g}^\magica} \subseteq N'$ holds because $\magic{k^\alpha(\t)}$ belongs to $N'$ and by the properties of $q_i(\s_i)$.
Therefore, since $N'$ is a model of $\ground{\dmsqp}^{M'}$, $\magic{q_i^{\beta_i}(\s_i)}$ belongs to $N'$, from which ${q_i(\s_i)} \in \killedmpnp$ follows from the definition of killed atoms.
Thus, independently of the type (EDB, IDB) of $q_i$, (2) holds.

\vspace{2mm}
\item[(S3)] Assume that $\posbody{\R'_g} \subseteq M'$,
$\negbody{\R'_g} \cap M' = \emptyset$, and $\head{\R_g'} \cap M' \neq \emptyset$.
First, observe that from $\negbody{\R'_g} \cap M' = \emptyset$ we can conclude that there is a rule
in $\ground{\dmsqp}^{M'}$ obtained from $\R'_g$ by removing its negative body literals.
Consider now the rules $\R^\magica_{{i},g}$ produced during the {\em Generation} step,
for each $1 \leq i \leq n$ (as in {\bf (S2)}). We distinguish two cases.

If $\{{q_1(\s_1),\ldots,q_j(\s_j)}\} \subseteq N'$,
since $\magic{{k^\alpha(\t)}} \in N'$, we can conclude that
$\posbody{\R^\magica_{{i},g}} \subseteq N'$, for each $1 \leq i \leq n$.
Moreover, since $N'$ is a model of $\ground{\dmsqp}^{M'}$,
the latter implies that $\magic{{p_i^{\alpha_i}(\t_i)}} \in N'$,
for each $1 \leq i \leq n$.
Then $\posbody{\R'_g} \subseteq N'$ holds, and so $\head{\R'_g} \cap N' \neq \emptyset$
(because $N'$ is a model of $\ground{\dmsqp}^{M'}$).
We now observe that $\head{\R_g'} \cap (\Mpp \setminus\ \killedmpnp) \neq \emptyset$
is equivalent to $(\head{\R_g'} \cap \Mpp) \setminus \killedmpnp \neq \emptyset$.
Moreover, the latter is equivalent to $(\head{\R_g} \cap M') \setminus \killedmpnp \neq \emptyset$
because $\head{\R_g'}$ contains only standard atoms and $\head{\R_g'} = \head{\R_g}$.
In addition, from $N' \subseteq M'$ we conclude $\head{\R_g} \cap N' \subseteq \head{\R_g} \cap M'$,
and by Definition~\ref{def:killed}, $N' \cap \killedmpnp = \emptyset$ holds.
Hence, $(\head{\R_g} \cap M') \setminus \killedmpnp \supseteq \head{\R_g} \cap N'$,
which is not empty, and so condition $(3)$ holds.

Otherwise, $\{{q_1(\s_1),\ldots,q_j(\s_j)}\} \not\subseteq N'$.
Let $i \in \{1, \ldots, j\}$ be such that ${q_i(\s_i)} \not\in N'$ and,
for any ${q(\s)} \in \posbody{\R_g'}|_{\BP}$, ${q(\s')} \SIPSprecR{k^\alpha(\t')} {q_i(\s_i')}$ implies ${q(\s)} \in N'$
(where $\R$ is the rule in $\p$ from which the modified rule $\R'$ has been generated).
If $q_i$ is an EDB predicate, the atom $q_i(\s_i)$ belongs to $\killedmpnp$ by the definition of killed atoms.
Otherwise, $q_i$ is an IDB predicate and there is a magic rule $\R_{{i},g}^\magica \in \ground{\dmsqp}$ having an atom $\magic{q_i^{\beta_i}(\s_i)}$ in head, and such that $\posbody{\R_{{i},g}^\magica} \subseteq N'$.
Therefore, $\magic{q_i^{\beta_i}(\s_i)}$ belongs to $N'$, from which ${q_i(\s_i)} \in \killedmpnp$ follows from the definition of killed atoms.
Thus, independently of the type (EDB, IDB) of $q_i$, (2) holds.
\end{description}
\vspace{-9mm}
\end{proof}

We can now complete the first part of the proof.


\begin{lemma}
\label{lem:one_minimal_model} For each stable model $M'$ of $\dmsqp$, there is a stable model $M$ of $\p$ such
that $M\supseteq \Mpp$.
\end{lemma}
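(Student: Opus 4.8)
The goal is to produce a stable model $M$ of $\p$ with $\Mpp \subseteq M$, and the natural starting point is Proposition~\ref{prop:killed_unfounded}. Since $M'$ is a stable model of \dmsqp, it is in particular a model of its own reduct $\ground{\dmsqp}^{M'}$, so we may instantiate the proposition with $N' = M'$: this yields that $X := \killedmpmp$ is an unfounded set for $\p$ with respect to $\tuple{\Mpp,\BP}$. Moreover, by Definition~\ref{def:killed} every killed atom lies in $\BP \setminus M'$, whereas $\Mpp \subseteq M'$; hence $\Mpp \cap X = \emptyset$, so $\tuple{\Mpp, \BP \setminus X}$ is a legitimate partial interpretation. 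My plan is to exhibit a stable model inside the window $\Mpp \subseteq M \subseteq \BP \setminus X$. Observe that, by Theorem~\ref{theo:unfounded}, any stable model $M \supseteq \Mpp$ of $\p$ is automatically disjoint from $X$, so the upper bound $\BP \setminus X$ is the natural search space rather than an extra constraint.

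First I would check that $\BP \setminus X$ is itself a model of $\p$ containing $\Mpp$. The inclusion $\Mpp \subseteq \BP \setminus X$ is immediate from $\Mpp \cap X = \emptyset$. For modelhood, suppose some ground rule $\R_g \in \ground{\p}$ were violated by $\BP \setminus X$; then all its head atoms lie in $X$, so $\head{\R_g} \cap X \neq \emptyset$ and one of the conditions of Definition~\ref{def:unfoundedset} applies to $\R_g$. A short case analysis shows that each condition contradicts the assumption that the body of $\R_g$ is true and its head false in $\BP \setminus X$: condition $(2)$ forces a positive body atom into $X$ and hence out of $\BP \setminus X$, falsifying the body; $(1.b)$ forces a negated body atom into $\Mpp \subseteq \BP \setminus X$, again falsifying the body; $(3)$ forces a head atom into $\Mpp \subseteq \BP \setminus X$, satisfying the head; and $(1.a)$ cannot hold at all. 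Hence no rule is violated, $\BP \setminus X$ is a model, and the family of models of $\p$ lying in the window is nonempty.

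It remains to pull a genuine stable model out of this window. Here I would exploit that $\p$ is \datdn, i.e.\ has no recursion through negation: fixing a stratification, I would build $M$ stratum by stratum, at each stratum resolving the negative literals against the already-constructed lower part (which renders that stratum's reduct negation-free) and choosing a subset-minimal model of it that still extends $\Mpp$ and stays within $\BP \setminus X$. The nonemptiness established above, applied stratum-wise, guarantees that such a choice exists at every level. The union $M$ would then, by construction, be a minimal model of its own reduct $\ground{\p}^M$, hence a stable model of $\p$, and it contains $\Mpp$ by construction.

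The main obstacle is precisely this final extraction. Minimality is indispensable for stability, yet a careless minimization could delete atoms of $\Mpp$, and in the presence of (stratified) negation a subset-minimal model of $\p$ need not be a minimal model of its own reduct (as the one-rule program $\{a \derives \naf\ b\}$ already illustrates for its minimal model $\{b\}$). The delicate point is therefore to show that the true atoms of $\Mpp$ remain \emph{founded} in $\p$, so that minimization never drops them, and that stratified negation is resolved consistently so that the reduct used at each stratum coincides with the reduct induced by the final $M$. This foundedness is the mirror image of Proposition~\ref{prop:killed_unfounded}, and I would trace it back to $M'$ being an unfounded-free (stable) model of \dmsqp, transferring support from each modified rule to the corresponding rule of $\p$ once the true magic atoms are stripped from its body. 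Interleaving this foundedness argument with the stratum-by-stratum minimization is where the real work of the proof lies.
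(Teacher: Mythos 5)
Your setup is fine: invoking Proposition~\ref{prop:killed_unfounded} with $N'=M'$, checking $\Mpp\cap X=\emptyset$ for $X=\killedmpmp$, and verifying that $\BP\setminus X$ is a model of $\p$ containing $\Mpp$ are all correct (your case analysis on Definition~\ref{def:unfoundedset} works). But there is a genuine gap at the decisive step. From the modelhood of $\BP\setminus X$ you cannot conclude, stratum-wise or otherwise, that a \emph{minimal} model of the relevant reduct exists inside the window $[\Mpp,\,\BP\setminus X]$: minimization over disjunctive heads can perfectly well discard atoms of $\Mpp$, and showing that it never does --- i.e., that the atoms of $\Mpp$ stay founded in $\p$ --- is not a technical detail to be ``interleaved'' with the stratification, it \emph{is} the lemma. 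Your own text concedes this (``where the real work of the proof lies''), so what you have is a reduction of the statement to an equally hard foundedness claim, plus an unjustified assertion that ``nonemptiness applied stratum-wise guarantees such a choice exists at every level.'' That assertion is false as stated: modelhood of the whole window gives no control over which atoms survive subset-minimization at each stratum.

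The paper closes exactly this hole by a different device. It takes $M$ to be a stable model of $\p\cup\Mpp$ (the program with the atoms of $\Mpp$ added as facts; this exists because $\datdn$ programs always have stable models), so $M\supseteq\Mpp$ is free, and then shows that $M$ is already stable for $\p$ itself: if not, a strictly smaller model $N\subset M$ of $\ground{\p}^{M}$ is transported to the interpretation $N'=(N\cap\Mpp)\cup(M'\setminus\BP)$, which is shown to be a model of $\ground{\dmsqp}^{M'}$ with $N'\subset M'$, contradicting the minimality of the stable model $M'$ over the models of its reduct. Note that this argument needs Proposition~\ref{prop:killed_unfounded} in its full two-model form (a model $N'$ of the reduct strictly below $M'$), not just the instance $N'=M'$ you use; the foundedness transfer you were missing is carried out there, inside the verification that $N'$ satisfies every modified rule. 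If you want to salvage your approach, you would essentially have to reprove that verification, at which point you have reconstructed the paper's proof in different clothing.
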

\begin{proof}
Let $M$ be a stable model of $\p \cup \Mpp$, the program obtained by adding to $\p$ a fact for each atom in
$\Mpp$.
We shall show that $M$ is in
fact a stable model of $\p$ such that $M \supseteq \Mpp$. Of course,
$M$ is a model of $\p$ such that $M \supseteq \Mpp$.
So, the line of the proof is to show that if $M$ is not stable, then it is possible to build a model $N'$ of $\ground{\dmsqp}^{M'}$ such that
$N'\subset M'$, thereby contradicting the minimality of $M'$ over the models of $\ground{\dmsqp}^{M'}$.

Assume, for the sake of contradiction, that $M$ is not stable and let $N\subset M$ be a model of $\ground{\p}^{M}$.
Define $N'$ as the interpretation $(N\cap \Mpp)\cup (M'\setminus \BP)$. By construction, note that $N'\subseteq
M'$, since $M'$ coincides with $\Mpp\cup (M'\setminus \BP)$. In fact, in the case where $N'=M'$, we would have
that $N \supseteq \Mpp$, since $(N\cap \Mpp)$ and $(M'\setminus \BP)$ are disjoint. Hence, $N$ would not only be
a model for $\ground{\p}^{M}$ but also a model for $\ground{\p \cup \Mpp}^{M}$, while on the other hand $N\subset M$ holds. However, this
is impossible, since $M$ is a stable model of $\p \cup \Mpp$.
So, $N'\subset M'$ must hold. Hence, to complete the proof and get a contradiction, it remains to show that $N'$ is actually a model of $\ground{\dmsqp}^{M'}$, i.e.,
it satisfies all the rules in $\ground{\dmsqp}^{M'}$.
To this end, we have to consider the following two kinds of rules:

\begin{description}
\item[(1)] Consider a ground magic rule $\R_g^\magica\in\ground{\dmsqp}^{M'}$ such that $\posbody{\R_g^\magica} \subseteq
N'$, and let $\magic{{p^\alpha(\t)}}$ be the (only) atom in $\head{\R_g^\magica}$. Since $N' \subset M'$,
$\posbody{\R_g^\magica} \subseteq N'$ implies that $\posbody{\R_g^\magica} \subset M'$. In fact, since $M'$ is a model of $\dmsqp$
and $|\head{\R_g^\magica}| = 1$, $\magic{{p^\alpha(\t)}} \in M'$ must hold
(we recall that $\negbody{\R_g^\magica} = \emptyset$).
Moreover, since $\BP$ does not contain
any magic atom, $\magic{{p^\alpha(\t)}}$ is also contained in $M' \setminus \BP$. Thus, by the construction of
$N'$, we can conclude that $\head{\R_g^\magica} \cap N' \neq \emptyset$.

\vspace{2mm}
\item[(2)] Consider a rule obtained by removing the negative literals
from a ground modified rule $\R_g'\in\ground{\dmsqp}$ where

\begin{dlvcode}
\hspace{-5mm}
\R_g': \ p(\t)\,\Or\,p_1(\t_1)\,\Or\,\cdots\,\Or\,p_n(\t_n) \derives \magic{p^\alpha(\t)},
     \magic{p_1^{\alpha_1}(\t_1)}, \ldots, \\
\hspace{-5mm}
     \quad\quad\ \
     \magic{p_n^{\alpha_n}(\t_n)},
     q_1(\s_1),\ldots,q_j(\s_j),
     \naf~q_{j+1}(\s_{j+1}),\ \ldots,\ \naf~q_m(\s_m).
\end{dlvcode}

\noindent and where $\posbody{\R_g'} \subseteq N'$. Observe that $\negbody{\R_g'} \cap M' = \emptyset$ holds by the definition of reduct.
Moreover, let $\R_g$ be the rule of $\ground{\p}$ associated with $\R_g'$ (according to
Lemma~\ref{lem:mappingGroundNonground}):

\begin{dlvcode}
\R_g: \ p(\t)\,\Or\,p_1(\t_1)\,\Or\,\cdots\,\Or\,p_n(\t_n) \derives
    q_1(\s_1),\ \ldots,\ q_j(\s_j), \\
    \quad\quad\quad\quad\quad\quad\quad\quad\quad\quad\quad\ \
    \naf~q_{j+1}(\s_{j+1}),\ \ldots,\ \naf~q_m(\s_m).
\end{dlvcode}

We have to show that $\head{\R_g'}\cap N'\neq \emptyset$. The proof is based on establishing the following properties on
$\R_g'$ and $\R_g$:

\vspace{-3mm}\begin{eqnarray}\label{eq:beforemain} \bullet \ M\cap \killedmpmp=\emptyset;
\end{eqnarray}

\vspace{-3mm}\begin{eqnarray}\label{eq:main} \bullet\ (\head{\R_g'}\setminus M')\cap M=\emptyset;
\end{eqnarray}

\vspace{-3mm}\begin{eqnarray}\label{eq:main1} \bullet\  \negbody{\R_g'}\cap M=\emptyset;
\end{eqnarray}

\vspace{-3mm}\begin{eqnarray}\label{eq:main2} \bullet\ \head{\R_g'} \cap M' = \head{\R_g'} \cap \Mpp = \head{\R_g'} \cap M;
\end{eqnarray}

\vspace{-3mm}\begin{eqnarray}\label{eq:main3} \bullet\ \head{\R_g}\cap N\neq \emptyset.
\end{eqnarray}

In particular, we shall directly prove (\ref{eq:beforemain}), and show the following implications: (\ref{eq:beforemain})$\rightarrow$(\ref{eq:main})$\wedge$(\ref{eq:main1}),
(\ref{eq:main})$\rightarrow$(\ref{eq:main2}), and (\ref{eq:main1})$\rightarrow$(\ref{eq:main3}).
Eventually, based on (\ref{eq:main2}) and (\ref{eq:main3}), the fact that $\head{\R_g'}\cap N'\neq \emptyset$ can be easily derived as follows:
Since $\head{\R_g} \subseteq \BP$, by the definition of $N'$ we can conclude that $\head{\R_g} \cap
N' = \head{\R_g} \cap (N \cap \Mpp) = (\head{\R_g} \cap N) \cap (\head{\R_g} \cap \Mpp)$. Moreover, because of (\ref{eq:main2}) and the fact that $\head{\R_g} = \head{\R_g'}$,
$\head{\R_g} \cap N'$ coincides in turn with  $(\head{\R_g} \cap N) \cap (\head{\R_g} \cap M)$.
Then, recall that $N\subset M$. Thus, $\head{\R_g} \cap N'=\head{\R_g} \cap N$, which is not empty by (\ref{eq:main3}).

In order to complete the proof, we have to show that all the above equations actually hold.

\medskip

\emph{Proof of (\ref{eq:beforemain}).} We recall that, by Proposition~\ref{prop:killed_unfounded}, we already know
that $\killedmpmp$ is an unfounded set for $\p$ with respect to $\tuple{\Mpp,\BP}$. In fact, one may notice that $\killedmpmp$ is an
unfounded set for $\p\cup \Mpp$ with respect to $\tuple{\Mpp,\BP}$ too, since the rules added to $\p$  are facts corresponding to the
atoms in $\Mpp$ and $\Mpp \cap \killedmpmp = \emptyset$ by Definition~\ref{def:killed}.
Thus, since $M\supseteq \Mpp$ and $M$ is a stable model of $\p\cup \Mpp$, we can apply
Theorem~\ref{theo:unfounded} in order to conclude that $M\cap \killedmpmp=\emptyset$.

\medskip

\emph{Proof of (\ref{eq:main}).} After (\ref{eq:beforemain}), we can just show that $\head{\R_g'}\setminus M' \subseteq \killedmpmp$.
In fact, since $N'\subset M'$, we note that $\posbody{\R_g'} \subseteq N'$ implies $\posbody{\R_g'} \subset M'$.
Thus, $\head{\R_g'}\setminus M' \subseteq \killedmpmp$ follows by Definition~\ref{def:killed} and the form of rule $\R_g'$.

\medskip

\emph{Proof of (\ref{eq:main1}).} After (\ref{eq:beforemain}), we can just show that $\negbody{\R_g'} \subseteq \killedmpmp$.
Actually, we show that the IDB atoms in $\negbody{\R_g'}$ belong to
$\killedmpmp$, as EDB atoms in $\negbody{\R_g'}$ clearly belong to $\killedmpmp$
because $\negbody{\R_g'} \cap M' = \emptyset$ by assumption.
To this end, consider a modified rule $\R' \in \dmsqp$ such that $\R'_g = \R'\vartheta$
for some substitution $\vartheta$:

\begin{dlvcode}
\hspace{-5mm}
\R': \ p(\t')\,\Or\,p_1(\t_1')\,\Or\,\cdots\,\Or\,p_n(\t_n') \derives \magic{p^\alpha(\t')},
     \magic{p_1^{\alpha_1}(\t_1')}, \ldots, \\
\hspace{-5mm}
     \quad\quad\ \
     \magic{p_n^{\alpha_n}(\t_n')},
     q_1(\s_1'),\ldots,q_j(\s_j'),
     \naf~q_{j+1}(\s_{j+1}'),\ \ldots,\ \naf~q_m(\s_m').
\end{dlvcode}

\noindent
During the {\em Generation} step preceding the production of $\R'$,
a magic rule $\R^\magica_{i}$
with $\head{\R^\magica_{i}} = \{\magic{q_i^{\beta_i}(\s_i')}\}$
and where $\posbody{\R^\magica_{i}} \subseteq \posbody{\R'}$
has been produced
for each $j+1\leq i\leq m$ such that $q_i$ is an IDB predicate.
Hence,
since the variables of $\R^\magica_{i}$ are a subset
of the variables of $\R'$,
the substitution $\vartheta$ can be used to map
$\R^\magica_{i}$ to a ground rule $\R^\magica_{{i},g} = \R^\magica_{i}\vartheta$
with $\head{\R^\magica_{{i},g}}=\{\magic{q_i^{\beta_i}(\s_i)}\}$
and $\posbody{\R^\magica_{{i},g}}\subseteq \posbody{\R_g'}$.
%
Now, since $\posbody{\R_g'} \subseteq N'\subset M'$, we can conclude that $\posbody{\R^\magica_{{i},g}}$ is in turn contained in
$M'$. Thus, the head of $\R^\magica_{{i},g}$ must be true
with respect to $M'$ (we recall that magic rules have empty negative bodies). That is, $\magic{{q_i^{\beta_i}(\s_i)}}\in M'$ holds, for each $j+1\leq i\leq m$
such that $q_i$ is an IDB predicate.
Moreover,  $\negbody{\R_g'} \cap M' = \emptyset$ implies that ${q_i^{\beta_i}(\s_i)}\in \BP\setminus M'$, as ${q_i^{\beta_i}(\s_i)}\in \negbody{\R_g'}$. Thus,
by Definition~\ref{def:killed}, ${q_i^{\beta_i}(\s_i)}\in \killedmpmp$.

\medskip

\emph{Proof of (\ref{eq:main2}).} The property immediately follows from (\ref{eq:main}) and the fact that
$\head{\R_g'} \subseteq \BP$ and $M \supseteq \Mpp$.

\medskip

\emph{Proof of (\ref{eq:main3}).} Note that $\negbody{\R_g} = \negbody{\R_g'}$,
and so (\ref{eq:main1}) implies that there is a rule in $\ground{\p}^{M}$ obtained from $\R_g$
by removing the atoms in $\negbody{\R_g}$. Note also that
$\posbody{\R_g}=\posbody{\R'_g}\cap \BP\subseteq
N'\cap \BP$ (since $\posbody{\R_g'} \subseteq N'$). Thus, by the definition of $N'$, $\posbody{\R_g}\subseteq N$ (more
specifically, $\posbody{\R_g} \subseteq N \cap \Mpp$). Moreover, since $N$ is a model of $\ground{\p}^{M}$,
the latter entails that $\head{\R_g}\cap N\neq \emptyset$.
\end{description}
\vspace{-9mm}
\end{proof}

\begin{theorem}
\label{thm:extending_minimal_models} Let $\Q$ be a query for a $\datdn$ program $\p$. Then, for each stable
model $M'$ of $\dmsqp$, there is a stable model $M$ of $\p$ such that $M'|_\Q = M|_\Q$.
\end{theorem}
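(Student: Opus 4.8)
The plan is to reduce the theorem to Lemma~\ref{lem:one_minimal_model} and then to close the gap on the query predicate by a killed-atoms argument. First I would fix a stable model $M'$ of $\dmsqp$ and, by Lemma~\ref{lem:one_minimal_model}, obtain a stable model $M$ of $\p$ with $M \supseteq \Mpp$. The remaining goal is to show $M'|_\Q = M|_\Q$. Since the query predicate $g$ occurs only unadorned in the rewritten program, every ground instance of the query atom $g(\t)$ lies in $\BP$; hence the query atoms true in $M'$ are exactly those in $\Mpp = M'|_{\BP}$, so $M'|_\Q = \Mpp|_\Q$. From $\Mpp \subseteq M$ the inclusion $M'|_\Q = \Mpp|_\Q \subseteq M|_\Q$ is then immediate.

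For the reverse inclusion $M|_\Q \subseteq M'|_\Q$, I would instantiate the killed-atoms machinery at $N' = M'$, which is legitimate because a stable model $M'$ of $\dmsqp$ is in particular a model of $\ground{\dmsqp}^{M'}$. By Proposition~\ref{prop:killed_unfounded}, the set $\killedmpmp$ is then an unfounded set for $\p$ with respect to $\tuple{\Mpp,\BP}$. Since $\Mpp \subseteq M \subseteq \BP$ and $M$ is a stable model of $\p$, Theorem~\ref{theo:unfounded} yields $M \cap \killedmpmp = \emptyset$. The decisive observation is that the magic seed $\magic{g^\alpha(\t)}$ is a fact and, because the magic version retains exactly the bound (constant) argument positions of the query, its value does not depend on the free arguments; thus for every ground instance $g(\t)\vartheta$ of the query the associated magic atom coincides with this single seed fact and therefore belongs to $M'$. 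Consequently, if some query atom $g(\t)\vartheta$ were in $M$ but not in $M'$, then $g(\t)\vartheta \in \BP \setminus M'$ together with its magic atom in $M'$ would place $g(\t)\vartheta$ in $\killedmpmp$ by Definition~\ref{def:killed}, contradicting $M \cap \killedmpmp = \emptyset$. Hence every query atom of $M$ lies in $M'$, giving $M|_\Q \subseteq \Mpp|_\Q = M'|_\Q$; combined with the first inclusion this yields the desired equality.

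I expect the reverse inclusion to be the main obstacle, and within it the key point to verify carefully is that the magic seed forces membership of the magic version of $g(\t)\vartheta$ in $M'$ \emph{uniformly} over all ground instances of the query. Once this uniformity is secured, the unfoundedness of $\killedmpmp$ (Proposition~\ref{prop:killed_unfounded}) and its disjointness from $M$ (Theorem~\ref{theo:unfounded}) deliver the contradiction mechanically; the only routine care needed is the bookkeeping that query atoms range over $\BP$, so that restricting to the query commutes with restricting $M'$ to $\BP$.
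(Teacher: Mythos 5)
Your proposal is correct and follows essentially the same route as the paper's proof: Lemma~\ref{lem:one_minimal_model} gives a stable model $M \supseteq \Mpp$ (hence $M|_\Q \supseteq M'|_\Q$), and the reverse inclusion is obtained by taking $N' = M'$ in the killed-atoms construction, applying Proposition~\ref{prop:killed_unfounded} and Theorem~\ref{theo:unfounded} to get $M \cap \killedmpmp = \emptyset$, and noting that the magic seed places every false ground instance of the query in $\killedmpmp$. Your explicit check that the seed atom is the same for all ground instances of $\Q$ (since the magic version drops the free argument positions) is precisely the observation the paper compresses into ``the magic seed is associated to any ground instance of $\Q$.''
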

\begin{proof}
Because of Lemma~\ref{lem:one_minimal_model}, for each stable model $M'$ of $\dmsqp$, there is a stable model
$M$ of $\p$ such that $M\supseteq \Mpp$. Thus, we trivially have that $M|_\Q \supseteq M'|_\Q$ holds. We now
show that the inclusion cannot be proper.

In fact, by the definition of $\dmsqp$, the magic seed is associated to any ground instance of $\Q$. Then
$\BP|_{\Q} \setminus M' \subseteq \killedmpmp$ by Definition~\ref{def:killed}
(we recall that $\BP|_{\Q}$ denotes the ground instances of $\Q$). By
Proposition~\ref{prop:killed_unfounded}, $\killedmpmp$ is an unfounded set for $\p$ with respect to $\tuple{\Mpp,\BP}$. Hence, by
Theorem~\ref{theo:unfounded}, we have that $M\cap \killedmpmp=\emptyset$. It follows that $M\cap (\BP|_{\Q}
\setminus M')=\emptyset$. Thus, \ $M|_\Q \setminus M'|_\Q=\emptyset$, \ which \ combined \ with \ $M|_\Q \supseteq M'|_\Q$
implies $M|_\Q = M'|_\Q$.
\end{proof}

\subsubsection{Completeness of the Magic Set Method}

For the second part of the proof, we construct an interpretation for $\dmsqp$ based on one for $\p$.

\begin{definition}[Magic Variant]
\label{def:magic_variant} Let $I$ be an interpretation for $\p$. We define an interpretation $\variantqpi{\infty}$ for
$\dmsqp$, called the magic variant of $I$ with respect to $\Q$ and $\p$, as the limit of the following sequence:
$$
\begin{array}{l}
\begin{array}{lcl}
\variantqpi{0} & = & \EDB{\p}; \mbox{ and} \\
\variantqpi{i+1} & = & \variantqpi{i}\ \cup \\
\end{array}\\
\quad\quad \{ {p(\t)} \in I \ \mid \mbox{ there is a binding }
              \alpha \mbox{ such that }\\
\quad\quad\quad\quad\quad\quad\quad\quad {\magic{p^\alpha(\t)}} \in \variantqpi{i} \} \ \cup \\
\quad\quad \{ {\magic{p^\alpha(\t)}} \ \mid \ \exists \ \R_g^\magica \in \ground{\dmsqp} \mbox{ such that }\\
\quad\quad\quad\quad {\magic{p^\alpha(\t)}} \in \head{\R_g^\magica}
          \mbox{ and } \posbody{\R_g^\magica} \subseteq \variantqpi{i} \}, \ \ \ \forall i\geq 0.
\end{array}
$$
\end{definition}

\begin{example}\label{ex:magic_variant}\em
Consider the program $\DMS(\Q_{sc},\p_{sc})$ presented in Section~\ref{sec:dms},
the EDB $\{{\tt produced\_by(p,c,c_1)}\}$
and the interpretation $M_{sc} = \{{\tt produced\_by(p,c,c_1), sc(c)}\}$.
We next compute the magic variant $\variant{\Q_{sc}}{\p_{sc}}{\infty}{M_{sc}}$
of $M_{sc}$ with respect to $\Q_{sc}$ and $\p_{sc}$.
We start the sequence with the original EDB:
$\variant{\Q_{sc}}{\p_{sc}}{0}{M_{sc}} = \{{\tt produced\_by(p,c,c_1)}\}$.
For $\variant{\Q_{sc}}{\p_{sc}}{1}{M_{sc}}$, we add $\tt magic\_sc^{\tt b}(c)$ (the query seed),
while for $\variant{\Q_{sc}}{\p_{sc}}{2}{M_{sc}}$, we add $\tt sc(c)$ (because
${\tt sc(c)} \in M_{sc}$ and
${\tt magic\_sc^{\tt b}(c)} \in \variant{\Q_{sc}}{\p_{sc}}{0}{M_{sc}}$),
and $\tt magic\_sc^{\tt b}(c_1)$ (because
$\tt magic\_sc^{\tt b}(c_1) \derives magic\_sc^{\tt b}(c).$ is a rule of $\ground{\DMS(\Q_{sc},\p_{sc})}$
and ${\tt magic\_sc^{\tt b}(c)} \in \variant{\Q_{sc}}{\p_{sc}}{0}{M_{sc}}$).
Any other element of the sequence coincides with $\variant{\Q_{sc}}{\p_{sc}}{2}{M_{sc}}$,
and so also $\variant{\Q_{sc}}{\p_{sc}}{\infty}{M_{sc}}$.
\hfill $\Box$
\end{example}

By definition, for a magic variant $\variantqpi{\infty}$ of an interpretation $I$ with respect to $\Q$ and $\p$, $\variantqpi{\infty}|_{\BP} \subseteq
I$ holds. More interestingly, the magic variant of a stable model for $\p$ is in turn a stable model for $\dmsqp$.

\begin{example}\em
The magic variant of $M_{sc}$ with respect to $\Q_{sc}$ and $\p_{sc}$
(see Example~\ref{ex:magic_variant})
coincides with the interpretation $M_{sc}'$ introduced in Example~\ref{ex:killed}.
From previous examples, we know that $M_{sc}$ is a stable model of $\p_{sc}$,
and $M_{sc}'$ is a stable model of $\DMS(\Q_{sc},\p_{sc})$.
\hfill $\Box$
\end{example}

The following two lemmas formalize the intuition above, with the latter being the counterpart of Lemma~\ref{lem:one_minimal_model}.

\begin{lemma}
\label{lem:magic_variant_minimal_model}
For each stable model $M$ of $\p$, the magic variant $M'=\variantqpm{\infty}$ of $M$ is a model of $\ground{\dmsqp}^{M'}$ with $M\supseteq \Mpp$.
\end{lemma}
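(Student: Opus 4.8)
The statement has two parts. The inclusion $M\supseteq\Mpp$ is immediate from the remark following Definition~\ref{def:magic_variant}: instantiating it with the interpretation $M$ gives $\variantqpm{\infty}|_{\BP}\subseteq M$, and since $M'=\variantqpm{\infty}$ this is exactly $\Mpp\subseteq M$. The substance of the proof is to verify that $M'$ satisfies every rule of the reduct $\ground{\dmsqp}^{M'}$, which I would organize as a case analysis over the three kinds of rules the transformation places into $\ground{\dmsqp}$: the EDB facts together with the magic seed, the magic rules, and the modified rules.

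The first two kinds are straightforward. The EDB facts lie in $\variantqpm{0}$ and the magic seed is added at the first iteration (its body being empty), so both belong to $M'$. For a ground magic rule $\R_g^\magica\in\ground{\dmsqp}$ with $\posbody{\R_g^\magica}\subseteq M'$, I would note that, since $M'=\variantqpm{\infty}$ is the limit of an increasing sequence, $\posbody{\R_g^\magica}\subseteq\variantqpm{i}$ for some finite $i$; the clause of Definition~\ref{def:magic_variant} collecting magic heads then places the unique atom of $\head{\R_g^\magica}$ into $\variantqpm{i+1}\subseteq M'$, so the rule is satisfied.

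The core is the modified-rule case. Let $\R_g'$ be a ground modified rule surviving in the reduct, i.e.\ with $\negbody{\R_g'}\cap M'=\emptyset$, and suppose $\posbody{\R_g'}\subseteq M'$; I must exhibit a head atom of $\R_g'$ in $M'$. Since $\magic{p^\alpha(\t)}\in\posbody{\R_g'}\subseteq M'\subseteq B_{\dmsqp}$, Lemma~\ref{lem:mappingGroundNonground} yields the associated original ground rule $\R_g\in\ground{\p}$, whose head equals $\head{\R_g'}$ and whose body literals are the non-magic literals of $\R_g'$. The plan is to show that the body of $\R_g$ is true with respect to the stable model $M$, so that $\head{\R_g}\cap M\neq\emptyset$, and then to lift a witnessing head atom into $M'$. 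The positive body of $\R_g$ consists of non-magic literals contained in $\posbody{\R_g'}$, hence lies in $\posbody{\R_g'}\cap\BP\subseteq\Mpp\subseteq M$. For the negative body I would show $\negbody{\R_g}\cap M=\emptyset$: fix $q_i(\s_i)\in\negbody{\R_g}=\negbody{\R_g'}$, so $q_i(\s_i)\notin M'$; if $q_i$ is an EDB predicate, then $q_i(\s_i)\in M$ would force $q_i(\s_i)\in\EDB{\p}=\variantqpm{0}\subseteq M'$, a contradiction, since EDB atoms true in $M$ are precisely those of $\EDB{\p}$; if $q_i$ is IDB, the Generation step has produced a magic rule with head $\magic{q_i^{\beta_i}(\s_i)}$ whose positive body is contained in $\{\magic{p^\alpha(\t)}\}\cup\{q_k(\s_k):q_k(\s_k)\SIPSprecR{p^\alpha(\t)}q_i(\s_i)\}\subseteq\posbody{\R_g'}\subseteq M'$ (here condition $(1.b)$ of Definition~\ref{def:sip2} guarantees that the only predecessors of $q_i(\s_i)$ are $p(\t)$ and positive body atoms), so $\magic{q_i^{\beta_i}(\s_i)}\in M'$ by the fixpoint closure, whence $q_i(\s_i)\in M$ would promote $q_i(\s_i)$ into $M'$, again contradicting $\negbody{\R_g'}\cap M'=\emptyset$. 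Hence the body of $\R_g$ holds in $M$; since $M$, being a stable model of $\p$, is in particular a model of $\ground{\p}$, some $p_i(\t_i)\in\head{\R_g}\cap M$. Finally, because every head atom of $\R_g'$ contributes its magic companion to $\posbody{\R_g'}\subseteq M'$, we have $\magic{p_i^{\alpha_i}(\t_i)}\in M'$, and the promotion clause of Definition~\ref{def:magic_variant} gives $p_i(\t_i)\in M'$, so $\head{\R_g'}\cap M'=\head{\R_g}\cap M'\neq\emptyset$.

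I expect the delicate step to be the treatment of the negative body. One must ensure that the magic rules generated for the adorned negative literals indeed have their positive bodies already realized in $M'$, which is exactly where the SIPS restriction $(1.b)$ — forbidding negative and extra head literals from supplying bindings — is needed, so that the relevant magic atoms become available and any negative atom true in $M$ is forced into $M'$, contradicting the rule's survival in the reduct. The remaining manipulations are routine bookkeeping tying the fixpoint construction of the magic variant to the shape of the generated magic and modified rules.
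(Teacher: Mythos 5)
Your proof is correct and follows essentially the same route as the paper's: magic rules are satisfied via the fixpoint clause of Definition~\ref{def:magic_variant}, and for a modified rule you map back to the original rule $\R_g$ via Lemma~\ref{lem:mappingGroundNonground}, establish $\posbody{\R_g}\subseteq M$ and $\negbody{\R_g}\cap M=\emptyset$ (the latter by using the magic rules generated for negative IDB literals to force any such atom true in $M$ into $M'$, plus the EDB argument), and then promote a head atom of $\head{\R_g}\cap M$ into $M'$ through its magic companion. The only cosmetic difference is that the containment of the generated magic rule's positive body in $\posbody{\R_g'}$ already follows from the \textbf{\emph{Generate}} construction itself (which only ever adds positive body atoms), so condition (1.b) of Definition~\ref{def:sip2} is not strictly what carries that step, contrary to your closing remark.
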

\begin{proof}
As $M'$ is the magic variant of the stable model $M$, we trivially have that $M \supseteq \Mpp$ holds. We next show that
$M'$ is a model of $\ground{\dmsqp}^{M'}$. To this end, consider a rule
in $\ground{\dmsqp}^{M'}$ having the body true,
that is, a rule obtained by removing the negative body literals from
a rule $\R_g'\in \ground{\dmsqp}$ such that $\negbody{\R_g'} \cap M' = \emptyset$
and $\posbody{\R_g'}\subseteq M'$ hold. We have to show that $\head{\R_g'} \cap M'\neq \emptyset$.

In the case where $\R_g'$ is a magic rule, then $\posbody{\R_g'} \subseteq M'$ implies that the (only) atom in
$\head{\R_g'}$ belongs to $M'$ (by Definition~\ref{def:magic_variant}).
The only remaining (slightly more involved) case to be analyzed is where $\R_g'$ is a modified rule of the form

\begin{dlvcode}
\hspace{-5mm}
\R_g': \ p(\t)\,\Or\,p_1(\t_1)\,\Or\,\cdots\,\Or\,p_n(\t_n) \derives \magic{p^\alpha(\t)},
     \magic{p_1^{\alpha_1}(\t_1)}, \ldots, \\
\hspace{-5mm}
     \quad\quad\ \
     \magic{p_n^{\alpha_n}(\t_n)},
     q_1(\s_1),\ldots,q_j(\s_j),
     \naf~q_{j+1}(\s_{j+1}),\ \ldots,\ \naf~q_m(\s_m).
\end{dlvcode}

\noindent
In this case, we first apply as usual Lemma~\ref{lem:mappingGroundNonground} in order to conclude the existence of a rule $\R_g\in
\ground{\p}$ of the form

\begin{dlvcode}
\hspace{-5mm}
\R_g: \ p(\t)\,\Or\,p_1(\t_1)\,\Or\,\cdots\,\Or\,p_n(\t_n) \derives
    q_1(\s_1),\ \ldots,\ q_j(\s_j), \\
\hspace{-5mm}
    \quad\quad\quad\quad\quad\quad\quad\quad\quad\quad\quad\ \
    \naf~q_{j+1}(\s_{j+1}),\ \ldots,\ \naf~q_m(\s_m).
\end{dlvcode}

\noindent
Then, we claim that the following two properties hold:
\vspace{-3mm}
\begin{eqnarray}
\label{eq:correct} \bullet && \negbody{\R_g} \cap M = \emptyset;\\
\label{eq:correct1} \bullet &&  \posbody{\R_g} \subseteq M.
\end{eqnarray}
These properties are in fact what we just need to establish the result. Indeed, since $M$
is a model of $\ground{\p}^{M}$, (\ref{eq:correct}) and (\ref{eq:correct1}) imply $\head{\R_g} \cap M \neq \emptyset$. So, we can recall that
$\head{\R_g} = \head{\R_g'}$, and hence let $p_i(\t_i)$ be an atom in
$\head{\R_g} \cap M=\head{\R_g'} \cap M$
and $\magic{p_i^{\alpha_i}(\t_i)}$ be its corresponding magic atom in
$\posbody{\R_g'}$ ($i \in \{\epsilon, 1, \ldots, n\}$, where $\epsilon$ is the
empty string).
Since $\posbody{\R_g'} \subseteq
M'$ (by hypothesis) and since ${p_i(\t_i)}\in M$, we can then conclude that
${p_i(\t_i)}$ is in $M'$ as well by
Definition~\ref{def:magic_variant}. That is, $\head{\R_g'} \cap M'\neq \emptyset$.

Let now finalize the proof, by showing that the above properties actually hold.

\emph{Proof of (\ref{eq:correct}).} Consider a modified rule $\R' \in \dmsqp$ such that $\R'_g = \R'\vartheta$ for a substitution $\vartheta$:

\begin{dlvcode}
\hspace{-5mm}
\R': \ p(\t')\,\Or\,p_1(\t_1')\,\Or\,\cdots\,\Or\,p_n(\t_n') \derives \magic{p^\alpha(\t')},
     \magic{p_1^{\alpha_1}(\t_1')}, \ldots, \\
\hspace{-5mm}
     \quad\quad\ \
     \magic{p_n^{\alpha_n}(\t_n')},
     q_1(\s_1'),\ldots,q_j(\s_j'),
     \naf~q_{j+1}(\s_{j+1}'),\ \ldots,\ \naf~q_m(\s_m').
\end{dlvcode}

\noindent
and the rule $\R \in \p$ from which $\R'$ is produced
(such that $\R_g = \R\vartheta$):

\begin{dlvcode}
\hspace{-5mm}
\R: \ p(\t')\,\Or\,p_1(\t_1')\,\Or\,\cdots\,\Or\,p_n(\t_n') \derives
    q_1(\s_1'),\ \ldots,\ q_j(\s_j'), \\
\hspace{-5mm}
    \quad\quad\quad\quad\quad\quad\quad\quad\quad\quad\quad\ \
    \naf~q_{j+1}(\s_{j+1}'),\ \ldots,\ \naf~q_m(\s_m').
\end{dlvcode}

\noindent
During the {\em Generation} step preceding the production of $\R'$,
a magic rule $\R^\magica_{i}$
such that $\head{\R^\magica_{i}} = \{\magic{q_i^{\beta_i}(\s_i')}\}$
has been produced
for each $j+1\leq i\leq m$ such that $q_i$ is an IDB predicate.
Hence,
since the variables of $\R^\magica_{i}$ are a subset
of the variables of $\R'$,
the substitution $\vartheta$ can be used to map
$\R^\magica_{i}$ to a ground rule $\R^\magica_{{i},g} = \R^\magica_{i}\vartheta$
such that $\head{\R^\magica_{{i},g}}=\{\magic{q_i^{\beta_i}(\t_i)}\}$ and
$\posbody{\R^\magica_{{i},g}}\subseteq
\posbody{\R_g'}$
(we recall that magic rules have empty negative body).
Now, since $\posbody{\R_g'} \subseteq M'$, we can conclude that $\posbody{\R^\magica_{{i},g}}$ is in turn contained in
$M'$. Thus, by the construction of $M'$, the head of $\R^\magica_{{i},g}$ must be true
with respect to $M'$, that is, $\magic{{q_i^{\beta_i}(\t_i)}}\in M'$ holds for each $j+1\leq i\leq m$
such that $q_i$ is an IDB predicate.
So, if some (IDB) atom ${q_i(\s_i)} \in \negbody{\R_g}$ belongs
to $M$, by Definition~\ref{def:magic_variant} we can conclude that
${q_i(\s_i)} \in M'$, which contradicts the assumption
that $\negbody{\R_g'} \cap M' = \emptyset$ (we recall that $\negbody{\R_g} = \negbody{\R_g'}$).
%
This proves that IDB predicates in $\negbody{\R_g}$ do not occur in $M$. The same trivially holds for EDB predicates too, since
$\negbody{\R_g}\cap M' = \negbody{\R_g'}\cap M'=\emptyset$ and $M'\supseteq \EDB{\p}$ (by the definition of magic variant).

\medskip

\emph{Proof of (\ref{eq:correct1}).} The equation straightforwardly follows from the fact that $\posbody{\R_g} = \posbody{\R_g'}|_{\BP}$, and
since $M \supseteq \Mpp$ and $\posbody{\R_g'} \subseteq M'$ hold by the construction of $M'$ and by the initial hypothesis on the choice of $\R_g'$, respectively.
\end{proof}

\medskip

\begin{lemma}
\label{lem:magic_variant_minimal_modelDUE}
For each stable model $M$ of $\p$, there is a stable model $M'$ of $\dmsqp$ (which is the magic variant of $M$) such that $M\supseteq \Mpp$.
\end{lemma}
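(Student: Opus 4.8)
The plan is to lean on Lemma~\ref{lem:magic_variant_minimal_model}, which already provides half of the statement: it guarantees that $M'=\variantqpm{\infty}$ is a model of $\ground{\dmsqp}^{M'}$ and that $M\supseteq\Mpp$. Since every model of a reduct is a model of the program itself, $M'$ is in particular a model of $\dmsqp$. Thus the only missing ingredient is that $M'$ is a \emph{subset-minimal} model of $\ground{\dmsqp}^{M'}$; combined with the above this makes $M'$ a stable model of $\dmsqp$, and $M\supseteq\Mpp$ is then inherited directly from Lemma~\ref{lem:magic_variant_minimal_model}. I would prove minimality by contradiction, assuming a model $N'\subsetneq M'$ of $\ground{\dmsqp}^{M'}$ and deriving a clash with the stability of $M$.

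The key realization is that the soundness machinery applies here without modification. Since $M'$ is a model of $\dmsqp$ and $N'\subseteq M'$ is a model of $\ground{\dmsqp}^{M'}$, Proposition~\ref{prop:killed_unfounded} says that $\killedmpnp$ is an unfounded set for $\p$ with respect to $\tuple{\Mpp,\BP}$. As $M$ is a stable model of $\p$ with $\Mpp\subseteq M\subseteq\BP$, Theorem~\ref{theo:unfounded} yields $M\cap\killedmpnp=\emptyset$, hence $\Mpp\cap\killedmpnp=\emptyset$. Reading this through Definition~\ref{def:killed} gives the decisive fact: for every \emph{relevant} atom $k\in\Mpp\setminus N'$ that has been dropped by $N'$ there is \emph{no} binding $\alpha$ with $\magic{k^\alpha(\t)}\in N'$, since otherwise $k$ would be a killed atom lying in $\Mpp$.

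The second ingredient is the fixpoint structure of the magic variant. Put $W=M'\setminus N'$ and suppose $W\neq\emptyset$; choose $w\in W$ entering the sequence $\variantqpm{i}$ at the least stage (such an element exists because $\variantqpm{0}=\EDB{\p}\subseteq N'$, so every member of $W$ enters at a strictly positive stage). If $w$ is a magic atom, then by Definition~\ref{def:magic_variant} it is the head of a magic rule $\R_g^\magica\in\ground{\dmsqp}$ whose positive body is contained in an earlier stage of the sequence; by minimality of the stage of $w$ that body is disjoint from $W$, hence contained in $N'$, and since magic rules are negation-free they belong to $\ground{\dmsqp}^{M'}$, forcing $w\in N'$ --- a contradiction. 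If instead $w$ is a standard atom $k$, then $k\in\Mpp\setminus N'$ and $k$ was added to the sequence because some $\magic{k^\mu(\t)}$ occurs at a strictly earlier stage; that magic atom is therefore not in $W$, i.e.\ it lies in $N'$, contradicting the decisive fact of the previous paragraph. Either way we reach a contradiction, so $W=\emptyset$, that is $N'=M'$, and $M'$ is minimal, hence stable.

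I expect the main obstacle to be the careful bookkeeping in the third step rather than any deep new idea: one must verify that the magic variant genuinely assigns a well-founded stage to every atom of $M'$ (so that a least-stage element of $W$ exists and all of its justifications strictly precede it), and that the two cases interlock correctly --- magic atoms being pinned down bottom-up by the negation-free magic rules, while dropped relevant standard atoms are pinned down, through Proposition~\ref{prop:killed_unfounded} and Theorem~\ref{theo:unfounded}, by the impossibility of a surviving magic atom. The appeal to the already-established soundness results (Proposition~\ref{prop:killed_unfounded} and Theorem~\ref{theo:unfounded}) is what lets me avoid re-proving any foundedness property of $M$ from scratch.
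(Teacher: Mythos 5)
Your proposal is correct and follows essentially the same route as the paper's proof: both reduce the claim, via Lemma~\ref{lem:magic_variant_minimal_model}, to showing that $M'$ is minimal among models of $\ground{\dmsqp}^{M'}$, and then settle magic atoms using the negation-free magic rules and standard atoms using $\killedmpnp$, Proposition~\ref{prop:killed_unfounded} and Theorem~\ref{theo:unfounded} against the stability of $M$. Your least-stage counterexample on $W = M' \setminus N'$ is just the contrapositive packaging of the paper's induction on the stages $\variantqpm{i}$, so the two arguments coincide in substance.
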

\begin{proof}
After Lemma~\ref{lem:magic_variant_minimal_model},
we can show that $M'=\variantqpm{\infty}$ is also minimal over all the models of $\ground{\dmsqp}^{M'}$.
Let $N' \subseteq M'$ be a minimal model of $\ground{\dmsqp}^{M'}$. We prove by
induction on the definition of the magic variant that $M'$ is in turn contained in $N'$. The base case (i.e., $\variantqpm{0} \subseteq
N'$) is clearly true, since $\variantqpm{0}$ contains only EDB facts.
Suppose $\variantqpm{i} \subseteq N'$ in order to prove that
$\variantqpm{i+1} \subseteq N'$ holds as well.

While considering an atom in $\variantqpm{i+1} \setminus \variantqpm{i}$, we distinguish two cases:

\begin{description}
\item[(a)] For a magic atom $\magic{p^\alpha(\t)}$ in $\variantqpm{i+1} \setminus \variantqpm{i}$, by
Definition~\ref{def:magic_variant} there must be a rule $\R_g^\magica \in \ground{\dmsqp}$
having $\head{\R_g^\magica} = \{\magic{p^\alpha(\t)}\}$ and
$\posbody{\R_g^\magica} \subseteq \variantqpm{i}$
(we recall that magic rules have empty negative body and so
$\R_g^\magica \in \ground{\dmsqp}^{M'}$ holds).
We can then conclude that $\posbody{\R_g^\magica}
\subseteq N'$ holds by the induction hypothesis and so $\magic{{p^\alpha(\t)}} \in N'$ (because $N'$ is a model
of $\ground{\dmsqp}^{M'}$).

\item[(b)] For a standard atom $p(\t)$ in $\variantqpm{i+1} \setminus \variantqpm{i}$, by
Definition~\ref{def:magic_variant} there is a binding $\alpha$ such that ${\magic{p^\alpha(\t)}} \in
\variantqpm{i}$ and the atom $p(\t)$ belongs to $M$. Assume for the sake of contradiction that ${p(\t)}\not\in
N'$. Since $M'$ is a model of $\dmsqp$ and $N'$ is a model of $\ground{\dmsqp}^{M'}$,
we can compute the set $\killedmpnp$ as introduced in Section~\ref{sec:soundness}
and note, in particular, that ${p(\t)}\in \killedmpnp$ holds (by definition).
Moreover, by Proposition~\ref{prop:killed_unfounded}, $\killedmpnp$ is an unfounded set for $\p$ with respect to $\tuple{\Mpp,\BP}$.
In addition, $M \supseteq \Mpp$ holds by Definition~\ref{def:magic_variant}.
Thus, $M$ is a stable model for $\p$ such that $M\supseteq \Mpp$, and we can
hence apply Theorem~\ref{theo:unfounded} in order to conclude that $M \cap \killedmpnp =
\emptyset$. The latter is in contradiction with ${p(\t)}\in \killedmpnp$ and
${p(\t)} \in M$. Hence, ${p(\t)}\in N'$.
\end{description}
\vspace{-9mm}
\end{proof}

We can then prove the correspondence of stable models with respect to queries.

\begin{theorem}
\label{theo:extending_minimal_models} Let $\Q$ be a query for a $\datdn$ program $\p$. Then, for each stable
model $M$ of $\p$, there is a stable model $M'$ of $\dmsqp$ (which is the
magic variant of $M$)
such that $M'|_\Q = M|_\Q$.
\end{theorem}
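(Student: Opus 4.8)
The plan is to combine Lemma~\ref{lem:magic_variant_minimal_modelDUE}---which already produces a stable model $M'=\variantqpm{\infty}$ of $\dmsqp$ with $M\supseteq\Mpp$---with a direct analysis of the query seed, and to establish the two inclusions $M'|_\Q\subseteq M|_\Q$ and $M|_\Q\subseteq M'|_\Q$ separately. This mirrors the soundness counterpart (Theorem~\ref{thm:extending_minimal_models}), but with the roles of $M$ and $M'$ exchanged; here the heavy lifting has already been carried out in Lemma~\ref{lem:magic_variant_minimal_modelDUE}, so both inclusions should reduce to short arguments.

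First I would dispatch $M'|_\Q\subseteq M|_\Q$. The query predicate $g$ occurs only as a standard (non-adorned) atom in the modified rules of $\dmsqp$, so every ground instance of $\Q$ lies in $\BP$ and restricting to the query commutes with restricting to $\BP$; hence $M'|_\Q=\Mpp|_\Q$. Since Lemma~\ref{lem:magic_variant_minimal_modelDUE} gives $M\supseteq\Mpp$, we get $\Mpp|_\Q\subseteq M|_\Q$ and the inclusion follows at once.

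The reverse inclusion $M|_\Q\subseteq M'|_\Q$ is where the query seed enters. By the construction of $\dmsqp$, \textbf{\emph{BuildQuerySeed}} emits the magic seed $\magic{g^\alpha(\t)}$ as a fact, whose magic version retains exactly the bound arguments of $\Q$, i.e.\ the query constants. Consequently, for every ground instance $g(\t)\vartheta$ of $\Q$ the associated magic atom $\magic{g^\alpha(\t)\vartheta}$ coincides with the seed fact $\magic{g^\alpha(\t)}$, because $\vartheta$ only affects free ($f$) positions, which the magic version projects away. Thus $\magic{g^\alpha(\t)\vartheta}$ belongs to $\variantqpm{1}\subseteq M'$ (facts have empty positive body in Definition~\ref{def:magic_variant}). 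Now take any $g(\t)\vartheta\in M|_\Q$: we have $g(\t)\vartheta\in M$ and $\magic{g^\alpha(\t)\vartheta}\in M'$, so the standard-atom clause of Definition~\ref{def:magic_variant} adds $g(\t)\vartheta$ to the magic variant, i.e.\ $g(\t)\vartheta\in M'$. Hence $M|_\Q\subseteq M'|_\Q$, and together with the first inclusion this yields $M'|_\Q=M|_\Q$.

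The only real obstacle is bookkeeping: one must verify that the single seed fact produced for $\Q$ accounts for all ground instances of the query simultaneously. This hinges precisely on the fact that $\magic{g^\alpha(\cdot)}$ discards the free positions and keeps only the bound ones, which are constants fixed by $\Q$; as a result the magic atoms of all ground instances of $\Q$ collapse onto the same seed. Once this observation is in place, no unfounded-set machinery is needed for this direction---in contrast to the soundness proof, where Proposition~\ref{prop:killed_unfounded} and Theorem~\ref{theo:unfounded} were required for the corresponding inclusion.
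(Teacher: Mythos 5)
Your proposal is correct, and it reaches the conclusion by a genuinely different route than the paper for the crucial inclusion. The skeleton is the same: both you and the paper start from Lemma~\ref{lem:magic_variant_minimal_modelDUE} (stability of $M'=\variantqpm{\infty}$ and $M\supseteq\Mpp$) and dispatch $M'|_\Q\subseteq M|_\Q$ trivially via $M'|_\Q=\Mpp|_\Q\subseteq M|_\Q$. The divergence is in proving $M|_\Q\subseteq M'|_\Q$. The paper re-invokes the unfounded-set machinery: it observes that $\BP|_\Q\setminus M'\subseteq\killedmpmp$ (because the seed supplies a magic atom for every ground instance of $\Q$), applies Proposition~\ref{prop:killed_unfounded} to get an unfounded set, and then Theorem~\ref{theo:unfounded} to conclude $M\cap\killedmpmp=\emptyset$, so no query instance in $M$ can lie outside $M'$. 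You instead argue directly from Definition~\ref{def:magic_variant}: since the magic version projects away the free positions, every ground instance $g(\t)\vartheta$ of $\Q$ has the seed itself as its magic atom, the seed (a ground fact of $\dmsqp$) enters $\variantqpm{1}$, and hence every query instance belonging to $M$ is injected into $\variantqpm{2}\subseteq M'$ by the standard-atom clause of the fixpoint. Both arguments are sound; yours is more elementary and local, and it exploits an asymmetry you correctly identify: in this completeness direction $M'$ is \emph{constructed from} $M$, so the fixpoint clause is available, whereas in the soundness direction (Theorem~\ref{thm:extending_minimal_models}) no such construction links the two models and the unfounded-set route is essential. What the paper's choice buys is uniformity---the same killed-set argument pattern serves Theorem~\ref{thm:extending_minimal_models}, Lemma~\ref{lem:magic_variant_minimal_modelDUE}, and this theorem---but since both proofs lean on Lemma~\ref{lem:magic_variant_minimal_modelDUE} anyway, your shortcut loses nothing in rigor while avoiding a redundant pass through Proposition~\ref{prop:killed_unfounded} and Theorem~\ref{theo:unfounded}.
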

\begin{proof}
Let $M$ be a stable model of $\p$ and $M' = \variantqpm{\infty}$ its magic variant. Because of
Lemma~\ref{lem:magic_variant_minimal_modelDUE}, $M'$ is a stable model of $\dmsqp$ such that $M \supseteq
\Mpp$. Thus, we trivially have that $M|_\Q \supseteq M'|_\Q$ holds. We now show the reverse inclusion.

Since $M'$ is a stable model of $\dmsqp$, we can determine the set $\killedmpmp$ as defined in
Section~\ref{sec:soundness}. Hence, by Definition~\ref{def:killed} we can conclude that (a) $\BP|_{\Q}
\setminus M' \subseteq \killedmpmp$ because $M'$ contains the magic seed by construction
(we recall that $\BP|_{\Q}$ denotes the ground instances of $\Q$). Moreover, since $M$ is
a stable model of $\p$ with $M\supseteq \Mpp$ and $\killedmpmp$ is an unfounded set for $\p$ with respect to $\tuple{\Mpp,\BP}$ by
Proposition~\ref{prop:killed_unfounded}, we can conclude that (b) $M \cap
\killedmpmp = \emptyset$ by Theorem~\ref{theo:unfounded}. Thus, by combining (a) and (b) we obtain that $(\BP|_{\Q} \setminus M') \cap M =
\emptyset$, which is equivalent to $M|_\Q \subseteq M'|_\Q$.
\end{proof}

Finally, we show the correctness of the Magic Set method with respect to query 
answering, that is, we prove that the original and rewritten programs
provide the same answers for the input query on all possible EDBs.

\begin{theorem}
\label{theo:dms_equivalence} Let $\p$ be a $\datdn$ program, and let $\Q$ be a query. Then $\dmsqp \bqequiv{\Q}
\p$ and $\dmsqp \cqequiv{\Q} \p$ hold.
\end{theorem}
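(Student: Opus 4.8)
The plan is to reduce the query-equivalence statement to the two model-correspondence results already established, namely Theorem~\ref{thm:extending_minimal_models} (soundness) and Theorem~\ref{theo:extending_minimal_models} (completeness). First I would observe that the rewriting operates only on $\IDB{\p}$ and on the query, passing the EDB through unchanged in step~\emph{10} of the algorithm. Consequently, for any set of facts $\F$ over EDB predicates one has $\DMS(\Q,\p \cup \F) = \dmsqp \cup \F$, since adding EDB facts changes neither the IDB rules nor the SIPS on which \textbf{\emph{Adorn}}, \textbf{\emph{Generate}}, and \textbf{\emph{Modify}} depend. This reduces the claim to showing that $\p$ and $\dmsqp$ yield the same brave and cautious answers, the argument then applying uniformly with $\p \cup \F$ in place of $\p$ for every admissible $\F$.

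Next I would translate the model-level correspondence into an answer-level one. Recall that $\vartheta \in \Ans_b(\Q,\p)$ iff ${g(\t)}\vartheta \in M$ for some $M \in \SM(\p)$, and $\vartheta \in \Ans_c(\Q,\p)$ iff ${g(\t)}\vartheta \in M$ for every $M \in \SM(\p)$; since ${g(\t)}\vartheta$ is a ground instance of $\Q$, membership in $M$ coincides with membership in $M|_\Q$. For brave reasoning, if $\vartheta \in \Ans_b(\Q,\dmsqp)$ is witnessed by a stable model $M'$ of $\dmsqp$, then Theorem~\ref{thm:extending_minimal_models} yields a stable model $M$ of $\p$ with $M|_\Q = M'|_\Q$, hence ${g(\t)}\vartheta \in M$ and $\vartheta \in \Ans_b(\Q,\p)$; the reverse inclusion is symmetric via Theorem~\ref{theo:extending_minimal_models}. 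This gives $\Ans_b(\Q,\p) = \Ans_b(\Q,\dmsqp)$.

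For cautious reasoning I would prove both inclusions with the same tools, now quantifying over all stable models. If $\vartheta \in \Ans_c(\Q,\p)$, take any $M' \in \SM(\dmsqp)$; Theorem~\ref{thm:extending_minimal_models} supplies $M \in \SM(\p)$ with $M|_\Q = M'|_\Q$, and from ${g(\t)}\vartheta \in M$ we obtain ${g(\t)}\vartheta \in M'|_\Q \subseteq M'$, so $\vartheta \in \Ans_c(\Q,\dmsqp)$; the reverse inclusion uses Theorem~\ref{theo:extending_minimal_models} for each $M \in \SM(\p)$. Here one must ensure that cautious consequence is not vacuous: since $\p$ is a $\datdn$ program we have $\SM(\p) \neq \emptyset$, and Theorem~\ref{theo:extending_minimal_models} applied to any such model shows $\SM(\dmsqp) \neq \emptyset$ as well.

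The main obstacle I anticipate lies not in this final bookkeeping, which is essentially immediate once the correspondence theorems are available, but in making the reduction step fully precise: one must verify that extending by EDB facts commutes with the transformation, i.e.\ $\DMS(\Q,\p\cup\F) = \dmsqp \cup \F$, and that the universal quantification over all EDBs $\F$ in the definitions of $\bqequiv{\Q}$ and $\cqequiv{\Q}$ is exactly matched by applying the already-proved results to $\p \cup \F$. The cautious direction additionally requires the non-vacuity observation above.
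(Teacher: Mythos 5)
Your proposal is correct and follows essentially the same route as the paper's own proof: reduce to a fixed EDB $\F$ via the observation that $\DMS(\Q,\p\cup\F)=\dmsqp\cup\F$, then derive the answer-set equalities for brave and cautious reasoning directly from Theorem~\ref{thm:extending_minimal_models} and Theorem~\ref{theo:extending_minimal_models}, using non-emptiness of $\SM(\p\cup\F)$ for the cautious case. The only difference is that you spell out the model-to-answer bookkeeping (and the non-vacuity of $\SM(\dmsqp)$ via the completeness theorem) that the paper leaves implicit in its phrase ``as a direct consequence.''
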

\begin{proof}
We want to show that, for any set of facts $\F$ defined over the EDB predicates
of $\p$ (and $\DMS(\Q,\p)$),
$\Ans_b(\Q, \DMS(\Q,\p) \cup \F) = \Ans_b(\Q, \p \cup \F)$ and
$\Ans_c(\Q, \DMS(\Q,\p) \cup \F) = \Ans_c(\Q, \p \cup \F)$ hold.
We first observe that the Magic Set rewriting does not depend on EDB facts;
thus, $\dmsqp \cup \F = \DMS(\Q, \p \cup \F)$ holds.
Moreover, note that $\datdn$ programs always have stable models.
Therefore, as a direct consequence of Theorem~\ref{thm:extending_minimal_models}
and Theorem~\ref{theo:extending_minimal_models}, we can conclude
$\Ans_b(\Q, \DMS(\Q,\p \cup \F)) = \Ans_b(\Q, \p \cup \F)$ and
$\Ans_c(\Q, \DMS(\Q,\p \cup \F)) = \Ans_c(\Q, \p \cup \F)$.
\end{proof}

\subsection{Magic Sets for Stratified $\dat$ Programs without Disjunction}
\label{sec:no_disjunction}

Stratified $\dat$ programs without disjunction have exactly one stable model \cite{gelf-lifs-88}.
However, the Magic Set transformation can introduce new dependencies between predicates,
possibly resulting in unstratified programs (we refer to the analysis in \cite{kemp-etal-95}).
Clearly, original and rewritten programs agree on the query, as proved in the previous section,
but the question whether the rewritten program admits a unique stable model
is also important.
In fact, for programs having the unique stable model property,
brave and cautious reasoning coincide and a solver can immediately answer the
query after the first (and unique) stable model is found.
The following theorem states that the rewritten program of a stratified program indeed has a unique stable model.

\begin{theorem}
\label{theo:stratified_unique}
Let $\p$ be a disjunction-free $\dat$ program with stratified negation
and $\Q$ a query.
Then $\dmsqp$ has a unique stable model.
\end{theorem}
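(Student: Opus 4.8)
The plan is to avoid the route one might first try — proving that $\dmsqp$ is stratified and then appealing to uniqueness for stratified programs — because, as the reference to \cite{kemp-etal-95} already warns, the rewriting may introduce recursion through negation and destroy stratification (such an example can be built already in the propositional case, e.g.\ a rule $p \derives q, \naf s$ together with $q \derives p$, where the magic rule for $s$ depends on $q$ and hence closes a cycle through the negative dependency $p \to s$). Instead I would obtain uniqueness by combining the magic-variant construction with the classical fact that the stable models of a \emph{normal} (disjunction-free) program are pairwise incomparable under set inclusion. The first observation is that $\dmsqp$ is itself normal: since $\p$ is disjunction-free, every modified rule retains its single head atom and every magic rule has a single magic head atom, so no disjunction is ever created.

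For existence, let $M$ be the unique stable model of $\p$, which exists because $\p$ is disjunction-free and stratified. By Lemma~\ref{lem:magic_variant_minimal_modelDUE} its magic variant $V := \variantqpm{\infty}$ is a stable model of $\dmsqp$. The heart of the argument is then to show that \emph{every} stable model $N'$ of $\dmsqp$ satisfies $V \subseteq N'$, which I would prove by induction on the stages of Definition~\ref{def:magic_variant}, establishing $\variantqpm{i} \subseteq N'$ for all $i$. The base case is immediate since $\variantqpm{0} = \EDB{\p}$ consists of facts. In the inductive step, a newly added magic atom $\magic{p^\alpha(\t)}$ is the head of a ground magic rule whose positive body lies in $\variantqpm{i} \subseteq N'$; as magic rules carry no negation and $N'$ is a model of $\dmsqp$, that atom lies in $N'$. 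For a newly added standard atom $p(\t)$ we have $p(\t) \in M$ and $\magic{p^\alpha(\t)} \in \variantqpm{i} \subseteq N'$; if $p(\t) \notin N'$, then $p(\t) \in \killed{N'}{N'}{\Q}{\p}$ by Definition~\ref{def:killed}, this killed set is unfounded for $\p$ with respect to $\tuple{N'|_{\BP},\BP}$ by Proposition~\ref{prop:killed_unfounded}, and since $N'|_{\BP} \subseteq M$ (Lemma~\ref{lem:one_minimal_model} together with the uniqueness of $M$), Theorem~\ref{theo:unfounded} gives $M \cap \killed{N'}{N'}{\Q}{\p} = \emptyset$, contradicting $p(\t) \in M$. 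Hence $p(\t) \in N'$ and the induction yields $V \subseteq N'$.

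To conclude, I would invoke incomparability of stable models of normal programs: if $V \subseteq N'$ are both stable models of the disjunction-free program $\dmsqp$, then the reduct satisfies $\ground{\dmsqp}^{N'} \subseteq \ground{\dmsqp}^{N'}$ with more rules surviving for the smaller interpretation, i.e.\ $\ground{\dmsqp}^{N'} \subseteq \ground{\dmsqp}^{V}$, so the least models of these definite reducts satisfy $N' \subseteq V$; combined with $V \subseteq N'$ this forces $N' = V$. Since $N'$ was an arbitrary stable model, $V$ is the only one, which is the claim. I expect the main obstacle to be precisely the standard-atom case of the induction: there is no purely local reason why a relevant true atom of $M$ should survive in $N'$, and it is exactly here that the link between killed atoms, unfounded sets, and the uniqueness of $M$ must be used. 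The non-stratification of $\dmsqp$ is what rules out a syntactic argument and forces this semantic detour through Proposition~\ref{prop:killed_unfounded} and Theorem~\ref{theo:unfounded}.
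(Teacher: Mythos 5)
Your proposal matches the paper's own proof essentially step for step: existence via the magic variant of the unique stable model $M$ of $\p$ (Lemma~\ref{lem:magic_variant_minimal_modelDUE}); containment of that variant in every stable model $N'$ of $\dmsqp$ by induction on the stages of Definition~\ref{def:magic_variant}, with the magic-atom case handled by direct rule satisfaction and the standard-atom case resolved exactly as you do it, through killed atoms, Proposition~\ref{prop:killed_unfounded}, Lemma~\ref{lem:one_minimal_model}, Theorem~\ref{theo:unfounded} and the uniqueness of $M$; and the conclusion by incomparability of stable models. The only cosmetic difference is that the paper simply cites incomparability (which holds for disjunctive programs too), whereas you derive it for the disjunction-free case from the least-model property of the reduct; both are valid.
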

\begin{proof}
Let $M$ be the unique stable model of $\p$,
and $M' = \variantqpm{\infty}$ its magic variant as presented in Definition~\ref{def:magic_variant}.
By Lemma~\ref{lem:magic_variant_minimal_modelDUE} we already know that
$M'$ is a stable model of $\dmsqp$.
We now show that any stable model $N'$ of $\dmsqp$ contains $M'$
by induction on the structure of $M'$.
The base case ($\variantqpm{0} \subseteq N'$) is clearly true, since $\variantqpm{0}$ contains only
EDB facts.
Suppose $\variantqpm{i} \subseteq N'$ in order to prove that $\variantqpm{i+1} \subseteq N'$
holds as well.
Thus, while considering an atom in $\variantqpm{i+1} \setminus \variantqpm{i}$, two cases are
possible:
\begin{description}
\item[(1)] For a magic atom $\magic{p^\alpha(\t)}$ in $\variantqpm{i+1} \setminus \variantqpm{i}$, by
Definition~\ref{def:magic_variant} there must be a rule $\R_g^\magica \in \ground{\dmsqp}$ having $\head{\R_g^\magica} =
\{\magic{p^\alpha(\t)}\}$ and $\posbody{\R_g^\magica} \subseteq \variantqpm{i}$
(we recall that magic rules have empty negative bodies and so
$\R_g^\magica \in \ground{\dmsqp}^{N'}$ holds).
We can then conclude that $\posbody{\R_g^\magica}
\subseteq N'$ holds by the induction hypothesis and so $\magic{{p^\alpha(\t)}} \in N'$
(because $N'$ is a model of $\ground{\dmsqp}^{N'}$).

\item[(2)] For a standard atom $p(\t)$ in $\variantqpm{i+1} \setminus \variantqpm{i}$, by
Definition~\ref{def:magic_variant} there is a binding $\alpha$ such that ${\magic{p^\alpha(\t)}} \in
\variantqpm{i}$ and the atom $p(\t)$ belongs to $M$. Assume for the sake of contradiction that ${p(\t)}\not\in
N'$. Since $N'$ is a stable model of $\dmsqp$, we can compute the set $\killed{N'}{N'}{\Q}{\p}$ as introduced in
Section~\ref{sec:soundness} and note, in particular, that ${p(\t)}\in \killed{N'}{N'}{\Q}{\p}$ holds, by definition.
Moreover, by Proposition~\ref{prop:killed_unfounded}, $\killed{N'}{N'}{\Q}{\p}$ is an unfounded set for $\p$ with respect to $\tuple{\Npp,\BP}$.
In addition, by Lemma~\ref{theo:extending_minimal_models} there is a stable model $N$ of $\p$
such that $N \supseteq \Npp$, which would mean that ${p(\t)} \not\in N$ holds.
Hence, we can conclude that $N$ and $M$ are two different stable models of $\p$,
obtaining a contradiction, as $\p$ has a unique stable model.
\end{description}

Since stable models are incomparable with respect to containment, $M' \subseteq N'$
implies $M' = N'$. Hence, $M'$ is the unique stable model of $\dmsqp$.
\end{proof}

\section{Implementation}\label{sec:system}

The Dynamic Magic Set method ($\DMS$) has been implemented and
integrated into the core of the \dlv \cite{leon-etal-2002-dlv}
system. In this section, we shall first briefly describe the
architecture of the system and its usage. We then
briefly present an optimization for eliminating redundant rules, which
are sometimes introduced during the Magic Set rewriting.

\subsection{System Architecture and Usage}

We have created a prototype system by implementing the Magic Set technique described in Section~\ref{sec:magic_set-disjunctive_datalog}
inside \dlv, as shown in the architecture reported in Figure~\ref{fig:architecture}.
\dlv supports both brave and cautious reasoning, and for a completely ground
query it can be also used for computing all stable models in which the query is true.
\dlv performs brave reasoning if invoked with
the command-line option {\tt -FB}, while {\tt -FC} indicates cautious reasoning.

In our prototype, the $\DMS$ algorithm is applied automatically by default when the user invokes \dlv with {\tt -FB} or {\tt -FC}
together with a (partially) bound query. Magic Sets are not applied by default if the query does not contain any constant.
The user can modify this default behavior by specifying the command-line options {\tt -ODMS} (for applying Magic Sets) or {\tt -ODMS-} (for disabling Magic Sets).

If a completely bound query is specified, \dlv can print the magic variant of the stable model (not displaying magic predicates), which witnesses the truth
(for brave reasoning) or the falsity (for cautious reasoning) of the query,
by specifying the command-line option {\tt --print-model}.

Within \dlv, \DMS\ is applied immediately after parsing the program
and the query by the {\em Magic Set Rewriter} module.  The rewritten
(and optimized as described in Section~\ref{sec:redundant_rules})
program is then processed by the {\em Intelligent Grounding} module
and the {\em Model Generator} module using the implementation of \dlv.
The only other modification is for the output and its filtering: For ground queries, the
witnessing stable model is no longer printed by default, but only if
{\tt --print-model} is specified, in which case the magic predicates
are omitted from the output.

The SIPS schema\footnote{Since technically a SIPS has a definition for
  every single rule, implementations use a schema for creating the
  SIPS for a given rule.}  implemented in the prototype is as follows:
For a rule $r$, head atom $p(\t)$ and binding $\alpha$,
$\prec^{p^\alpha(\t)}_r$ satisfies the conditions of
Definition~\ref{def:sip2}, in particular ${p(\t)} \prec^{
  p^\alpha(\t)}_r {q(\s)}$ holds for all ${q(\s)} \neq {
  p(\t)}$ in $r$, and ${q(\s)} \not \prec^{p^\alpha(\t)}_r
{b(\z)}$ holds for all head or negative body atoms ${q(s)}
\neq {p(\t)}$ and any atom $b(\z)$ in $r$.  Moreover, all the
positive body literals of $r$ form a chain in $\prec^{p^\alpha(\t)}_r$. This chain is constructed by iteratively inserting
those atoms containing most bound arguments (considering $\alpha$ and
also the partially formed chain and $f^{p^\alpha(\t)}_r$) into the
chain. Among the atoms with most bindings an arbitrary processing
order (usually the order appearing in the original rule body) is used.
Furthermore, $f^{p^\alpha(\t)}_r({q(\s)}) = {X}$ holds if
and only if $q(\s)$ belongs to the positive body of $r$, has at
least one bound argument and $X$ occurs in $\s$.

This means that apart from the head atom via which the rule is
adorned, only positive body atoms can yield variable bindings and only
if at least one of their arguments is bound, but both atoms with EDB
and IDB predicates can do so. Moreover, atoms with more bound
arguments will be processed before those with fewer bound arguments.

Note that in this work we did not study the impact of trying different
SIPS schemas, as we wanted to focus on showing the impact that our
technique can have, rather than fine-tuning its parameters. While we
believe that the SIPS schema employed is well-motivated, there
probably is quite a bit of room for improvement, which we leave for
future work.

\begin{figure}
 \centering
 \includegraphics[width=0.75\textwidth]{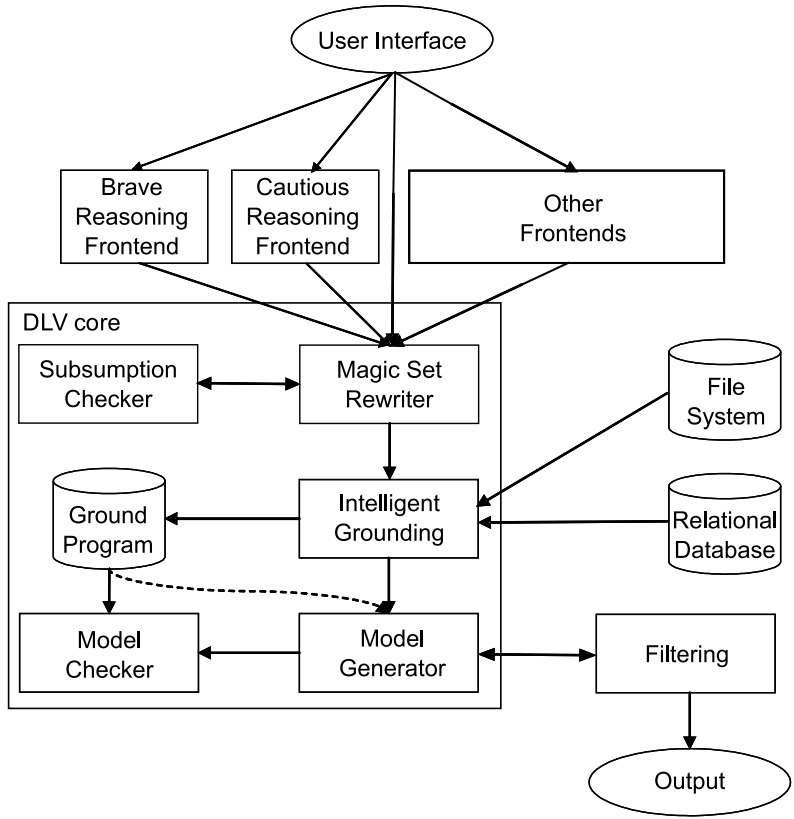}
 \caption{Prototype system architecture}
 \label{fig:architecture}
\end{figure}

An executable of the \dlv system  supporting the Magic Set optimization is available at \url{http://www.dlvsystem.com/magic/}.

\subsection{Dealing with Redundant Rules}\label{sec:redundant_rules}

Even though our rewriting algorithm keeps the amount of generated rules low,
it might happen that
some redundant rules are generated when adorning disjunctive rules, thereby somewhat deteriorating the optimization effort.
For instance, in Example~\ref{running4} the first two modified rules are semantically equivalent, and this might happen
even if the two head predicates differ. In general not only duplicated rules might be created, but
also rules which are logically subsumed by other rules in the program. Let us first give the definition of subsumption for $\datdn$ rules.

\begin{definition}\label{def:subsumed}
Let $\P$ be a $\datdnr$ program, and let $\R$ and $\R'$ be two rules of $\P$. Then, $\R$ is
\emph{subsumed} by $\R'$ (denoted by $\R \sqsubseteq \R'$) if there exists a substitution $\vartheta$ for
the variables of $\R'$, such that $\head{\R'} \vartheta \subseteq \head{\R}$ and $\body{\R'} \vartheta
\subseteq \body{\R}$. A rule $\R$ is \emph{redundant} if there exists a rule $\R'$ such that
$\R \sqsubseteq \R'$.
\end{definition}

Ideally, a Magic Set rewriting algorithm should be capable of identifying all the possible redundant rules and
removing them from the output. Unfortunately, this approach is unlikely to be feasible in polynomial time, given
that subsumption checking on first-order expressions is \NP-complete (problem [LO18] in \cite{gare-john-79}).

Thus, in order to identify whether a rule $\R$ produced during the Magic Set transformation is redundant, we
pragmatically apply a greedy subsumption algorithm in our implementation, for checking whether $\R \sqsubseteq \R'$
holds for some rule $\R'$. In particular, the employed heuristics aims at building the substitution $\vartheta$ (as
in Definition~\ref{def:subsumed}) by iteratively choosing an atom $p(\t)$ (which is not yet processed) from
$\R'$ and by matching it (if possible) with some atom of $\R$. The greedy approach prefers
those atoms of $\R'$ with the maximum number of variables not yet matched.

To turn on subsumption checking (applied once after the Magic Set rewriting), \dlv\ has to be
invoked with the command-line option {\tt -ODMS+}.

\section{Experiments on Standard Benchmarks}\label{sec:experiments}

We performed several experiments for assessing the
effectiveness of the proposed technique.
In this section we present the results obtained on various standard benchmarks, most of
which have been directly adopted from the literature.
Further experiments on an application scenario using real-world data
will be discussed in detail in
Section~\ref{sec:dataintegration}. 
We also refer to \cite{lian-etal-2009,moti-satt-2006} that contain
performance evaluations involving $\DMS$; in \cite{lian-etal-2009}
$\dlv$ with $\DMS$ was tested on Semantic Web reasoning tasks and
confronted with a heterogeneous set of systems, in
\cite{moti-satt-2006} the system KAON2, which includes a version of
$\DMS$, is confronted against other ontology systems. In both
publications the impact of magic sets is stated explicitly.

\subsection{Compared Methods, Benchmark Problems and Data}
\label{sec:experiments-settings}

In order to evaluate the impact of the proposed method, we have
compared $\DMS$ (using the SIPS defined outlined in Section~\ref{sec:system}) both with the traditional \dlv evaluation without
\emph{Magic Sets} and with the $\SMS$ method proposed in
\cite{grec-2003}. Concerning $\SMS$, we were not able to obtain an
implementation, and have therefore performed the rewriting
manually. As a consequence, the runtime measures obtained for $\SMS$
do not contain the time needed for rewriting, while it is included for
$\DMS$.

For the comparison, we consider the following benchmark problems. The first three of them had been already used to assess $\SMS$ in
\cite{grec-2003}, to which we refer for details:

\begin{itemize}
\item \emph{Simple Path:} Given a directed graph $G$ and two nodes $a$ and
$b$, does there exist a unique path connecting $a$ to $b$ in $G$?
The instances are encoded by facts $\tt edge(v_1,v_2)$ for each arc $(v_1,v_2)$ in $G$, while the problem itself is encoded by the program\footnote{The first rule of the program models
that for each node $X$ of $G$, a unique path connecting $X$ with itself can
either exist or not.}
\begin{dlvcode}
\tt sp(X,X)\ \Or\ not\_\, sp(X,X) \derives edge(X,Y).\\
\tt sp(X,Y)\ \Or\ not\_\, sp(X,Y) \derives sp(X,Z),\ edge(Z,Y).\\
\tt path(X,Y) \derives sp(X,Y).\\
\tt path(X,Y) \derives not\_\, sp(X,Y).\\
\tt not\_\, sp(X,Z) \derives path(X,Y_1),\ path(X,Y_2),\ Y_1<>Y_2,\\
\tt \phantom{not\, sp(X,Z) \derives} edge(Y_1,Z), edge(Y_2,Z).
\end{dlvcode}
with the query $\tt sp(a,b)$.
The structure of the graph, which is the same as the one reported in
\cite{grec-2003}, consists of a square matrix of nodes connected as shown in Figure~\ref{fig:istanzeSP},
and the instances have been generated by varying
of the number of nodes.

\begin{figure}[t]
 \centering
 \subfigure{\centering \includegraphics[height=145pt]{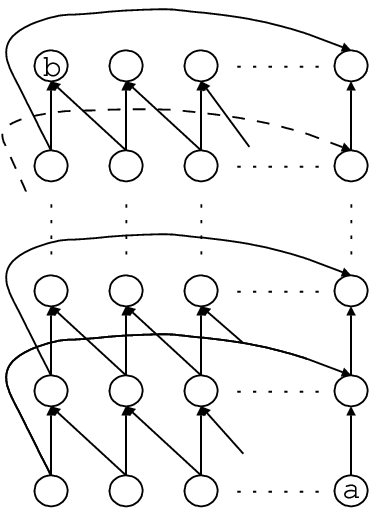}}
 \hspace{1.5cm}
 \subfigure{\centering \includegraphics[height=145pt]{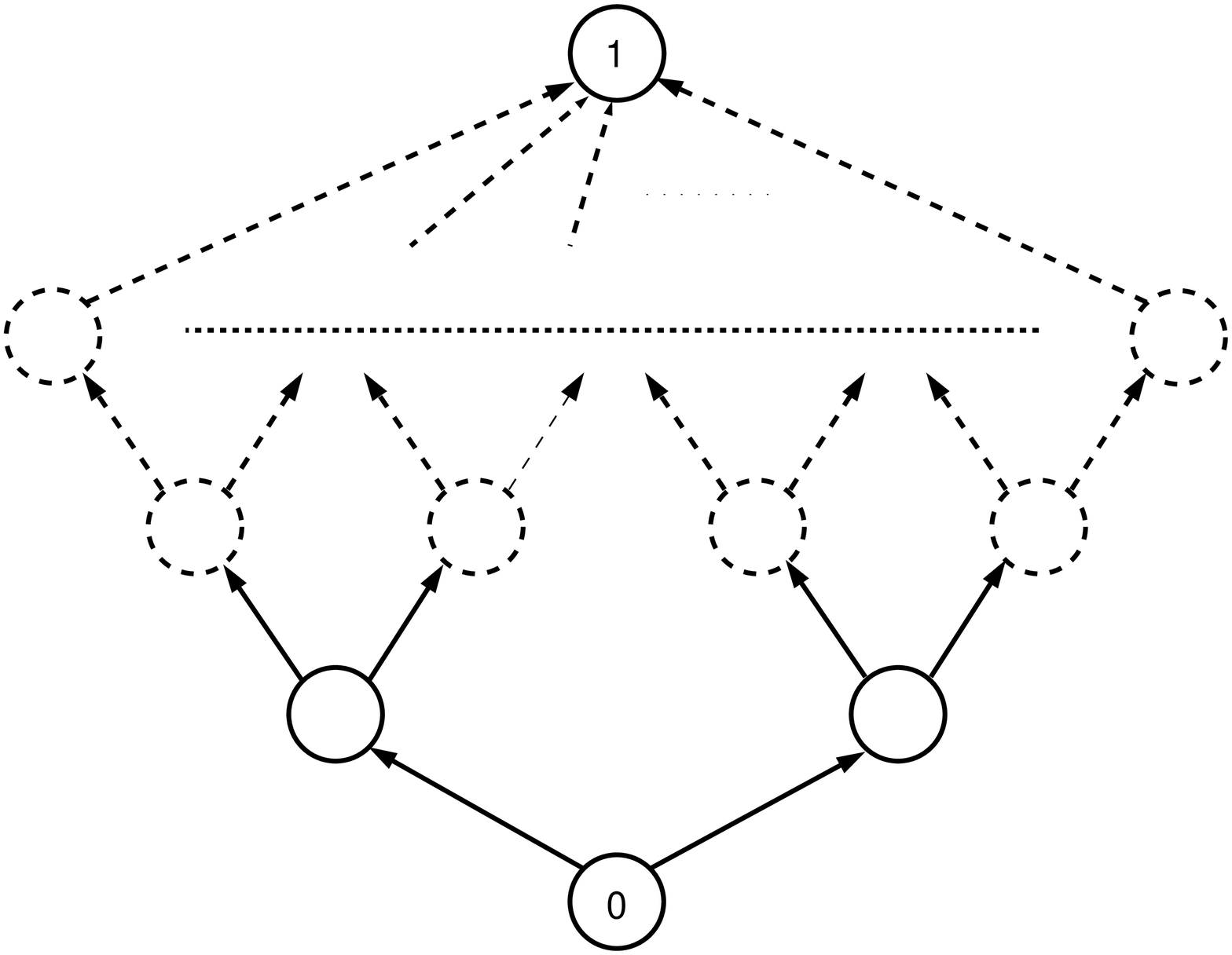}}
 \caption{Instances structure of \emph{Simple Path} and \emph{Related} (left)
  and of \emph{Conformant Plan Checking} (right)} \label{fig:istanzeSP}
\end{figure}

\item \emph{Related:} Given a genealogy graph storing information about
relationships (father/brother) among people and given two people $p_1$ and
$p_2$, is $p_1$ an ancestor of $p_2$?
The instances are encoded by facts $\tt related(p_1,p_2)$ when $p_1$ is known to be related to $p_2$, that is, when $p_1$ is the father or a brother of $p_2$.
The problem can be encoded by the program
\begin{dlvcode}
\tt father(X,Y)\ \Or\ brother(X,Y) \derives related(X,Y). \\
\tt ancestor(X,Y) \derives father (X,Y). \\
\tt ancestor(X,Y) \derives father(X,Z),\ ancestor(Z,Y).
\end{dlvcode}
and the query is $\tt ancestor(p_1,p_2)$.
The structure of the ``genealogy'' graph is the same as the one presented in
\cite{grec-2003} and coincides with the one used for testing \emph{Simple Path}.
Also in this case, the instances are generated by varying
the number of nodes (thus the number of persons in the genealogy) of the
graph.

\item \emph{Strategic Companies:} This is a slight variant of the problem domain used in the running example. The description here is of the problem as posed in the Third ASP Competition. 
We consider a collection $C$  of  companies, where each company produces some
goods in a set $G$ and each company ${c_i}\in C$ is controlled by a set of owner companies $O_{i} \subseteq
C$. A subset of the companies $C' \subset C$ is a \emph{strategic set} if it is a minimal set of companies
producing all the goods in $G$, such that if $O_{i} \subseteq C'$ for some $i=1,\ldots,m$ then ${c_i} \in C'$
must hold. As in the Second Answer Set
Competition,\footnote{\url{http://www.cs.kuleuven.be/~dtai/events/ASP-competition/index.shtml}} we assume that
each product is produced by at most four companies, and that each company is controlled by at most four
companies (the complexity of the problem under these restrictions is as hard as without them). Given two distinct
companies ${c_i}, {c_j} \in C$, is there a strategic set of $C$ which contains both $c_i$ and $c_j$?
The instances are encoded by facts $\tt produced\_by(p,c_1,c_2,c_3,c_4)$ when product $p$ is produced by companies $c_1, c_2, c_3,$ and $c_4$; if $p$ is produced by fewer than four companies (but at least one), then $c_1,c_2,c_3,c_4$ contains repetitions of companies. Moreover, facts $\tt controlled\_by(c,c_1,c_2,c_3,c_4)$ represent that company $c$ is controlled by companies $c_1, c_2, c_3,$ and $c_4$; again, if $c$ is controlled by fewer than four companies, then $c_1,c_2,c_3,c_4$ contains repetitions.
The problem can be encoded by the program
\begin{dlvcode}
\tt st(C_1)\ \Or\ st(C_2)\ \Or\ st(C_3)\ \Or\ st(C_4) \derives produced\_by(P,C_1,C_2,C_3,C_4). \\
\tt st(C) \derives controlled\_by(C,C_1,C_2,C_3,C_4), st(C_1), st(C_2), st(C_3), st(C_4).
\end{dlvcode}
with the query $\tt st(c_i),\ st(c_j)$. While the language presented in the previous sections allowed only for
one atom in a query for simplicity, the implementation in \dlv{} allows for a conjunction in a query; it is easy
to see that a conjunctive query can be emulated by a rule with the conjunction in the body and an atom with a
new predicate in the head, which contains all body arguments, and finally replacing the query conjunction with
this atom. In this case this would mean adding a rule $\tt q(c_i,c_j) \derives st(c_i),\ st(c_j)$ and replacing
the query by $\tt q(c_i,c_j)$. For this benchmark we used the instances submitted for the Second Answer Set
Competition.

\item \emph{Conformant Plan Checking:}
In addition, we have included a benchmark problem, which highlights
the fact that our Magic Set technique can yield improvements not only
for the grounding, but also for the model generation phase, as discussed in Section~\ref{sec:relatedwork}. This problem is
inspired by a setting in planning, in particular testing whether a
given plan is conformant with respect to a state transition
diagram \cite{gold-bodd-96}. Such a diagram is essentially a directed graph formed of
nodes representing states, and in which arcs are labeled by actions,
meaning that executing the action in the source state will lead to the
target state. In the considered setting nondeterminism is allowed,
that is, executing an action in one state might lead
nondeterministically to one of several successor states. A plan is a
sequence of actions, and it is conformant with respect to a given
initial state and a goal state if each possible execution of the
action sequence leads to the goal state.

In our benchmark, we assume that the action selection process has
already been done, thus having reduced the state transition diagram to
those transitions that actually occur when executing the given
plan. Furthermore we assume that there are exactly two possible
non-goal successor states for any given state. This can also be viewed
as whether all outgoing paths of a node in a directed graph reach a
particular confluence node. We encoded instances by facts $\tt
ptrans(s_0,s_1,s_2)$ meaning that one of states $\tt s_1$ and $\tt s_2$ will be
reached in the plan execution starting from $\tt s_0$. The problem is
encoded using
\begin{dlvcode}
\tt trans(X,Y)\ \Or\ trans(X,Z)\ \derives\ ptrans(X,Y,Z).\\
\tt reach(X,Y)\ \derives\ trans(X,Y).\\
\tt reach(X,Y)\ \derives\ reach(X,Z),\ trans(Z,Y).
\end{dlvcode}
and the query $\tt reach(0,1)$, where $0$ is the initial state and
$1$ the goal state.  If the query is cautiously true, the plan is
conformant.  The transition graphs in our experiments
have the shape of a binary tree rooted in state $0$, and from each leaf
there is an arc to state $1$, as depicted in
Figure~\ref{fig:istanzeSP}.
\end{itemize}

In addition, we have performed further experiments on an application
scenario modeled from real-world data for answering user queries in a
data integration setting. These latter experiments will be discussed
in more detail in Section~\ref{sec:dataintegration}.

\subsection{Results and Discussion}

The experiments have been performed on a 3GHz Intel$^{\scriptsize\textregistered}$ Xeon$^{\scriptsize\textregistered}$
processor system with 4GB RAM under the Debian 4.0 operating system with a GNU/Linux 2.6.23 kernel.
The \dlv prototype used has been compiled using GCC 4.3.3. For each instance,
we have allowed a maximum running time of 600 seconds (10 minutes) and a
maximum memory usage of 3GB.

On all considered problems, $\DMS$ outperformed $\SMS$, even if
$\SMS$ does not include the rewriting time, as discussed in
Section~\ref{sec:experiments-settings}. Let us analyze the
results for each problem in more detail.

The results for \emph{Simple Path} are reported in
Figure~\ref{fig:simplepath}. \dlv without Magic Sets solves only
the smallest instances, with a very steep increase in execution
time. $\SMS$ does better than \dlv, but scales much worse than
$\DMS$. The difference between $\SMS$ and $\DMS$ is mostly due to the
grounding of the additional predicates that $\SMS$ introduces.

\begin{figure}[t]
 \centering
\includegraphics{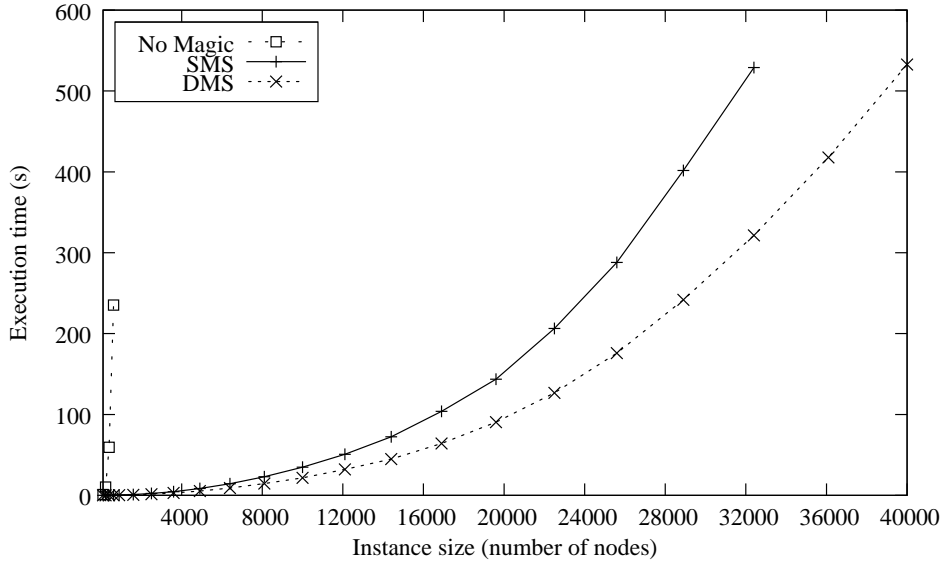}
 \caption{{\em Simple Path:} Average execution time}
 \label{fig:simplepath}
\end{figure}

Figure~\ref{fig:related} reports the results for \emph{Related}.
Compared to Simple Path, \dlv without
Magic Sets exhibits an even steeper increase in runtime, while in
contrast both $\SMS$ and $\DMS$ scale better than on Simple
Path.
Comparing $\SMS$ and $\DMS$,
we note that $\DMS$ appears to have an exponential speedup over $\SMS$.
In this case, the computational gain of $\DMS$ over $\SMS$ is due to the
dynamic optimization of the model search phase resulting from our Magic Sets
definition. This aspect is better highlighted by the
{\em Conformant Plan Checking} benchmark, and will be discussed later in this
section.

\begin{figure}[t]
 \centering
 \includegraphics{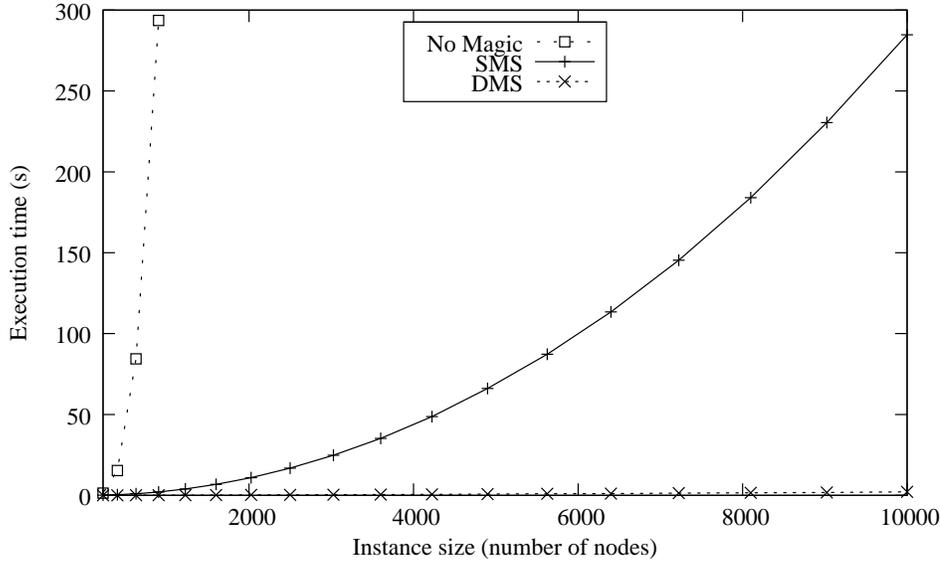}
 \caption{{\em Related:} Average execution time}
 \label{fig:related}
\end{figure}

For \emph{Strategic Companies}, we report the results in
Figure~\ref{fig:strategicCompanies} as a bar diagram, because the
instances do not have a uniform structure. The instances are, however,
ordered by size. Also here, \dlv without Magic Sets is clearly the
least efficient of the tested systems, resolving only the smallest two
instances in the allotted time (600 seconds). Concerning the other systems, $\SMS$
and $\DMS$ essentially show equal performance. In fact, the
situation here is quite different to Simple Path and Related, because
grounding the program produced by the Magic Set rewriting takes only a
negligible amount of time for $\SMS$ and $\DMS$. For this benchmark
the important feature is reducing the ground program to the part which is
relevant for the query, and we could verify that the ground programs
produced by $\SMS$ and $\DMS$ are precisely the same.

\begin{figure}[t]
 \centering
 \includegraphics{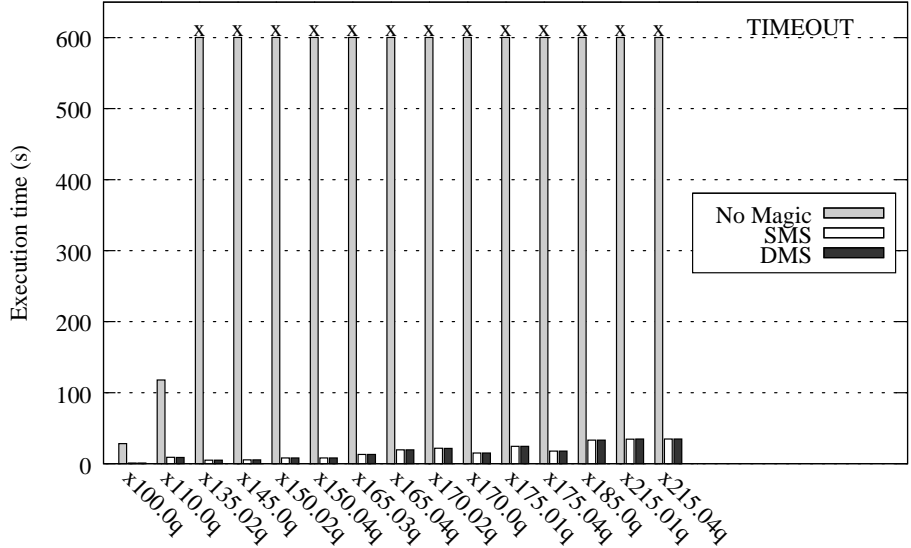}
 \caption{{\em Strategic Companies:} Average execution time}
 \label{fig:strategicCompanies}
\end{figure}

Finally, the results for {\em Conformant Plan Checking} are shown in
Figure~\ref{fig:planning}. While \dlv shows a similar behavior as
for Simple Path and Related, here also $\SMS$ does not scale well at
all, and in fact $\DMS$ appears to have an exponential
speedup over $\SMS$. There is a precise reason for this: While the
Magic Set rewriting of $\SMS$ always creates a deterministic program
defining the magic predicates, this is not true for $\DMS$. As a
consequence, all magic predicates are completely evaluated during the
grounding phase of \dlv for $\SMS$, while for $\DMS$ this is not the
case. At the first glance, this may seem like a disadvantage of
$\DMS$, as one might believe that the ground program becomes
larger. However, it is actually a big advantage of $\DMS$, because it
offers a more precise identification of the relevant part of the
program. Roughly speaking, whatever $\SMS$ identifies as relevant for
the query will also be identified as relevant in $\DMS$, but $\DMS$
can also include nondeterministic relevance information, which $\SMS$
cannot.  This means that in $\DMS$ Magic Sets can be exploited also
during the nondeterministic search phase of \dlv, dynamically
disabling parts of the ground program. In particular, after having
made some choices, parts of the program may no longer be relevant to
the query, but only because of these choices, and the magic atoms
present in the ground program can render these parts satisfied, which
means that they will no longer be considered in this part of the
search. $\SMS$ cannot induce any behavior like this and its effect is
limited to the grounding phase of \dlv, which can make a huge
difference, as evidenced by {\em Conformant Plan Checking}.

\begin{figure}[t]
 \centering
 \includegraphics{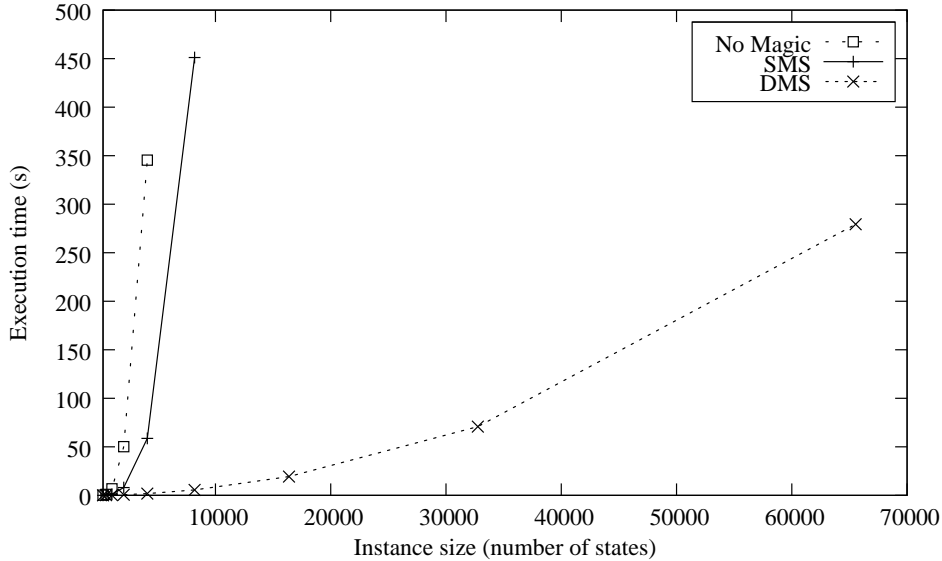}
 \caption{{\em Conformant Plan Checking:} Average execution time}
 \label{fig:planning}
\end{figure}

\subsection{Experimenting DMS with other Disjunctive Datalog Systems}

In order to assess the effectiveness of $\DMS$ on other systems than \dlv,
we tested the grounder Gringo \cite{gebs-etal-2007-lpnmr} with the following
solvers:
ClaspD \cite{dres-etal-2008-KR},
Cmodels \cite{lier-2005-lpnmr},
GnT1 and GnT2 \cite{janh-etal-2005-tocl}.
ClaspD is based on advanced Boolean constraint solving techniques, featuring
backjumping and conflict-driven learning.
Cmodels is based on the definition of program completion and loop formula for
disjunctive programs \cite{lee-lifs-2003,lin-zhao-2002}, and uses a SAT solver
for generating candidate solutions and testing them.
GnT1 is based on Smodels \cite{simo-etal-2002}, a system handling Datalog programs
with unstratified negation (normal programs): A disjunctive program is
translated into a normal program, the stable models of which are computed by
Smodels and represent stable model candidates of the original program. Each of
these candidates is then checked to be a stable model of the original program
by invoking Smodels on a second normal program.
GnT2 is a variant of GnT1 in which the number of candidates
produced by the first normal program is reduced by means of additional rules
that discard unsupported models, i.e., models containing some atom $a$ for which
there is no rule $\R$ such that $\body{\R}$ is true and $a$ is the only true
atom in $\head{\R}$.

All of the benchmarks presented in the previous section were tested on these
systems. Since $\DMS$ is not implemented in these systems, rewritten programs
were produced by \dlv during the preparation of the experiment. We recall
that $\DMS$ does not depend on EDB relations and point out that \dlv computes
rewritten programs for the considered encodings in 1-2 hundredths of a second.
The results of our experiment are reported in
Figures~\ref{fig:simplepath_other}--\ref{fig:planning_other}.
In general, we tried use a consistent scales in the graphs in order to
ease comparability. However, for some graphs we chose a different
scale in order to keep them readable for the main purpose (comparing
performances with and without $\DMS$), and we mention this explicitly
in the accompanying text.

\begin{figure}[t]
 \centering
 \includegraphics[width=0.45\textwidth,viewport=5 10 250 140]{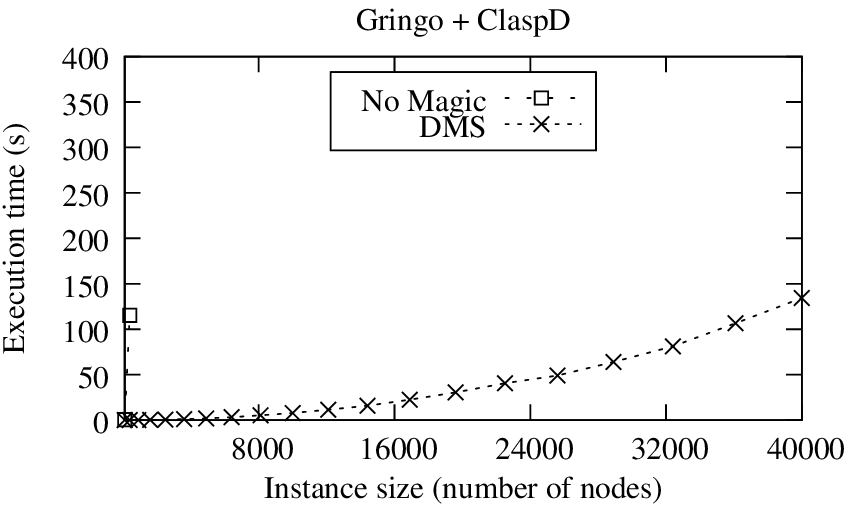}
 \includegraphics[width=0.45\textwidth,viewport=5 10 250 140]{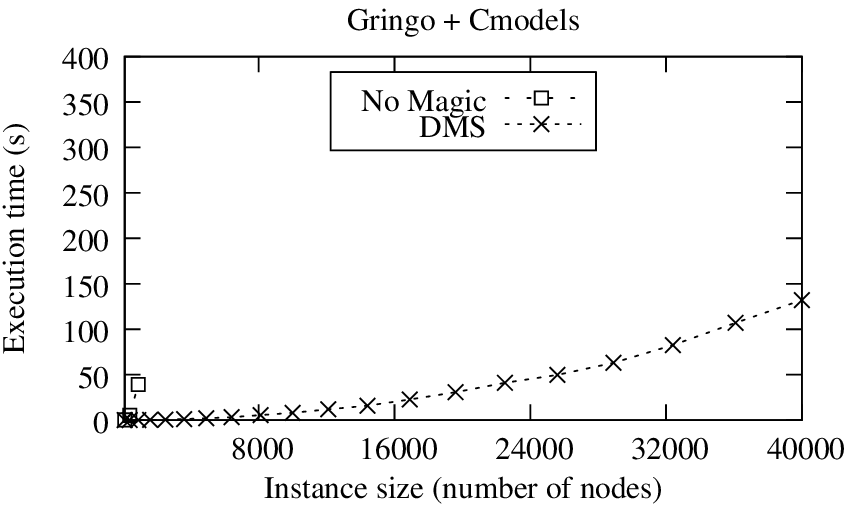}
 \\\vspace{1.5em}
 \includegraphics[width=0.45\textwidth,viewport=5 10 250 140]{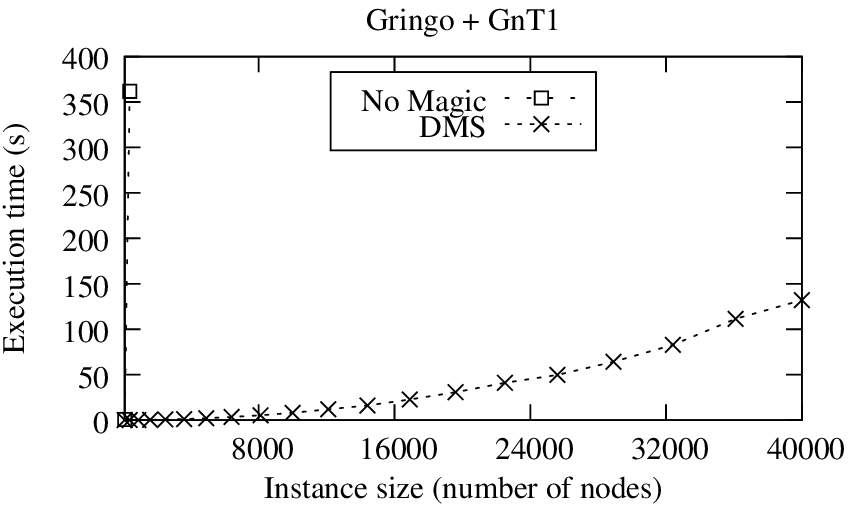}
 \includegraphics[width=0.45\textwidth,viewport=5 10 250 140]{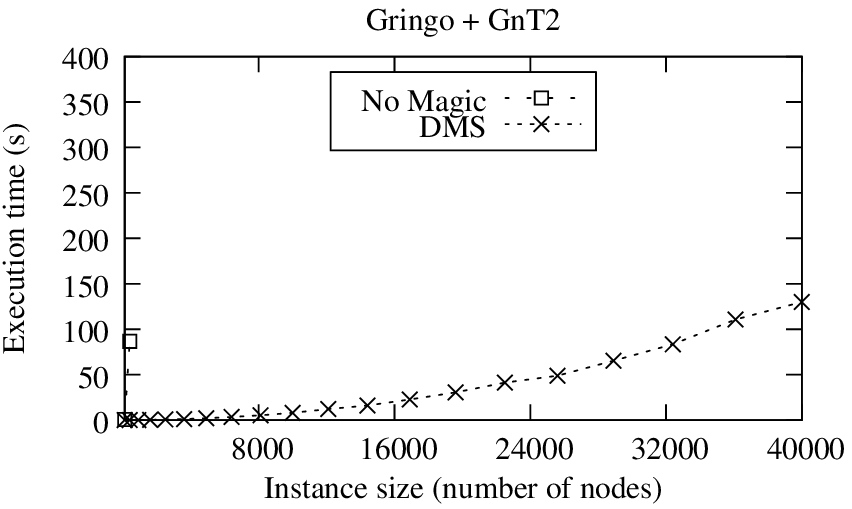}
 \caption{{\em Simple Path:} Average execution time on other systems}
 \label{fig:simplepath_other}
\end{figure}

Concerning \emph{Simple Path}, the advantages of $\DMS$ over the unoptimized
encoding are evident on all tested systems. In fact, as shown in
Figure~\ref{fig:simplepath_other}, without $\DMS$ all tested systems did not
answered in the allotted time (600 seconds) on instances with more than 400
nodes (900 for Cmodels). On the other hand, all of the instances considered in
the benchmark (up to 40 thousands of nodes) were solved by all tested solvers
with the $\DMS$ encoding. We also observe that with $\DMS$ the tested systems
are faster than \dlv in this benchmark, which is a clear indication of the
optimization potential that can be provided to these systems by our Magic Set
technique.

\begin{figure}[t]
 \centering
 \includegraphics[width=0.45\textwidth,viewport=5 10 250 140]{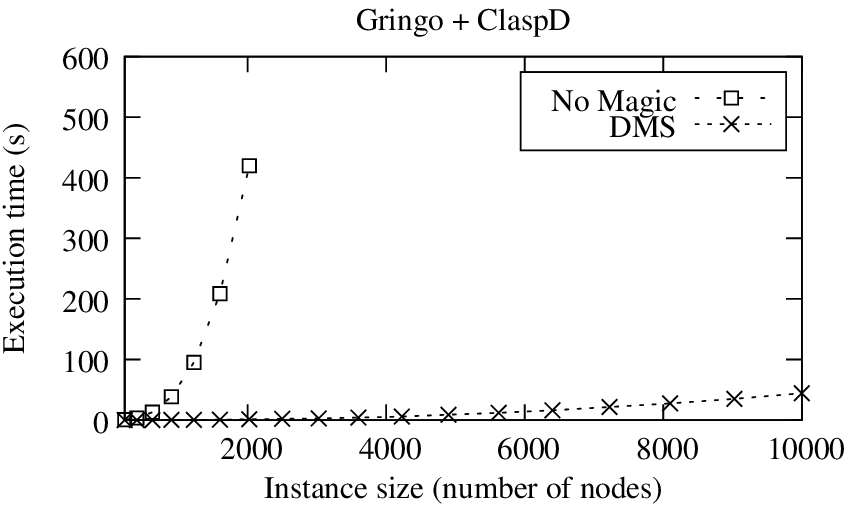}
 \includegraphics[width=0.45\textwidth,viewport=5 10 250 140]{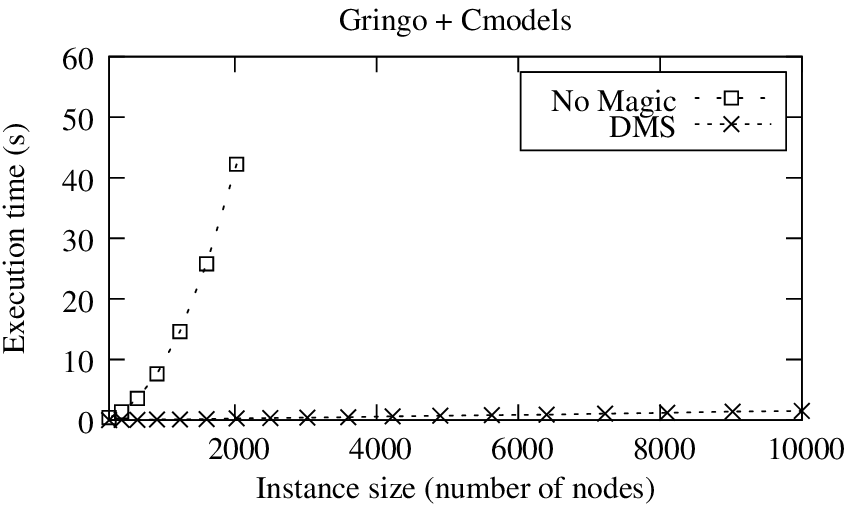}
 \\\vspace{1.5em}
 \includegraphics[width=0.45\textwidth,viewport=5 10 250 140]{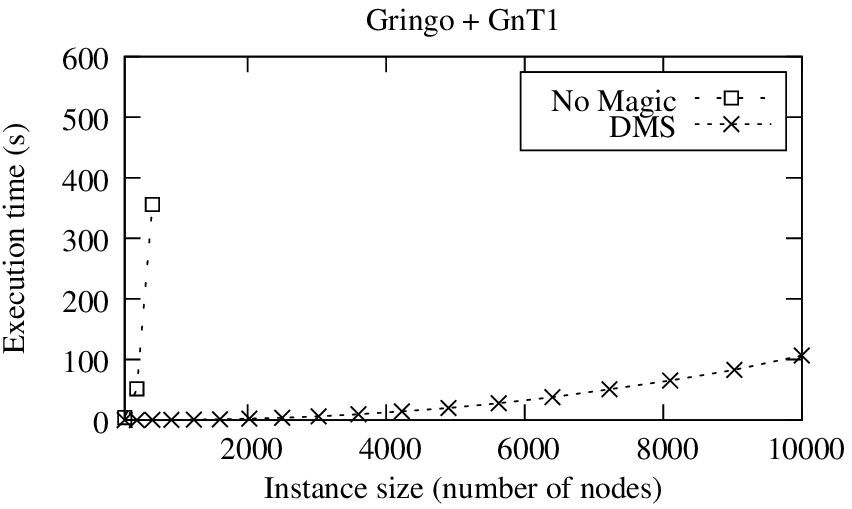}
 \includegraphics[width=0.45\textwidth,viewport=5 10 250 140]{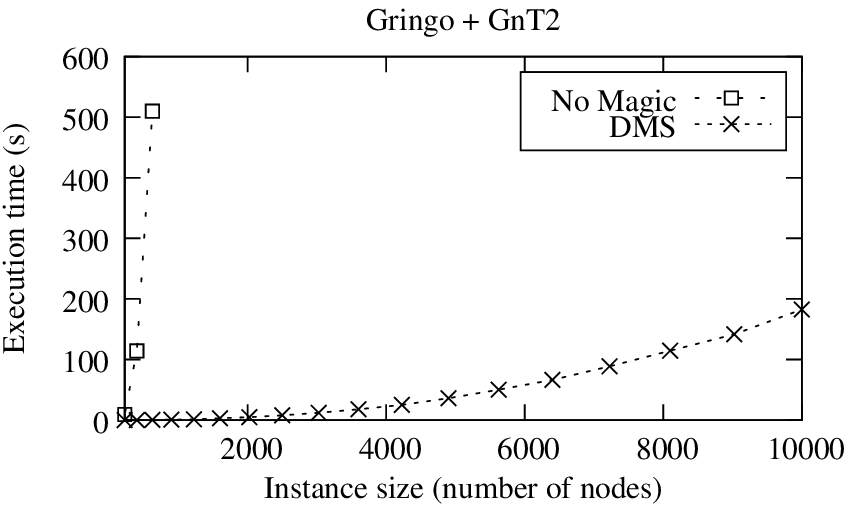}
 \caption{{\em Related:} Average execution time on other systems}
 \label{fig:related_other}
\end{figure}

For \emph{Related} we obtained a similar result, reported in
Figure~\ref{fig:related_other} (we used a different scale for the
y-axis for Cmodels for readability).
Without $\DMS$ only the smallest instances
were solved in the allotted time (up to 2025 nodes for ClaspD and Cmodels,
up to 625 nodes for GnT1 and GnT2). With $\DMS$, instead, all tested systems
solved the biggest instances of the benchmark (up to 10 thousands of nodes).
In particular, with $\DMS$ Cmodels is as performant as \dlv in this benchmark.

\begin{figure}[t]
 \centering
 \includegraphics{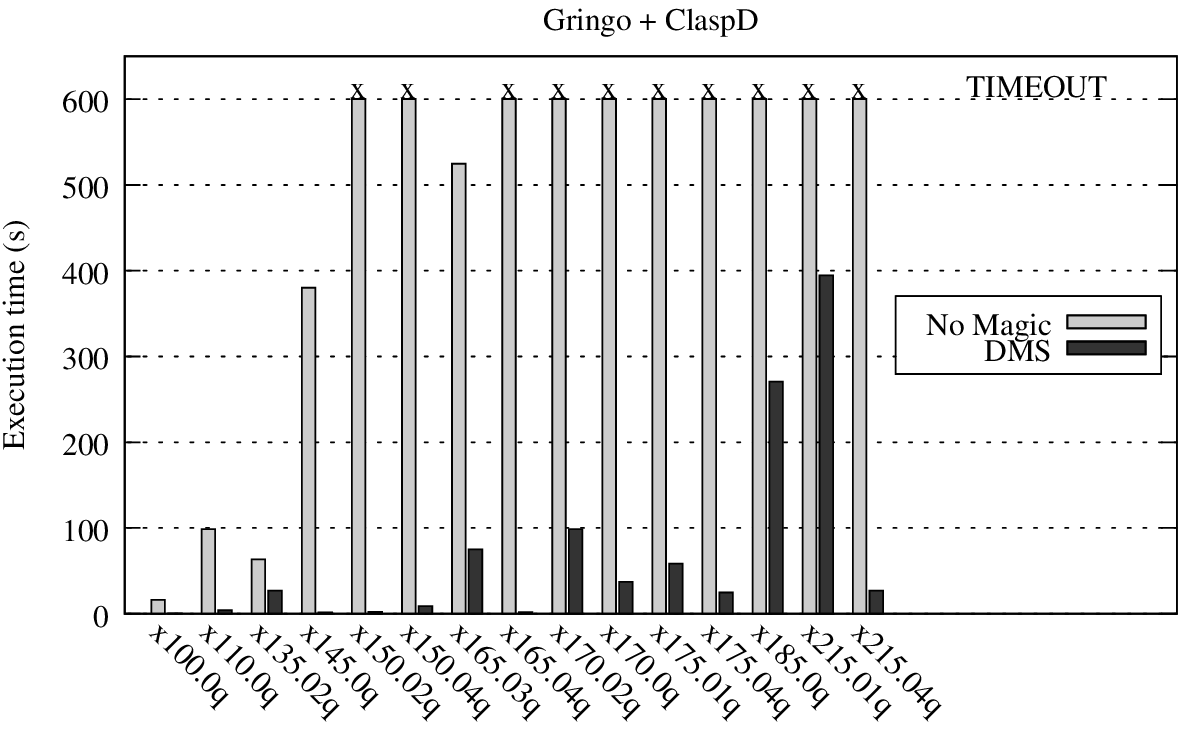}
 \\\vspace{1.5em}
 \includegraphics{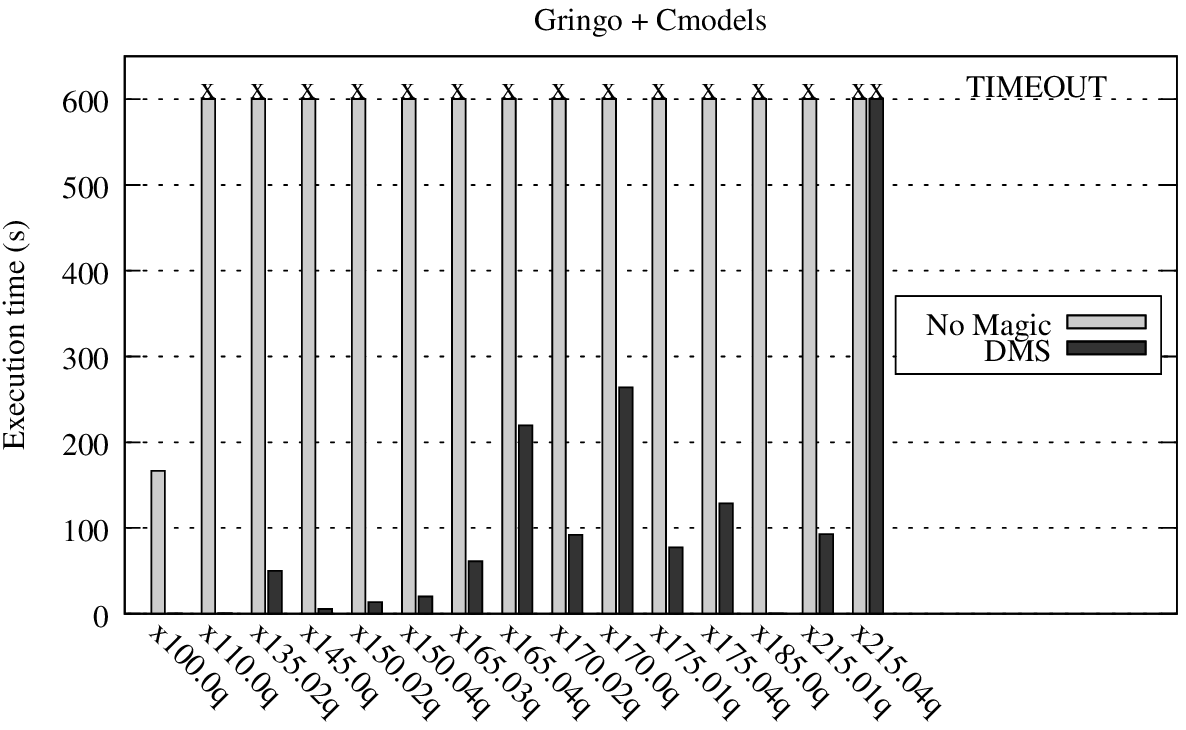}
 \caption{{\em Strategic Companies:} Average execution time on other systems (part 1)}
 \label{fig:strategiccompanies_other1}
\end{figure}
\begin{figure}[t]
 \centering
 \includegraphics{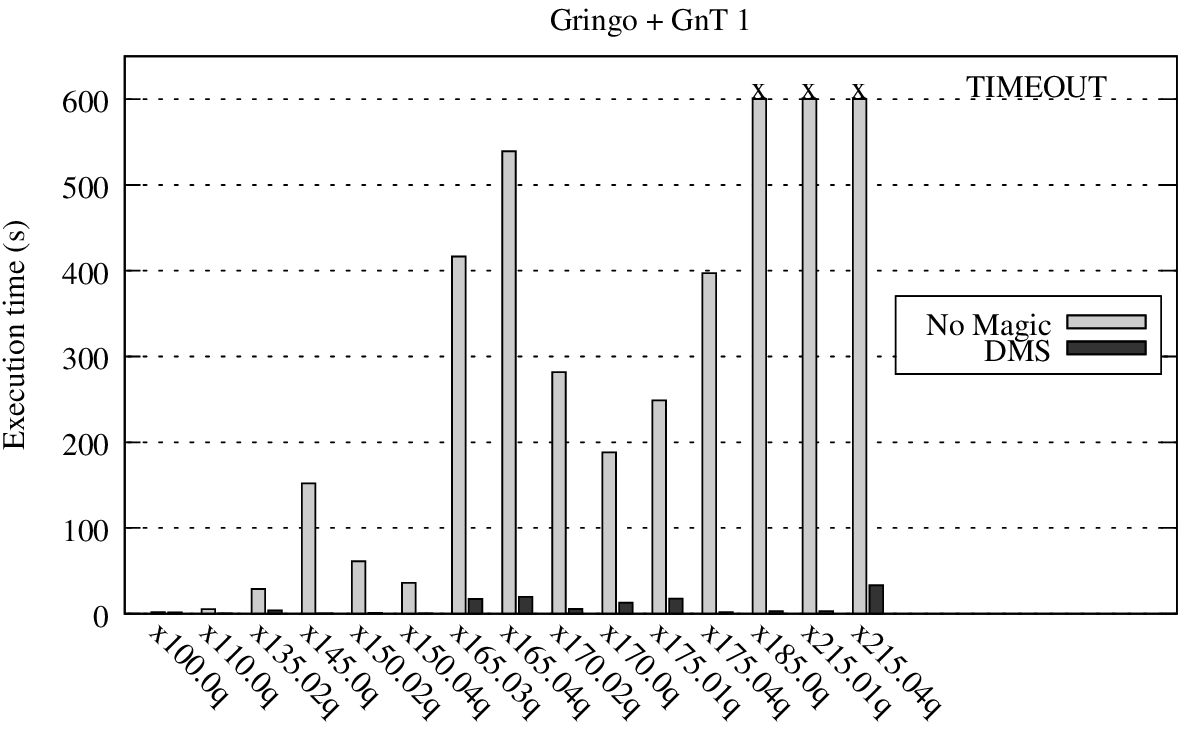}
 \\\vspace{1.5em}
 \includegraphics{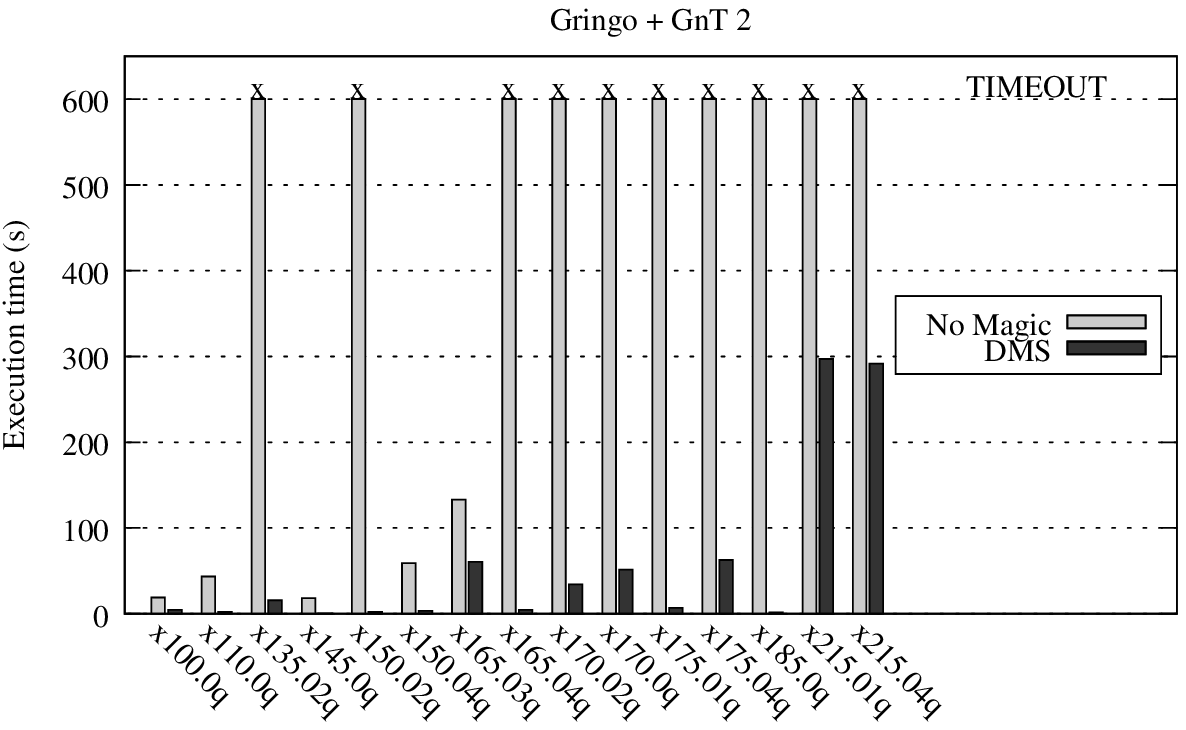}
 \caption{{\em Strategic Companies:} Average execution time on other systems (part 2)}
 \label{fig:strategiccompanies_other2}
\end{figure}

The effectiveness of $\DMS$ is also evident in the \emph{Strategic Companies} benchmark
(Figures~\ref{fig:strategiccompanies_other1}--\ref{fig:strategiccompanies_other2}).
In fact, we observed sensible performance gains of all systems on all tested
instances. GnT1, which is already faster than the other tested systems
in this benchmark, draws particular advantage from $\DMS$, solving all
instances in few seconds. We give another evidence of the optimization
potential provided by $\DMS$ to these systems by comparing the number of solved
instances: Of a total of 60 tests, we counted 37 timeouts on the unoptimized
encoding (10 on ClaspD, 14 on Cmodels, 3 on GnT1 and 10 on GnT2),
while just one on the encoding obtained by applying $\DMS$.
We point out that the timeout on the
rewritten program was obtained by the Cmodels system, which alone collected 14
timeouts on the unoptimized encoding and is thus the least performant on this
benchmark.

\begin{figure}[t]
 \centering
 \includegraphics[width=0.45\textwidth,viewport=5 10 250 140]{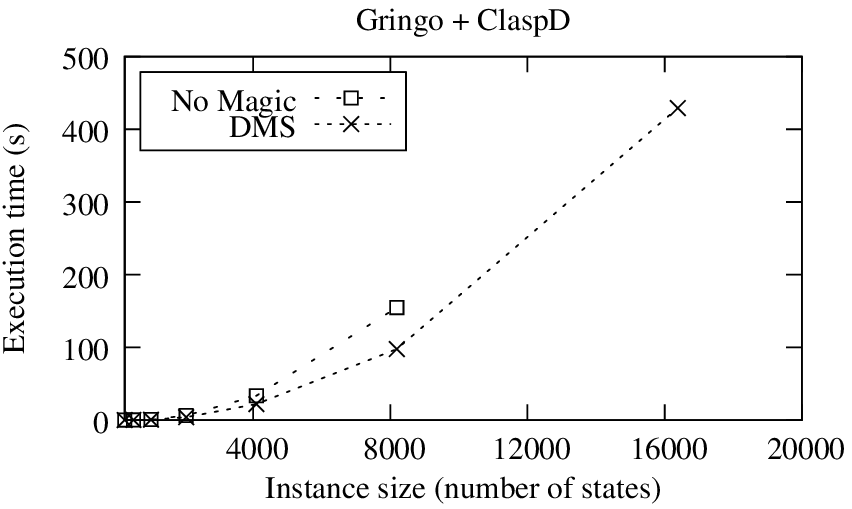}
 \includegraphics[width=0.45\textwidth,viewport=5 10 250 140]{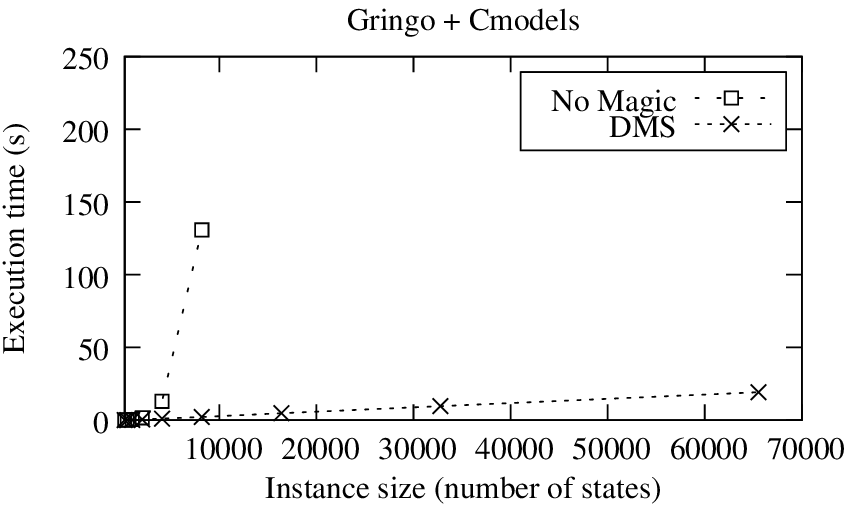}
 \\\vspace{1.5em}
 \includegraphics[width=0.45\textwidth,viewport=5 10 250 140]{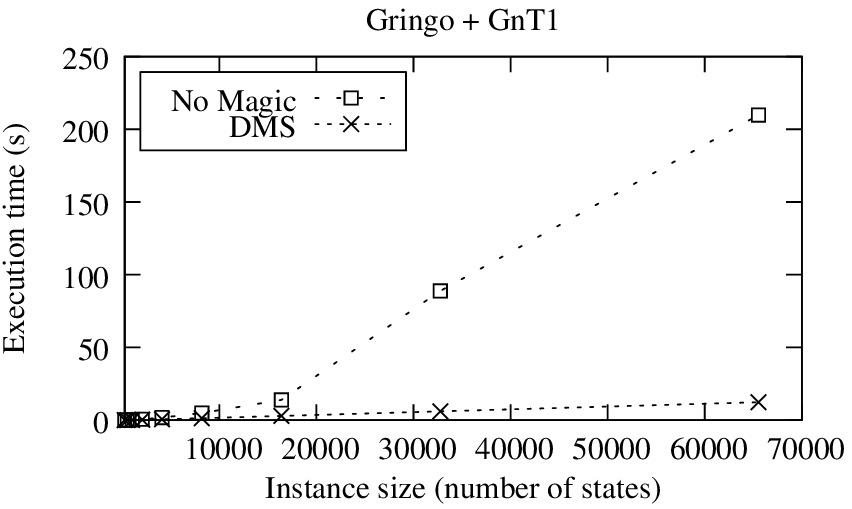}
 \includegraphics[width=0.45\textwidth,viewport=5 10 250 140]{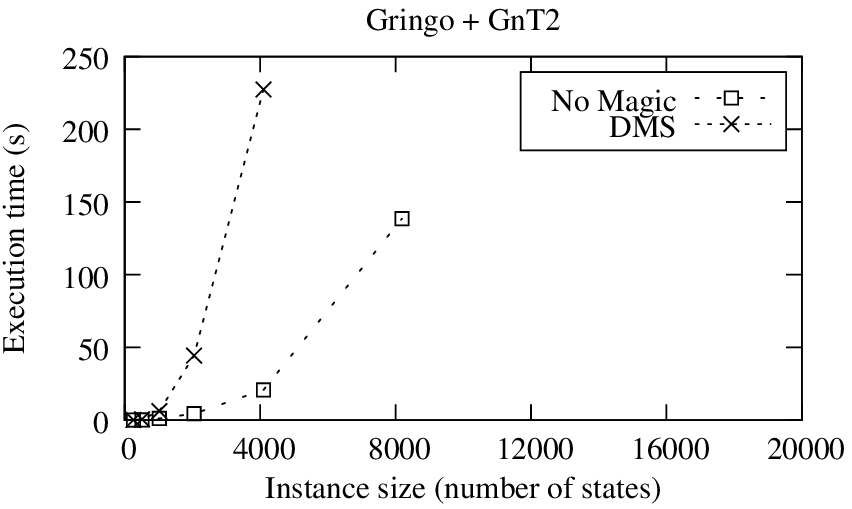}
 \caption{{\em Conformant Plan Checking:} Average execution time on other systems}
 \label{fig:planning_other}
\end{figure}

Finally, consider the results for \emph{Conformant Plan Checking}
reported in Figure~\ref{fig:planning_other} (we used a different scale
on the y-axis for ClaspD for readability; note also that
ClaspD and GnT2 only solved the smallest instances of this benchmark, and we
thus used a different scale for their x-axes). The performance of ClaspD is poor in
this benchmark, nonetheless we observed a slight improvement in execution time
if $\DMS$ is applied on the encoding reported in Section~\ref{sec:experiments-settings}.
Cmodels performs better than ClaspD in this case and the optimization potential
of $\DMS$ emerges with an exponential improvement in performance. A similar
result was observed for GnT1, while GnT2 on this benchmark is the only outlier
of the experiment: Its performance deteriorates if the original program is
processed by $\DMS$. However, in this benchmark GnT2 performs worse that GnT1
also with the original encoding. In fact, while GnT1 solved the biggest instance
(more than 65 thousands of states) in 209.74 seconds (12.28 seconds with the
$\DMS$ encoding), the execution of GnT2 did not terminate in the allotted time
(600 seconds) on instances containing more than 10 thousands of states.
We finally note that with $\DMS$ GnT1 and Cmodels are faster than \dlv in this
benchmark. In fact, for the biggest instance in the benchmark, GnT1 and Cmodels
required 12.28 and 19.13 seconds, respectively, while \dlv terminated in 279.41
seconds. The significant performance gain of GnT1 and Cmodels due to $\DMS$ is
a further confirmation of the potential of our optimization technique.

\nop{
\begin{table}
 \caption{{\em Strategic Companies:} Average execution time}
 \label{tab:strategiccompanies_other}
\begin{center}
\small
\begin{tabular}{ccccc}
\textbf{Instance} & \textbf{ClaspD} & \textbf{ClaspD + DMS} & \textbf{Cmodels} & \textbf{Cmodels + DMS}\\
\hline
x100.0q\phantom{0} & \phantom{0}16.05 & \phantom{00}0.18 & 166.48 & \phantom{00}0.24\\
x110.0q\phantom{0} & \phantom{0}98.35 & \phantom{00}3.85 & - & \phantom{00}0.38\\
x135.02q & \phantom{0}63.14 & \phantom{0}26.79 & - & \phantom{0}49.78\\
x145.0q\phantom{0} & 379.79 & \phantom{00}1.11 & - & \phantom{00}5.25\\
x150.02q & - & \phantom{00}2.01 & - & \phantom{0}13.28\\
x150.04q & - & \phantom{00}8.79 & - & \phantom{0}19.92\\
x165.03q & 524.53 & \phantom{0}74.80 & - & \phantom{0}60.95\\
x165.04q & - & \phantom{00}1.65 & - & 219.35\\
x170.02q & - & \phantom{0}98.35 & - & \phantom{0}91.89\\
x170.0q & - & \phantom{0}36.85 & - & 263.80\\
x175.01q & - & \phantom{0}58.27 & - & \phantom{0}77.34\\
x175.04q & - & \phantom{0}24.51 & - & 128.51\\
x185.0q\phantom{0} & - & 270.33 & - & \phantom{00}0.35\\
x215.01q & - & 394.14 & - & \phantom{0}92.63\\
x215.04q & - & \phantom{0}26.65 & - & -\\
\end{tabular}
\end{center}
\end{table}

\begin{table}
 \caption{{\em Conformant Plan Checking:} Average execution time}
 \label{tab:planning_other}
\begin{center}
\small
\begin{tabular}{ccccc}
\textbf{Number of nodes} & \textbf{ClaspD} & \textbf{ClaspD + DMS} & \textbf{Cmodels} & \textbf{Cmodels + DMS}\\
\hline
\phantom{00\,}256 & \phantom{00}0.04 & \phantom{00}0.06 & \phantom{00}0.05 & \phantom{00}0.03\\
\phantom{00\,}512 & \phantom{00}0.21 & \phantom{00}0.19 & \phantom{00}0.14 & \phantom{00}0.07\\
\phantom{0}1\,024 & \phantom{00}0.88 & \phantom{00}0.71 & \phantom{00}0.45 & \phantom{00}0.19\\
\phantom{0}2\,048 & \phantom{00}6.18 & \phantom{00}4.09 & \phantom{00}1.63 & \phantom{00}0.42\\
\phantom{0}4\,096 & \phantom{0}33.54 & \phantom{0}21.93 & \phantom{0}12.85 & \phantom{00}0.99\\
\phantom{0}8\,192 & 154.79 & 97.78 & 130.74 & \phantom{00}2.08\\
16\,384 & - & 429.26 & - & \phantom{00}4.67\\
32\,768 & - & - & - & \phantom{00}9.59\\
65\,536 & - & - & - & \phantom{0}19.13\\
\end{tabular}
\end{center}
\end{table}

\begin{table}
 \caption{{\em Related:} Average execution time}
 \label{tab:related_other}
\begin{center}
\small
\begin{tabular}{ccccc}
\textbf{Number of nodes} & \textbf{ClaspD} & \textbf{ClaspD + DMS} & \textbf{Cmodels} & \textbf{Cmodels + DMS}\\
\hline
\phantom{00}\,225 & \phantom{00}0.55 & \phantom{00}0.03 & \phantom{00}0.40 & \phantom{00}0.01\\
\phantom{00}\,400 & \phantom{00}3.40 & \phantom{00}0.07 & \phantom{00}1.37 & \phantom{00}0.03\\
\phantom{00}\,625 & \phantom{0}13.03 & \phantom{00}0.12 & \phantom{00}3.58 & \phantom{00}0.05\\
\phantom{00}\,900 & \phantom{0}38.47 & \phantom{00}0.24 & \phantom{00}7.65 & \phantom{00}0.10\\
\phantom{0}1\,225 & \phantom{0}95.12 & \phantom{00}0.46 & \phantom{0}14.62 & \phantom{00}0.12\\
\phantom{0}1\,600 & 208.56 & \phantom{00}0.84 & \phantom{0}25.78 & \phantom{00}0.18\\
\phantom{0}2\,025 & 419.77 & \phantom{00}1.22 & \phantom{0}42.23 & \phantom{00}0.26\\
\phantom{0}2\,500 & - & \phantom{00}2.13 & - & \phantom{00}0.34\\
\phantom{0}3\,025 & - & \phantom{00}2.73 & - & \phantom{00}0.40\\
\phantom{0}3\,600 & - & \phantom{00}4.15 & - & \phantom{00}0.49\\
\phantom{0}4\,225 & - & \phantom{00}6.00 & - & \phantom{00}0.63\\
\phantom{0}4\,900 & - & \phantom{00}9.10 & - & \phantom{00}0.71\\
\phantom{0}5\,625 & - & \phantom{0}11.94 & - & \phantom{00}0.81\\
\phantom{0}6\,400 & - & \phantom{0}16.17 & - & \phantom{00}0.91\\
\phantom{0}7\,225 & - & \phantom{0}21.56 & - & \phantom{00}1.07\\
\phantom{0}8\,100 & - & \phantom{0}27.61 & - & \phantom{00}1.20\\
\phantom{0}9\,025 & - & \phantom{0}35.10 & - & \phantom{00}1.41\\
10\,000 & - & \phantom{0}44.25 & - & \phantom{00}1.52\\
\end{tabular}
\end{center}
\end{table}

\begin{table}
 \caption{{\em Simple Path:} Average execution time}
 \label{tab:simplepath_other}
\begin{center}
\small
\begin{tabular}{ccccc}
\textbf{Number of nodes} & \textbf{ClaspD} & \textbf{ClaspD + DMS} & \textbf{Cmodels} & \textbf{Cmodels + DMS}\\
\hline
\phantom{00}\,100 & \phantom{00}0.51 & \phantom{00}0.00 & \phantom{00}0.21 & \phantom{00}0.01\\
\phantom{00}\,400 & 115.28 & \phantom{00}0.02 & \phantom{00}5.30 & \phantom{00}0.03\\
\phantom{00}\,900 & - & \phantom{00}0.08 & \phantom{0}39.12 & \phantom{00}0.08\\
\phantom{0}1\,600 & - & \phantom{00}0.23 & - & \phantom{00}0.25\\
\phantom{0}2\,500 & - & \phantom{00}0.54 & - & \phantom{00}0.55\\
\phantom{0}3\,600 & - & \phantom{00}1.04 & - & \phantom{00}1.07\\
\phantom{0}4\,900 & - & \phantom{00}1.92 & - & \phantom{00}2.02\\
\phantom{0}6\,400 & - & \phantom{00}3.18 & - & \phantom{00}3.23\\
\phantom{0}8\,100 & - & \phantom{00}5.16 & - & \phantom{00}5.41\\
10\,000 & - & \phantom{00}7.71 & - & \phantom{00}8.05\\
12\,100 & - & \phantom{0}11.53 & - & \phantom{0}11.83\\
14\,400 & - & \phantom{0}15.78 & - & \phantom{0}15.90\\
16\,900 & - & \phantom{0}22.42 & - & \phantom{0}22.64\\
19\,600 & - & \phantom{0}30.40 & - & \phantom{0}30.82\\
22\,500 & - & \phantom{0}40.64 & - & \phantom{0}41.14\\
25\,600 & - & \phantom{0}48.98 & - & \phantom{0}49.69\\
28\,900 & - & \phantom{0}63.87 & - & \phantom{0}63.06\\
32\,400 & - & \phantom{0}81.10 & - & \phantom{0}82.42\\
36\,100 & - & 106.59 & - & 107.15\\
40\,000 & - & 134.33 & - & 132.03\\
\end{tabular}
\end{center}
\end{table}
}

\section{Application to Data Integration}\label{sec:dataintegration}

In this section we give a brief account of a case study that evidences
the impact of the Magic Set method when used on programs that realize
data integration systems. We first give an overview of data
integration systems, show how they can be implemented using \datds{},
and finally assess the impact of Magic Sets on a data integration system
involving real-world data.

\subsection{Data Integration Systems in a Nutshell}

The main goal of data integration systems is to offer transparent
access to heterogeneous sources by providing users with a \emph{global
  schema}, which users can query without having to know from what
sources the data come from.  In fact, it is the task of the data
integration system to identify and access the data sources which are
relevant for finding the answer to a query over the global schema,
followed by a combination of the data thus obtained. The data
integration system uses a set of \emph{mapping assertions}, which specify the relationship between the
data sources and the global schema.
%
%
Following \cite{Lenz02}, we formalize a data integration system $\I$ as a triple $\tup{\G,\S,\M}$, where:
\begin{enumerate} 

\item $\G$ is the \emph{global (relational) schema}, that is, a pair $\tup{\schemag, \Sigma}$, where $\schemag$ is
a finite set of relation symbols,
  each with an associated positive arity, and $\Sigma$ is a finite set of \emph{integrity constraints} (ICs)
expressed on the symbols in $\schemag$. ICs are first-order assertions that are intended to be satisfied by
database instances.

\item $\S$ is the \emph{source schema}, constituted by the schemas of the various sources that are part of the
data integration system. We assume that $\S$ is a relational schema of the form $\S=\tup{\Psi', \emptyset}$,
which means that there are no integrity constraints on the sources.
This assumption implies that data stored at the sources are locally consistent; this is a common assumption in
data integration, because sources are in general external to the integration system, which is not in charge of
analyzing or restoring their consistency.

\item $\M$ is the \emph{mapping} which establishes the relationship
  between $\G$ and $\S$. In our framework, the mapping
  follows the GAV approach, that is, each global relation
  is associated with a \emph{view}---a $\datds$ query over the sources.
\end{enumerate}

%
%

The main semantic issue in data integration systems is that, since integrated sources are originally
autonomous, their data, transformed via the mapping assertions, may not satisfy the constraints of the global
schema. An approach to remedy to this problem that has lately received a lot of interest in the literature (see,
e.g.,
\cite{aren-etal-01,bert-chom-lead,BCCG02,BrBe03,CaLR03b,chom-marc-2004b,chom-etal-cikm-04,chom-etal-edbt-04,DBLP:conf/sigmod/FuxmanFM05,fuxm-mill-05})
is based on the notion of \emph{repair} for an inconsistent database as introduced in \cite{ArBC99}. Roughly
speaking, a repair of a database is a new database that satisfies the constraints in the schema, and minimally
differs from the original one. Since an inconsistent database might possess
multiple repairs, the
standard approach in answering user queries is to return those answers that are true in every possible repair. These are
called \emph{consistent answers} in the literature.

\subsection{Consistent Query Answering via \datds{} Queries}

There is an intuitive relation between consistent answers to queries
over data integration systems and queries over \datds programs:
Indeed, if one could find a translation from data sources, mapping,
and the query to a \datds{} program, which possesses a stable model
for each possible repair, and a query over it, the consistent answers
within the data integration system will correspond to cautious
consequences of the obtained $\datds$ setting.

In fact, various authors
\cite{ArBC00,BaBe02,BrBe03,CaLR03b,chom-marc-2004b,GrGZ01} considered
the idea of encoding the constraints of the global schema $\G$ into
various kinds of logic programs, such that the stable models of this
program yield the repairs of the database retrieved from the
sources. Some of these approaches use logic programs with unstratified
negation, \cite{CaLR03b}, whereas disjunctive $\dat$ programs together
with unstratified negation have been considered in
\cite{BeBr05,CaBe07}.

It has already been realized earlier that Magic Sets are a crucial
optimization technique in this context, and indeed the availability of
the transformational approach using stable logic programming as its
core language was a main motivation for the research presented in this
article, since in this way a Magic Set method for stable logic
programs immediately yields an optimization technique for data
integration systems. Indeed, the benefits of Magic Sets in the context
of optimizing logic programs with unstratified negation (but without
disjunction) have been discussed in \cite{fabe-etal-2007-jcss}. The
Magic Set technique defined in \cite{fabe-etal-2007-jcss} is quite
different from the one defined in this article, as it does not
consider disjunctive rules, and works only for programs, which are
consistent, that is, have at least one stable model. In \cite{CaBe07}
our preliminary work reported in \cite{cumb-etal-2004-iclp}, which
eventually led to the present article, has been expanded in an ad-hoc
way to particular kinds of \dat{} programs with disjunction and
unstratified negation. It is ad-hoc in the sense that it is tailored
to programs which are created by the transformation described in
\cite{CaBe07}. The experimental results reported in \cite{CaBe07} show
huge computational advantages when using Magic Sets.

We now report an alternative transformation which produces \datds{}
programs (therefore different to \cite{CaBe07}, there are no
unstratified occurrences of negation).  This rewriting has been
devised and used within the INFOMIX system on data integration
\cite{leon-etal-2005}.

Let $\I=\intsys$ be a data integration system where $\G=\tup{\schemag, \Sigma}$, and let $\D$ be a database for
$\G$, which is represented as a set of facts over the relational predicates in $\G$. We
assume that constraints over the global schema are \emph{key} and \emph{exclusion dependencies}. In particular,
we recall that a set of attributes $\bar x$ is a key for the relation $r$ if:

\vspace{-5mm}$$ (r(\bar x,\bar y)\wedge r(\bar x,\bar z)) \rightarrow \bar y=\bar z,\quad\quad \forall
\{r(\bar x,\bar y), r(\bar x,\bar z)\}\subseteq \D
$$

\noindent and that an exclusion dependency holds between a set of attributes $\bar x$
of a relation $r$ and a set of attributes $\bar w$ of a relation $s$ if

\vspace{-5mm}$$ (r(\bar x,\bar y) \land
s(\bar w,\bar z)) \rightarrow \bar y \neq \bar z,\quad\quad \forall \{r(\bar x,\bar y),
s(\bar w,\bar z)\}\subseteq \D
$$

Then, the disjunctive rewriting of a query $q$ with respect to $\I$ is the $\datds$ program $\Pi(\I)=\Pkd \cup \Ped \cup \Pim \cup
\Pi_{coll}$ where:

\begin{itemize}
\item For each relation $r$ in $\G$ and for each key defined over its set of attributes $\bar x$, $\Pkd$
contains the rules:

{
\begin{eqnarray*}
{r}_{out}(\bar x,\bar y)\ \Or\ {r}_{out}(\bar x,\bar z) & \la & r_\D(\bar
x,\bar y) \;,\;
 r_\D(\bar x,\bar z),  Y_1\neq Z_1.\\
&\vdots\\
 {r}_{out}(\bar x,\bar y)\ \Or\ {r}_{out}(\bar x,\bar z) & \la & r_\D(\bar
x,\bar y) \;,\;
 r_\D(\bar x,\bar z),  Y_m\neq Z_m.\\
\end{eqnarray*}
}

\vspace{-10mm}\noindent where $\bar y=Y_1,\dots,Y_m$, and $\bar z=Z_1,\dots,Z_m$.

\item For each exclusion dependency between a set of attributes $\bar x$
of a relation $r$ and a set of attributes $\bar w$ of a relation $s$,
$\Ped$ contains the following rule:

\begin{eqnarray*}
{r}_{out}(\bar x,\bar y)\ \Or\ {s}_{out}(\bar w,\bar z)  & \la &r_\D(\bar
x,\bar y) \;,\;
  s_\D(\bar w,\bar z),\ X_1 = W_1,\ \ldots,\ X_m = W_m.
\end{eqnarray*}

\noindent where $\bar x=X_1,\dots,X_m$, and $\bar w=W_1,\dots,W_m$.
In the implementation the following equivalent rule is used:

\begin{eqnarray*}
r_{out}(\bar x,\bar y)\ \Or\ s_{out}(\bar x,\bar z)\ \derives\ r_{\D}(\bar x,\bar y),\ s_{\D}(\bar x,\bar z).
\end{eqnarray*}

\item For each relation $r$ in $\G$, $\Pi_{coll}$ contains the rule:
\begin{eqnarray*}
r(\bar w) &\la &r_\D(\bar w) \;,\;  \naf\ {r}_{out}(\bar w).
\end{eqnarray*}
%

\item For each $\dat$ rule $\R$ in $\M$ such that:

\begin{dlvcode}
k(\t) \derives q_1(\s_1),\ldots,q_m(\s_m).
\end{dlvcode}

\noindent where $k$ is a relation in $\G$ and $q_i$ (for $1\leq i\leq m$) is a relation in $\S$, $\Pim$
contains the rule:

\begin{dlvcode}
k_\D(\t) \derives q_1(\s_1),\ldots,q_m(\s_m).
\end{dlvcode}
\end{itemize}

\medskip

It can be shown that for each user query $\Q$ (over $\G$) and for each
source database $\F$ (over $\S$), consistent query answers to $\Q$
precisely coincide with the set $\Ans_c(\Q,{\Pi}(\I) \cup \F)$.
Actually, within the INFOMIX project also \emph{inclusion
  dependencies} have been considered according to the rewriting
discussed in \cite{CaLR03b}, whose details we omit for clarity. Since the rewriting for inclusion dependencies
also modifies queries, in the INFOMIX project queries have been limited to conjunctive queries.  It is however
important to notice that the program ${\Pi}(\I)$ contains only stratified negation and is therefore a \datds{}
program, making the Magic Set method defined in this article applicable.

\subsection{Experimental Results}

The effectiveness of the Magic Set method in this crucial application
context has then been assessed via a number of experiments carried out
on the demonstration scenario of the INFOMIX project, which refers to
the information system of the University ``La Sapienza'' in Rome. The
global schema consists of 14 global relations with 29 constraints,
while the data sources include 29 relations of 3 legacy databases and
12 wrappers generating relational data from web pages. This amounts to
more than 24MB of data regarding students, professors and exams in
several faculties of the university.
For a detailed description of the INFOMIX project see
\url{https://www.mat.unical.it/infomix/}.

On this schema, we have tested five typical queries with different
characteristics, which model different use cases. For the sake of completeness,
the full encodings of the tested queries are reported in the Appendix.
In particular, we
measured the average execution time of \dlv computing
$\Ans_c(\Q,{\Pi}(\I) \cup \F)$ and $\Ans_c(\Q,\DMS(\Q,{\Pi}(\I)) \cup
\F)$ on datasets of increasing size. The experiments were performed by
running the INFOMIX prototype system on a 3GHz
Intel$^{\scriptsize\textregistered}$
Xeon$^{\scriptsize\textregistered}$ processor system with 4GB RAM
under the Debian 4.0 operating system with a GNU/Linux 2.6.23 kernel.
The \dlv prototype used as the computational core of the INFOMIX
system had been compiled using GCC 4.3.3. For each instance, we
allowed a maximum running time of 10 minutes and a maximum memory
usage of 3GB.

\begin{figure}[t]
 \centering
 \includegraphics[width=0.45\textwidth,viewport=5 10 250 140]{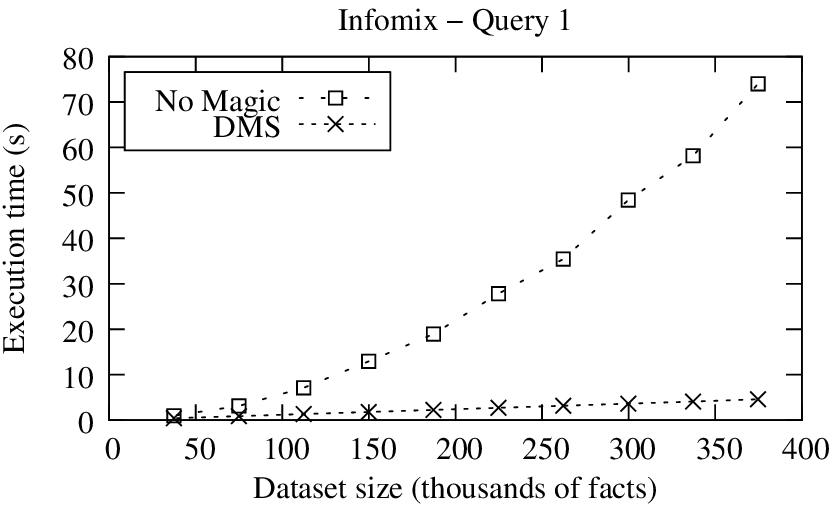}
 \includegraphics[width=0.45\textwidth,viewport=5 10 250 140]{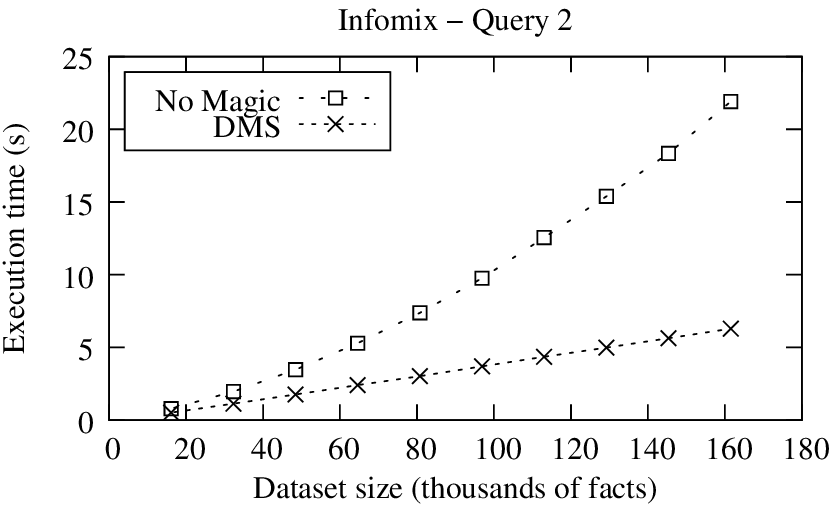}\\
 \vspace{1.5em}
 \includegraphics[width=0.45\textwidth,viewport=5 10 250 140]{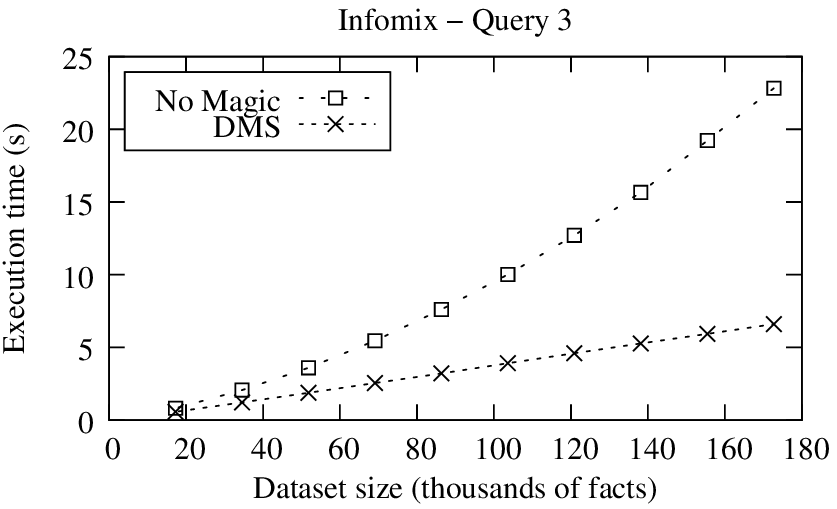}
 \includegraphics[width=0.45\textwidth,viewport=5 10 250 140]{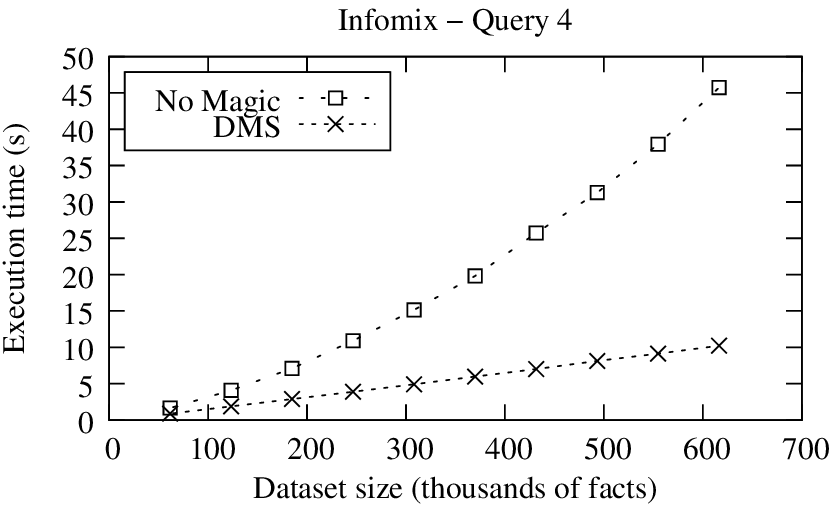}\\
 \vspace{1.5em}
 \includegraphics[width=0.45\textwidth,viewport=5 10 250 140]{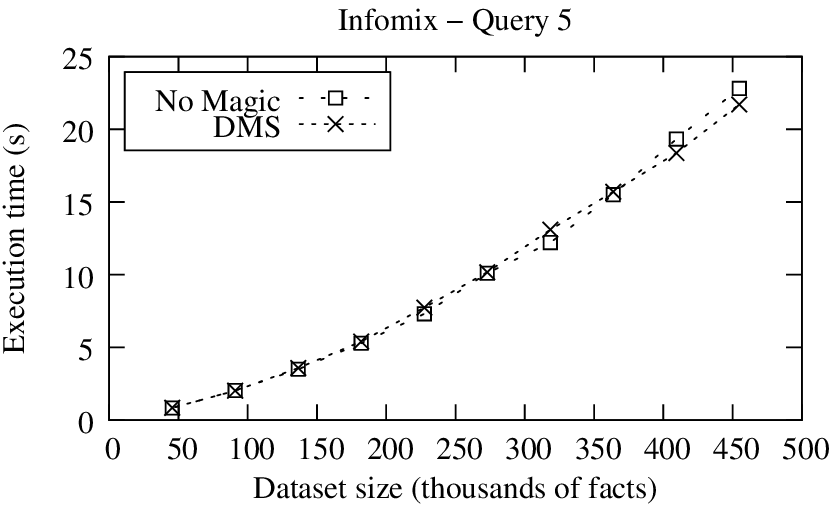}
 \caption{Average execution time of query evaluation in the INFOMIX Demo Scenario}
 \label{fig:Infomix}
\end{figure}

The results, reported in Figure~\ref{fig:Infomix}, confirm that on
these typical queries the performance is considerably improved by
Magic Sets. On Queries 1 to 4 in Figure~\ref{fig:Infomix} the response
time scales much better with Magic Sets than without, appearing
essentially linear on the tested instance sizes, while without Magic
Sets the behavior has a decidedly non-linear appearance.  We also
observe that there is basically no improvement on Query $5$. We have
analyzed this query and for this use case all data seems to be
relevant to the query, which means that Magic Sets cannot have any
positive effect. It is however important to observe that the Magic
Set rewriting does not incur any significant overhead.

\section{Related Work}
\label{sec:relatedwork}

In this section we first discuss the main body of work which is related to
$\DMS$, the technique developed in this paper for query answering
optimization. In particular, we discuss Magic Set techniques for Datalog
languages. The discussion is structured in paragraphs grouping techniques which
cover the same language.
After that, we discuss some applications for which $\DMS$
have already been exploited. All these applications refer to the preliminary
work published in \cite{cumb-etal-2004-iclp}.

\paragraph*{Magic Sets for $\dat$.}
In order to optimize query evaluation in bottom-up systems, like deductive
database systems, several works have proposed the simulation of top-down
strategies by means of suitable transformations introducing new predicates and
rewriting clauses. Among them, Magic Sets for Datalog queries are one of the best
known logical optimization techniques for database systems.
The method, first developed in \cite{banc-etal-86},
has been analyzed and refined by many authors; see, for instance,
\cite{beer-rama-91,mumi-etal-90,stuc-suda-94,ullm-89}.
These works form the foundations of $\DMS$.

\paragraph*{Magic Sets for $\dats$.}
Many authors have addressed the issue of extending the Magic Set technique in
order to deal with Datalog queries involving stratified negation.
The main problem related to the extension of the technique to $\dats$ programs
is how to assign a semantics to the rewritten programs. Indeed, while $\dats$
programs have a natural and accepted semantics, namely the perfect model
semantics \cite{apt-etal-88,vang-88}, the application of Magic Sets
can introduce unstratified negation in the rewritten programs.
A solution has been presented in
\cite{behr-2003,kemp-etal-95,keri-pugi-1988,ross-94}. In particular,
in \cite{kemp-etal-95,ross-94} rewritten programs have been evaluated according
to the well-founded semantics, a three-valued semantics for \datn programs
which is two-valued for stratified programs, while
in \cite{behr-2003,keri-pugi-1988} ad-hoc semantics have been defined.
All of these methods exploit a property of $\dats$ which is not present in
disjunctive Datalog, uniqueness of the intended model. This property in turn
implies that query answering just consists in establishing the truth value of
some atoms in one intended model. Using our terminology, brave and cautious
reasoning coincide for these programs. Therefore, all these methods are
quite different from $\DMS$, the technique developed in this paper.

\paragraph*{Magic Sets for $\datn$.}
Extending the Magic Set technique to $\datn$ programs must face two major difficulties.
First, for a $\datn$ program uniqueness of the intended model is no more
guaranteed, thus query answering in this setting involves a set of stable
models in general.
The second difficulty is that parts of a $\datn$ program may act as constraints,
thus impeding a relevant interpretation to be a stable model.
In \cite{fabe-etal-2007-jcss} a Magic Set method for $\datn$ programs has been
defined and proved to be correct for coherent programs, i.e., programs admitting
at least one stable model. This method
takes special precautions for relevant parts of the program that act as
constraints, called \emph{dangerous rules} in \cite{fabe-etal-2007-jcss}.
We observe that dangerous rules cannot occur in $\datds$ programs,
which allows for the simpler $\DMS$ algorithm to work correctly for this
class of programs.

\paragraph*{Magic Sets for $\datd$.}
The first extension of the Magic Set technique to disjunctive Datalog is due to
\cite{grec-99,grec-2003}, where the $\SMS$ method has been presented and proved
to be correct for $\datd$ programs.
We point out that the main drawback of this method is the introduction of
{\em collecting} predicates. Indeed, magic and collecting predicates of
$\SMS$ have deterministic definitions. As a consequence, their extension
can be completely computed during program instantiation, which means that no
further optimization is provided for the subsequent stable model search.
Moreover, while the correctness of $\DMS$ has been formally established
for $\datds$ programs in general, the applicability of
$\SMS$ to $\datds$ programs has only been outlined in
\cite{grec-99,grec-2003}.

\paragraph*{Applications.}
Magic Sets have been applied in many contexts. In particular,
\cite{BeBr05,hust-etal-2007,CaBe07,moti-2006} have profitably exploited
the optimization provided by $\DMS$.
In particular, in \cite{BeBr05,CaBe07} a data integration system has been
presented. The system is based on disjunctive Datalog and exploits
$\DMS$ for fast query answering.
In \cite{hust-etal-2007,moti-2006}, instead, an algorithm for answering queries
over description logic knowledge bases has been presented. More
specifically, the algorithm reduces a $\mathcal{SHIQ}$ knowledge base to a
disjunctive Datalog program, so that $\DMS$ can be exploited
for query answering optimization.

\nop{
The main body of work, which is related to this article, is about
Magic Sets for \dat{} languages, which we will discuss grouped by the
language that the respective methods cover. In the sequel, we also
discuss some applications that the techniques presented in this paper
have already been used for, all of which refer to the preliminary work
published in \cite{cumb-etal-2004-iclp}.

\paragraph{Magic Sets for \dats}

The Magic Set method for $\dat$ queries has first been developed in
\cite{banc-etal-86}, has subsequently been refined in
\cite{beer-rama-91,ullm-89,mumi-etal-90,stuc-suda-94}, and is one of the best known
logical optimization techniques for database systems. These form the
foundations of the techniques developed in this article.

Several works have dealt with the issue of extending Magic Sets in
order to deal with $\dat$ queries involving stratified negation. In
\cite{ross-94} the notion of modular stratification was introduced,
which generalizes the notion of stratification, and a Magic Set method
for this class of queries was defined. A similar approach was taken in
\cite{kemp-etal-95}, in this case for the well-founded semantics, a
three-valued semantics for \datn, which is two-valued if the program
belongs to \dats.  There is also more recent work in this area, for
instance, a Magic Set technique for the class of
\emph{soft-stratifiable} programs was presented in \cite{behr-2003}.

All of these methods have a crucial difference with respect to
\datds{}: They exploit the fact that there is exactly one intended
model of the program, and so query answering in these settings means
establishing the truth value of some atoms in one intended
model. Using our terminology, brave and cautious reasoning coincide
for these program classes. Moreover, these queries are strictly less
expressive, in the sense that there exist queries which can be
expressed using \datds{}, but not using \dats{} unless $P=NP$.

\paragraph{Magic Sets for \datn}

\paragraph{Magic Sets for \datd}

\paragraph{Applications}

\cite{BeBr05,CaBe07} Bertossi/Bravo/Caniupan,
\cite{moti-2006,hust-etal-2007} Motik/Hustadt/Sattler (KAON2)

In order to optimize query evaluation in bottom-up systems
(like deductive database systems),
several works proposed the simulation of top-down
strategies by means of suitable transformations introducing new
predicates and rewriting clauses.

, and in
\cite{grec-2003,cumb-etal-2004-iclp} Magic Sets techniques for disjunctive programs
were proposed.

The first extension of the Magic Set method to disjunctive $\dat$ is due to
\cite{grec-99,grec-2003}, where the $\SMS$ algorithm was presented.
The main drawback of this algorithm is the introduction of the {\em collecting}
predicates, that keeps the Magic Sets \emph{static}. Indeed, in $\SMS$ both
magic and collecting predicates have deterministic definitions, which implies that
only the grounding phase is optimized.
In addition, the application of $\SMS$ to disjunctive $\dat$ with
stratified negation, briefly discussed in \cite{grec-2003}, requires more attention.
For instance, consider the query $\tt s(m)?$ for the program in Example~11 of
\cite{grec-2003}:

\vspace{-2mm}\begin{dlvcode}
\vspace{-2mm}\phantom{\R_5:}\ \tt p(X)\ \Or\ q(X) \derives a(X).\\
\phantom{\R_5:}\ \tt s(X) \derives p(X),\ \naf\ q(X).
\end{dlvcode}
\vspace{-3mm}

According to the description in \cite{grec-2003}, $\SMS$ produces

\vspace{-2mm}\begin{dlvcode}
\vspace{-2mm}\phantom{\R_5:}\ \tt p(X)\ \Or\ q(X) \derives P(X),\ Q(X),\ a(X).\\
\phantom{\R_5:}\ \tt s(X) \derives S(X),\ p(X),\ \naf\ q(X).\\
\vspace{-2mm}\phantom{\R_5:}\ \tt magic\_S(m).\\
\vspace{-2mm}\phantom{\R_5:}\ \tt magic\_P(X) \derives magic\_S(X).\\
\vspace{-2mm}\phantom{\R_5:}\ \tt magic\_Q(X) \derives magic\_S(X).\\
\vspace{-2mm}\phantom{\R_5:}\ \tt magic\_Q(X) \derives magic\_P(X).\\
\phantom{\R_5:}\ \tt magic\_P(X) \derives magic\_Q(X).\\
\vspace{-2mm}\phantom{\R_5:}\ \tt P(X) \derives magic\_P(X),\ a(X).\\
\vspace{-2mm}\phantom{\R_5:}\ \tt Q(X) \derives magic\_Q(X),\ a(X).\\
\phantom{\R_5:}\ \tt S(X) \derives magic\_S(X),\ P(X),\ \naf\ Q(X).
\end{dlvcode}
\vspace{-3mm}

where adornments and collecting rules have been deleted,
as did in Example~11 of \cite{grec-2003},
since there is only one adornment for each IDB predicates.
Consider now an EDB containing only $\tt a(m)$.
Thus, the atoms
$\tt magic\_S(m)$, $\tt magic\_Q(m)$ and $\tt Q(m)$
are deterministically true, while $\tt S(m)$
is deterministically false because of
$\tt \naf\ Q(m)$ in the body of the unique rule having
$\tt S(m)$ in head.
The falsity of $\tt S(m)$ in turn implies the falsity of $\tt s(m)$ in every
stable model of the rewritten program.
On the other hand, $\{{\tt a(m),\ p(m),\ s(m)}\}$ is a stable model for the
original program containing $\tt s(m)$.
However, in Example~11 of \cite{grec-2003} the last rule is reported as

\vspace{-2mm}\begin{dlvcode}
\phantom{\R_5:}\ \tt S(X) \derives magic\_S(X),\ P(X).
\end{dlvcode}
\vspace{-3mm}

It is possible that the author omitted to specify that negative literals are removed from
adorned rules (for instance, from the last rule of the rewritten program above).
In this case, we consider the application of $\SMS$ to the query $\tt s(m)?$
for the following program:

\vspace{-2mm}\begin{dlvcode}
\vspace{-2mm}\phantom{\R_5:}\ \tt p(X) \derives a(X).\\
\vspace{-2mm}\phantom{\R_5:}\ \tt r(X)\ \Or\ q(X) \derives a(X).\\
\phantom{\R_5:}\ \tt s(X) \derives \naf\ q(X),\ p(X).
\end{dlvcode}
\vspace{-3mm}

In the last rule we are assuming $\tt p(X)$ is receiving bindings from
both $\tt s(X)$ and $\tt \naf\ q(X)$
(this is allowed in \cite{grec-2003}; see Section~2.2.3 of \cite{grec-2003}
for an example).
Then, the rewritten program is the following:

\vspace{-2mm}\begin{dlvcode}
\vspace{-2mm}\phantom{\R_5:}\ \tt p(X) \derives P(X),\ a(X).\\
\vspace{-2mm}\phantom{\R_5:}\ \tt r(X)\ \Or\ q(X) \derives R(X),\ Q(X),\ a(X).\\
\phantom{\R_5:}\ \tt s(X) \derives S(X),\ \naf\ q(X),\ p(X).\\
\vspace{-2mm}\phantom{\R_5:}\ \tt magic\_S(m).\\
\vspace{-2mm}\phantom{\R_5:}\ \tt magic\_Q(X) \derives magic\_S(X).\\
\vspace{-2mm}\phantom{\R_5:}\ \tt magic\_P(X) \derives magic\_S(X),\ \naf\ Q(X).\\
\vspace{-2mm}\phantom{\R_5:}\ \tt magic\_Q(X) \derives magic\_R(X).\\
\phantom{\R_5:}\ \tt magic\_R(X) \derives magic\_Q(X).\\
\vspace{-2mm}\phantom{\R_5:}\ \tt P(X) \derives magic\_P(X),\ a(X).\\
\vspace{-2mm}\phantom{\R_5:}\ \tt R(X) \derives magic\_R(X),\ a(X).\\
\vspace{-2mm}\phantom{\R_5:}\ \tt Q(X) \derives magic\_Q(X),\ a(X).\\
\phantom{\R_5:}\ \tt S(X) \derives magic\_S(X),\ P(X).
\end{dlvcode}
\vspace{-3mm}

Again, consider an EDB containing only $\tt a(m)$.
Thus, $\tt magic\_S(m)$,
$\tt magic\_Q(m)$ and $\tt Q(m)$ are deterministically true, and so
$\tt magic\_P(m)$, $\tt P(m)$ and $\tt S(m)$ are false.
Hence, $\tt s(m)$ is false in every stable model of the rewritten
program, while $\{{\tt a(m),\ p(m),\ r(m),\ s(m)}\}$ is a stable model of the
original program containing $\tt s(m)$  .
We can then conclude that the application of $\SMS$ to disjunctive
$\dat$ with stratified negation, as described in \cite{grec-2003},
possibly generates programs which are not equivalent to
the originals.

Concerning disjunctive-free $\dat$ programs with stratified negation ($\dats$ programs),
the main problem related to the extension of the Magic Set technique was
the introduction of potentially unstratified negation in the rewritten program.
In fact, $\dats$ programs have a natural
and accepted semantics, the perfect model semantics \cite{apt-etal-88,vang-88}.
So, the potentially unstratified negation that can be introduced by the Magic Set
transformation opens the question where the rewritten program preserves the
original intuitive semantics and how to compute the answer.
Many works in the literature deal with this matter, showing that this is not a
problem. In particular, in \cite{kemp-etal-95} the authors show that the well-founded
model of a stratified program and the one of its Magic Sets rewriting agree on
the query, regardless on the adopted SIPSes. Since well-founded and perfect model
coincide for a stratified program, the answer is the same.
In \cite{ross-94} the author
shows that the Magic Set version of a $\dats$ program is ``modularly stratified'',
a particular case of weak stratification introduced in \cite{przy-przy-1990},
and then admits a two-valued well-founded model,
which is also the unique stable model. This is in collusion
with our result in Section~\ref{sec:no_disjunction}.
Other works define new semantics for the Magic Set transformation of stratified programs,
introducing new notions of stratification and showing that their semantics
agree with the perfect model of the original program. For example, in \cite{keri-pugi-1988}
the notion of weak stratification\footnote{The notion of weak stratification introduced in
\cite{keri-pugi-1988} is different from the one defined in \cite{przy-przy-1990}.}
and W-model are introduced, while in \cite{behr-2003}
soft stratification and soft consequence operator are presented.
Finally, in \cite{chen-1997} possible sources of unstratification are identified
and a strategy to avoid them is presented.
}

\section{Conclusion}\label{sec:conclusion}

The Magic Set method is one of the best-known techniques for the optimization of positive recursive $\dat$
programs due to its efficiency and its generality. Just a few other focused methods such as the supplementary
Magic Set and other special techniques for linear and chain queries have gained similar visibility (see, e.g.,
\cite{grec-sacc-95,rama-etal-93,ullm-89}).
After seminal papers \cite{banc-etal-86,beer-rama-91}, the viability of the approach was demonstrated e.g., in
\cite{gupt-mumi-92,mumi-etal-90}. Later on, extensions and refinements were proposed, addressing e.g., query
constraints in \cite{stuc-suda-94}, the well-founded semantics in \cite{kemp-etal-95}, or integration into
cost-based query optimization in \cite{sesh-etal-96}. The research on variations of the Magic Set method is
still going on. For instance, in \cite{fabe-etal-2007-jcss} an extension of the Magic Set method was discussed for the
class of unstratified logic programs (without disjunction). In \cite{behr-2003} a technique for the class of
\emph{soft-stratifiable} programs was given. Finally, in \cite{grec-2003} the first variant of the technique for disjunctive programs ($\SMS$) was described.

In this paper, we have elaborated on the issues addressed in \cite{grec-99,grec-2003}. Our approach is similar
to $\SMS$, but differs in several respects:

\vspace{-2mm}\begin{itemize}
\item $\DMS$ is a dynamic optimization of query answering, in the sense that
in addition to the optimization of the
grounding process (which is the only optimization performed by $\SMS$),
$\DMS$ can drive the model
generation phase by dynamically disabling parts of the program that become
irrelevant in the considered partial interpretations.

\item $\DMS$ has a strong relationship with unfounded sets, allowing for a clean
application to disjunctive $\dat$ programs also in presence of stratified negation.

\item $\DMS$ can be further improved by performing a subsequent subsumption check.

\item $\DMS$ is integrated into the \dlv system \cite{leon-etal-2002-dlv},
profitably exploiting the \dlv internal data-structures and the ability of controlling the grounding module.
\end{itemize}

We have conducted experiments on several benchmarks, many of which
taken from the literature. The results of our experimentation evidence that our
implementation outperforms $\SMS$ in general, often by an exponential factor.
This is mainly due to the optimization of the model generation phase, which is
specific to our Magic Set technique.
In addition, we have conducted further experiments on a real
application scenario, which show that Magic Sets can play a crucial role in optimizing consistent query
answering over inconsistent databases. Importantly, other authors have already recognized the benefits of our
optimization strategies with respect to this very important application domain \cite{CaBe07}, thereby confirming the
validity and the robustness of the work discussed in this paper.

We conclude by observing that it has been noted in the literature (e.g., in \cite{kemp-etal-95}) that in the
non-disjunctive case \emph{memoing} techniques lead to similar computations as evaluations after Magic Set
transformations. Also in the disjunctive case such techniques have been proposed (e.g., Hyper Tableaux
\cite{baum-etal-96}), for which similar relations might hold. While \cite{kemp-etal-95} has already evidenced
that an advantage of Magic Sets over such methods is that they may be more easily combined with other
optimization techniques, we believe that achieving a deeper comprehension of the relationships among these
techniques constitutes an interesting avenue for further research.

Another issue that we leave for future work is to study the impact of
changing some parameters of the \DMS{} method, in particular the
impact of different SIPSes.

\bibliographystyle{plain}

\bibliography{bibtex,magic,main-string,main-bib}

\clearpage

\appendix

\section{Queries on the INFOMIX Demo Scenario}

INFOMIX is a project that was funded by the European Commission in its Information Society Technologies track of the Sixth Framework Programme for providing an advanced
system for information integration.
A detailed description of the project, including references in the literature,
can be found at \url{https://www.mat.unical.it/infomix/}.
Five typical queries of the INFOMIX demo scenario have been considered for
assessing Dynamic Magic Sets.
The full encodings of the tested queries are reported in
Figures~\ref{fig:INFOMIX:queries123}--\ref{fig:INFOMIX:queries45}.
Note that the encodings include the transformation described in
Section~\ref{sec:dataintegration}, and that underlined predicates denote source
relations.

\def\stINFOMIX{\vspace{-.3cm}\scriptstyle}
\def\hrINFOMIX{\hrule\vspace{-.5cm}}
\begin{figure}[b]
\begin{dlvcode}
\stINFOMIX\tt course_\D(X_1,X_2)\ :-\ \underline{esame}(\_,X_1,X_2,\_).\\
\stINFOMIX\tt course_\D(X_1,X_2)\ :-\ \underline{esame\_diploma}(X_1,X_2).\\
\stINFOMIX\tt exam\_record_\D(X_1,X_2,Z,W,X_4,X_5,Y)\ :-\ \underline{affidamenti\_ing\_informatica}(X_2,X_3,Y),\\
\stINFOMIX\tt \quad\quad\quad \underline{dati\_esami}(X_1,\_,X_2,X_5,X_4,\_,Y),\ \underline{dati\_professori}(X_3,Z,W).\\
\stINFOMIX\tt exam\_record_{out}(X_1,X_2,X_3,X_4,Y_5,Y_6,Y_7)\ \Or\ exam\_record_{out}(X_1,X_2,X_3,X_4,Z_5,Z_6,Z_7)\ :-\ \\
\stINFOMIX\tt \quad\quad\quad exam\_record_\D(X_1,X_2,X_3,X_4,Y_5,Y_6,Y_7),\ exam\_record_\D(X_1,X_2,X_3,X_4,Z_5,Z_6,Z_7),\ Y_5 \neq Z_5.\\
\stINFOMIX\tt exam\_record_{out}(X_1,X_2,X_3,X_4,Y_5,Y_6,Y_7)\ \Or\ exam\_record_{out}(X_1,X_2,X_3,X_4,Z_5,Z_6,Z_7)\ :-\ \\
\stINFOMIX\tt \quad\quad\quad exam\_record_\D(X_1,X_2,X_3,X_4,Y_5,Y_6,Y_7),\ exam\_record_\D(X_1,X_2,X_3,X_4,Z_5,Z_6,Z_7),\ Y_6 \neq Z_6.\\
\stINFOMIX\tt exam\_record_{out}(X_1,X_2,X_3,X_4,Y_5,Y_6,Y_7)\ \Or\ exam\_record_{out}(X_1,X_2,X_3,X_4,Z_5,Z_6,Z_7)\ :-\ \\
\stINFOMIX\tt \quad\quad\quad exam\_record_\D(X_1,X_2,X_3,X_4,Y_5,Y_6,Y_7),\ exam\_record_\D(X_1,X_2,X_3,X_4,Z_5,Z_6,Z_7),\ Y_7 \neq Z_7.\\
\stINFOMIX\tt course(X_1,X_2)\ :-\ course_{\D}(X_1,X_2),\ \naf~course_{out}(X_1,X_2).\\
\stINFOMIX\tt exam\_record(X_1,X_2,X_3,X_4,X_5,X_6,X_7)\ :-\ exam\_record_\D(X_1,X_2,X_3,X_4,X_5,X_6,X_7),\\
\stINFOMIX\tt \quad\quad\quad \naf~exam\_record_{out}(X_1,X_2,X_3,X_4,X_5,X_6,X_7).\\
\stINFOMIX\tt query_1(CD)\ :-\ course(C,CD),\ exam\_record(``09089903",C,\_,\_,\_,\_,\_).\\
\stINFOMIX\tt query_1(CD)?
\end{dlvcode}
\hrINFOMIX
\begin{dlvcode}
\stINFOMIX\tt student_\D(X_1,X_2,X_3,X_4,X_5,X_6,X_7)\ :-\ \underline{diploma\_maturita}(Y,X_7),\\
\stINFOMIX\tt \quad\quad\quad \underline{studente}(X_1,X_3,X_2,\_,\_,\_,\_,\_,\_,\_,\_,\_,X_6,X_5,\_,\_,X_4,\_,\_,\_,\_,Y,\_).\\
\stINFOMIX\tt student(X_1,X_2,X_3,X_4,X_5,X_6,X_7)\ :-\ student_\D(X_1,X_2,X_3,X_4,X_5,X_6,X_7),\\
\stINFOMIX\tt \quad\quad\quad  \naf~student_{out}(X_1,X_2,X_3,X_4,X_5,X_6,X_7).\\
\stINFOMIX\tt query_2(SFN,SLN,COR,ADD,TEL,HSS)\ :-\ student(``09089903",SFN,SLN,COR,ADD,TEL,HSS).\\
\stINFOMIX\tt query_2(SFN,SLN,COR,ADD,TEL,HSS)?
\end{dlvcode}
\hrINFOMIX
\begin{dlvcode}
\stINFOMIX\tt student_\D(X_1,X_2,X_3,X_4,X_5,X_6,X_7)\ :-\ \underline{diploma\_maturita}(Y,X_7),\\
\stINFOMIX\tt \quad\quad\quad \underline{studente}(X_1,X_3,X_2,\_,\_,\_,\_,\_,\_,\_,\_,\_,X_6,X_5,\_,\_,X_4,\_,\_,\_,\_,Y,\_).\\
\stINFOMIX\tt student\_course\_plan_\D(X_1,X_2,X_3,X_4,X_5)\ :-\ \underline{orientamento}(Y_1,X_3),\\
\stINFOMIX\tt \quad\quad\quad \underline{piano\_studi}(X_1,X_2,Y_1,X_4,Y_2,\_,\_,\_,\_,\_),\ \underline{stato}(Y_2,X_5).\\
\stINFOMIX\tt student(X_1,X_2,X_3,X_4,X_5,X_6,X_7)\ :-\ student_\D(X_1,X_2,X_3,X_4,X_5,X_6,X_7),\\
\stINFOMIX\tt \quad\quad\quad \naf~student_{out}(X_1,X_2,X_3,X_4,X_5,X_6,X_7).\\
\stINFOMIX\tt student\_course\_plan(X_1,X_2,X_3,X_4,X_5)\ :-\ student\_course\_plan_\D(X_1,X_2,X_3,X_4,X_5),\\
\stINFOMIX\tt \quad\quad\quad \naf~student\_course\_plan_{out}(X_1,X_2,X_3,X_4,X_5).\\
\stINFOMIX\tt query_3(SID,SLN,R)\ :-\ student(SID,``ZNEPB",SLN,\_,\_,\_,\_),\\
\stINFOMIX\tt \quad\quad\quad student\_course\_plan(\_,SID,\_,R,``APPROVATO\ SENZA\ MODIFICHE").\\
\stINFOMIX\tt query_3(SID,SLN,R)?
\end{dlvcode}
\caption{INFOMIX Queries 1--3}\label{fig:INFOMIX:queries123}
\end{figure}

\begin{figure}
\begin{dlvcode}
\stINFOMIX\tt student_\D(X_1,X_2,X_3,X_4,X_5,X_6,X_7)\ :-\ \underline{diploma\_maturita}(Y,X_7),\\
\stINFOMIX\tt \quad\quad\quad \underline{studente}(X_1,X_3,X_2,\_,\_,\_,\_,\_,\_,\_,\_,\_,X_6,X_5,\_,\_,X_4,\_,\_,\_,\_,Y,\_).\\
\stINFOMIX\tt course_\D(X_1,X_2)\ :-\ \underline{esame}(\_,X_1,X_2,\_).\\
\stINFOMIX\tt course_\D(X_1,X_2)\ :-\ \underline{esame\_diploma}(X_1,X_2).\\
\stINFOMIX\tt student\_course\_plan_\D(X_1,X_2,X_3,X_4,X_5)\ :-\ \underline{orientamento}(Y_1,X_3),\\
\stINFOMIX\tt \quad\quad\quad \underline{piano\_studi}(X_1,X_2,Y1,X_4,Y_2,\_,\_,\_,\_,\_),\ \underline{stato}(Y_2,X_5).\\
\stINFOMIX\tt plan\_data_\D(X_1,X_2,X_3)\ :-\ \underline{dati\_piano\_studi}(X_1,X_2,\_),\\
\stINFOMIX\tt \quad\quad\quad \underline{esame\_ingegneria}(X_2,Y_3,Y_2,\_),\ \underline{tipo\_esame}(Y_2,X_3).\\
\stINFOMIX\tt student(X_1,X_2,X_3,X_4,X_5,X_6,X_7)\ :-\ student_\D(X_1,X_2,X_3,X_4,X_5,X_6,X_7),\\
\stINFOMIX\tt \quad\quad\quad \naf~student_{out}(X_1,X_2,X_3,X_4,X_5,X_6,X_7).\\
\stINFOMIX\tt student\_course\_plan(X_1,X_2,X_3,X_4,X_5)\ :-\ student\_course\_plan_\D(X_1,X_2,X_3,X_4,X_5)\\
\stINFOMIX\tt \quad\quad\quad \naf~student\_course\_plan_{out}(X_1,X_2,X_3,X_4,X_5).\\
\stINFOMIX\tt plan\_data(X_1,X_2,X_3)\ :-\ plan\_data_\D(X_1,X_2,X_3),\ \naf~plan\_data_{out}(X_1,X_2,X_3).\\
\stINFOMIX\tt course(X_1,X_2)\ :-\ course_\D(X_1,X_2),\ \naf~course_{out}(X_1,X_2).\\
\stINFOMIX\tt query_4(F,S)\ :-\ course(CID,``RETI LOGICHE"),\ plan\_data(SCID,CID,\_),\\
\stINFOMIX\tt \quad\quad\quad student(SID,F,S,``ROMA",\_,\_,\_),\ student\_course\_plan(SCID,SID,\_,\_,\_).\\
\stINFOMIX\tt query_4(F,S)?
\end{dlvcode}
\hrINFOMIX
\begin{dlvcode}
\stINFOMIX\tt course_\D(X_1,X_2)\ :-\ \underline{esame}(\_,X_1,X_2,\_).\\
\stINFOMIX\tt course_\D(X_1,X_2)\ :-\ \underline{esame\_diploma}(X_1,X_2).\\
\stINFOMIX\tt student\_course\_plan_\D(X_1,X_2,X_3,X_4,X_5)\ :-\ \underline{orientamento}(Y_1,X_3),\\
\stINFOMIX\tt \quad\quad\quad \underline{piano\_studi}(X_1,X_2,Y_1,X_4,Y_2,\_,\_,\_,\_,\_),\ \underline{stato}(Y_2,X_5).\\
\stINFOMIX\tt plan\_data_\D(X_1,X_2,X_3)\ :-\ \underline{dati\_piano\_studi}(X_1,X_2,\_),\\
\stINFOMIX\tt \quad\quad\quad \underline{esame\_ingegneria}(X_2,Y_3,Y_2,\_),\ \underline{tipo\_esame}(Y_2,X_3).\\
\stINFOMIX\tt student\_course\_plan(X_1,X_2,X_3,X_4,X_5)\ :-\ student\_course\_plan_\D(X_1,X_2,X_3,X_4,X_5),\\
\stINFOMIX\tt \quad\quad\quad \naf~student\_course\_plan_{out}(X_1,X_2,X_3,X_4,X_5).\\
\stINFOMIX\tt plan\_data(X_1,X_2,X_3)\ :-\ plan\_data_\D(X_1,X_2,X_3),\ \naf~plan\_data_{out}(X_1,X_2,X_3).\\
\stINFOMIX\tt course(X_1,X_2)\ :-\ course_\D(X_1,X_2),\ \naf~course_{out}(X_1,X_2).\\
\stINFOMIX\tt query_5(D)\ :-\ course(E,D),\ plan\_data(C,E,\_),\ student\_course\_plan(C,``09089903",\_,\_,\_).\\
\stINFOMIX\tt query_5(D)?
\end{dlvcode}
\caption{INFOMIX Queries 4--5}\label{fig:INFOMIX:queries45}
\end{figure}

\end{document}

